\documentclass[a4paper,11pt]{article}

\usepackage{amsmath,amsthm,amssymb,amsfonts}
\usepackage{bbm}
\usepackage{dsfont}
\usepackage[ruled,vlined]{algorithm2e}
\usepackage[colorlinks,citecolor=blue]{hyperref}
\usepackage{nicefrac}
\usepackage{xcolor}
\usepackage{graphicx}
\usepackage{caption}
\usepackage{subcaption}
\usepackage{ragged2e}
\usepackage{setspace}
\usepackage{enumitem}
\usepackage{tabularx}
\usepackage{booktabs}

\makeatletter
\newcommand{\shorteq}{%
  \settowidth{\@tempdima}{-}
  \resizebox{10pt}{\height}{ = }%
}
\makeatother

\DeclareMathSymbol{\shortminus}{\mathbin}{AMSa}{"39}

\SetKwInput{KwInput}{Input}
\SetKwInput{KwOutput}{Output}

\newtheorem{theorem}{Theorem}[section]
\newtheorem{corollary}{Corollary}[section]
\newtheorem{lemma}{Lemma}[section]

\newtheorem{proposition}{Proposition}[section]
\theoremstyle{definition}
\newtheorem{definition}{Definition}[section]

\theoremstyle{remark}

\newboolean{showcomments}
\setboolean{showcomments}{true}
\ifthenelse{\boolean{showcomments}}
{ \newcommand{\mynote}[3]{
		\fbox{\bfseries\sffamily\scriptsize#1}
		{\small$\blacktriangleright$\textsf{\emph{\color{#3}{#2}}}$\blacktriangleleft$}}
	\newcommand{\zzz}[1]{{\setlength{\fboxsep}{2pt}\fcolorbox{black}{yellow}{\textsf{\emph{#1}}}}\xspace}}
{ \newcommand{\mynote}[3]{}
	\newcommand{\zzz}[1]{}}

\definecolor{airforceblue}{rgb}{0.36, 0.54, 0.66}
\definecolor{britishracinggreen}{rgb}{0.0, 0.26, 0.15}

\usepackage[round]{natbib}


\setlength{\hoffset}{-18pt}         
\setlength{\oddsidemargin}{0pt} 
\setlength{\evensidemargin}{8pt} 
\setlength{\marginparwidth}{54pt} 
\setlength{\textwidth}{484pt} 
\setlength{\voffset}{-18pt} 
\setlength{\marginparsep}{7pt} 
\setlength{\topmargin}{0pt} 
\setlength{\headheight}{13pt} 
\setlength{\headsep}{10pt} 
\setlength{\footskip}{37pt} 
\setlength{\textheight}{660pt} 
\setlength {\marginparwidth }{2cm}

\title{Fairness Meets Privacy: Integrating Differential Privacy and Demographic Parity in Multi-class Classification}

\author{ Lilian Say$^{(1)}$, Christophe Denis$^{(2)}$, Rafael Pinot$^{(1)}$ }

\begin{document}

\date{}
\maketitle
\begin{center}
$(1)$ Sorbonne Université, LPSM, UMR 8001\\
${(2)}$ Université Paris 1 Panthéon-Sorbonne, SAMM
\end{center}

\begin{abstract}
The increasing use of machine learning in sensitive applications demands algorithms that simultaneously preserve data privacy and ensure fairness across potentially sensitive sub-populations. While privacy and fairness have each been extensively studied, their joint treatment remains poorly understood. Existing research often frames them as conflicting objectives, with multiple studies suggesting that strong privacy notions such as differential privacy inevitably compromise fairness. In this work, we challenge that perspective by showing that differential privacy can be integrated into a fairness-enhancing pipeline with minimal impact on fairness guarantees. We design a post-processing algorithm, called \textsc{DP2DP}, that enforces both demographic parity and differential privacy. Our analysis reveals that our algorithm converges towards its demographic parity objective at essentially the same rate (up logarithmic factor) as the best non-private methods from the literature. Experiments on both synthetic and real datasets confirm our theoretical results, showing that the proposed algorithm achieves state-of-the-art accuracy/fairness/privacy trade-offs.
    
\end{abstract}

{\bf Keywords.} {Multi-class classification, Fairness, Privacy, Demographic parity.}

\vspace{-5pt}
\section{Introduction}
\label{sec:intro}


 Machine learning is nowadays at the core of many data-oriented high-stakes applications such as medicine, finance, heavy industry, or recommendation systems on the internet. However, many studies over the last decade have identified major shortcomings of machine learning models, especially in terms of {\it model bias}~\citep{fairnessAngwin} and {\it violation of data privacy}~\citep{shokri2017membership}. These issues raise questions about the legal liability of model providers and lead practitioners to reevaluate the trust they place in such systems. They call for algorithms that simultaneously meet strong privacy and fairness requirements.
 
\paragraph*{Fairness.} On the one hand, learning algorithms often inherit bias from training data, leading to undesirable effects on future decisions. In particular, algorithms optimized solely for accuracy on historical data can conflict with modern ethical principles, as they tend to perpetuate existing societal biases. To address this, {\it algorithmic fairness} has emerged as a framework for mitigating bias and limiting the influence of sensitive attributes (e.g., gender or ethnicity) on model predictions~\citep{Calders_2009, Hardt_2016, Calmon_2017, Agarwal_2018, Chiappa_2020, Yang_2023}.  Over the past few years, several notions of fairness have been investigated in the classification setting, including \emph{demographic parity}~\citep{Calders_2009}, \emph{equalized odds}, and \emph{equal opportunity}~\citep{Hardt_2016}. Similarly, various algorithmic solution have been proposed to enforce one (or more) of them. These methods are typically grouped into three categories: {\it pre}-processing~\citep{Calmon_2017}, {\it in}-processing~\citep{Agarwal_2018}, and {\it post}-processing~\citep{Hardt_2016}. The effectiveness of each approach depends not only on the chosen fairness criterion but also on the application context, and all methods inevitably involve a trade-off between fairness and model accuracy.

\paragraph*{Privacy.} Besides, protecting the confidentiality of training data has also become a major concern in machine learning, particularly as models are increasingly trained on sensitive information. Accordingly, several notions have been introduced to formalize privacy guarantees, among which {\it differential privacy} has emerged as the dominant standard~\citep{dwork2006calibrating, dwork2014algo}. In practice, differential privacy can be implemented through different strategies. Early approaches perturbed either the output of the learning algorithm or its objective function~\citep{chaudhuri2011dperm}, while more recent methods rely on gradient perturbation during training~\citep{abadi2016deep, altschuler2023}. These techniques provide strong theoretical guarantees, but often degrade model performance, since injecting noise inevitably reduces accuracy. This persistent trade-off underscores the challenge of designing algorithms that are simultaneously private and accurate.

\paragraph*{Integrating Fairness \& Privacy.} Over the past decade, both the challenges of reducing model bias and protecting data privacy in machine learning have been extensively studied. However, integrating these objectives into a single framework and designing algorithms that are simultaneously fair, private, and accurate remains poorly understood, as recently surveyed in~\citep{ijcai2022p766,ferry2023soktamingtriangle}. Most existing work focused on empirically observing that privacy and fairness can sometimes conflict~\citep{bagdasaryan2019dpimpactacc, 10.1145/3411501.3419419, Uniyal2021DPSGDVP, 9581219}. These observations, however, lacked theoretical justification, leaving open questions about why and when such conflicts occur. Along similar lines, some theoretical studies have suggested that fairness and privacy can be incompatible, showing that enforcing one constraint may prevent the other from being satisfied or lead to trivial accuracy~\citep{cummings2019compa, sanyal2022how}. Yet, these results often rely on unrealistic assumptions that seldom hold in practice. In short, although much of the literature suggests that differential privacy and algorithmic fairness may be at odds, a detailed analysis of how to integrate them remains lacking.

\paragraph*{Our contributions.} In this work, we challenge viewpoint that differential privacy and algorithmic fairness are inherently at odds. We show that privacy can be integrated into a fairness-enhancing pipeline with minimal impact on fairness guarantees. Building on recent advances in fair multi-class classification~\citep{Denis_Elie_Hebiri_Hu_2024}, we design a post-processing algorithm, called \textsc{DP2DP}, that enforces both demographic parity and differential privacy under very mild technical assumptions on the problem setting. Our analysis shows that the algorithm converges to its demographic parity objective at a rate of $\mathcal{O}\left( \nicefrac{\log(N)}{\sqrt{N}}\right)$, where $N$ is the number of points used in the post-processing procedure. This rate matches, up to a logarithmic factor, the best non-private methods in the literature. Experiments on both synthetic and real datasets confirm our theoretical results, demonstrating that the proposed algorithm achieves state-of-the-art trade-offs between accuracy, fairness, and privacy. We emphasize that our approach is not a mere adaptation of existing results. In~\citep{Denis_Elie_Hebiri_Hu_2024}, the post-processing step involves minimizing a non-smooth objective, and while a smooth surrogate is used for practical purposes, no theoretical guarantees are provided for it. In contrast, to integrate differential privacy, we adopt a smooth surrogate (specifically, a parametrized log-sum-exp) and our fairness guarantees are established for this proxy. Consequently, our results extend existing results in a non-trivial way, demonstrating that approximate demographic parity can be achieved under both smoothing and privacy constraints.






\paragraph*{Closely Related Work.} Very few prior works have theoretically addressed the integration of fairness and privacy. \cite{jagielski2019dpfl} was the first to propose a post-processing algorithm combining these notions in binary classification, targeting the Equalized Odds fairness goal. However, their scheme only guarantees differential privacy w.r.t. the protected attributes, and its extension to the multi-class setting is unclear. Later, \cite{lowy2023stochastic} introduced an in-processing approach to merge fairness and differential privacy, but as in \citep{jagielski2019dpfl}, only the sensitive attribute is privatized, and no theoretical fairness guarantees are provided 
More recently, \cite{mangold2023dfimpactfairness} presented one of the most promising theoretical results for multi-class classification, showing that differential privacy can have a bounded impact on algorithmic fairness. Yet, this work still lacks explicit fairness guarantees, since it only bounds the fairness loss of the empirical risk minimizer, which has no inherent fairness properties. Finally, \cite{tran2025fairdp} proposed very recently a novel scheme that combines differential privacy with a new fairness criterion based on bounding the distribution of features across sensitive groups. However this approach does not provide a closed-form analysis of privacy, and its fairness notion is hard to connect to established fairness metrics



\vspace{-5pt}
\section{Preliminary on Fairness \& Privacy}\label{sec:background}

     This section is devoted to the introduction of the general framework, as well as  the definition of  approximate demographic parity constraint, and differential privacy. 
    Specifically, we consider the fair multi-class classification setting where every observation is in the form of a random tuple $(X,S,Y)$ with unknown distribution $\mathbb{P}$. The input random features $(X,S)$ respectively take their values in $\mathbb{R}^d$ and $\mathcal{S}$, where $\mathcal{S}=\{-1,1\}$ is called the set of sensitive attributes. The output $Y$ belongs to $[K]$, with $K$ the number of classes. Within this framework, a classifier $g$ is a function that maps $\mathbb{R}^d \times \mathcal{S}$ onto $[K]$, and its accuracy is evaluated through the misclassification risk $R(g) = \mathbb{P}\left(g(X) \neq Y\right)$.

    \paragraph*{Demographic parity constraint.} 
         As discussed in Section~\ref{sec:intro}, fairness can be formalized in many ways. In this paper, we focus on the notion of demographic parity first introduced in~\citep{dwork2012fairness}. This definition, which captures a form of group fairness, is designed around a notion of unfairness that measures the disparity in terms of prediction between the sensitive attributes. More formally, the unfairness of a classifier $g$, denoted $\mathcal{U}(g)$, is defined as 
        \begin{equation}\label{eq:SP}
            \mathcal{U}(g) \shorteq \max_{k \in [K]}\Big|\mathbb{P}_{1}\left[g(X,S) \shorteq k\right] \shortminus \mathbb{P}_{\shortminus 1}\left[g(X,S) \shorteq k\right] \Big|, 
        \end{equation} 
        where for any $s \in \mathcal{S}$, $\mathbb{P}_{s}$ denote the probability conditioned to $S=s$. Using the above, we can define the notion of $\rho$-demographic parity as follows.

        \begin{definition} Let $\rho \geq 0$ and $\mathcal{U}$ as defined in~(\ref{eq:SP}). A classifier $g$ is said to satisfy $\rho$-demographic parity if its unfairness is bounded above by $\rho$, i.e., $\mathcal{U}(g) \leq \rho.$
        \end{definition}

        To enforce demographic parity, we move from simply minimizing the misclassification risk to solving the constrained problem $\min \{ R(g) \mid \mathcal{U}(g) \leq \rho \}.$ As in standard classification, we assume access to labeled observations sampled i.i.d. from $\mathbb{P}$. However, a distinctive advantage of demographic parity is that its unfairness measure depends only on the features and sensitive attributes, not on the labels. Accordingly, unlike other fairness notions, this makes it possible to check or enforce demographic parity without labeled data, see, e.g.,~\citep{Chzhen_2020}. Hence, we assume access to a training dataset of $n$ labeled examples $D_n = \{ (x_i, s_i, y_i) \mid i \in [n] \}$ sampled i.i.d. from $\mathbb{P}$, together with an additional set of $N$ unlabeled examples $D_N = \{ (\tilde{x}_j, \tilde{s}_j) \mid j \in [N]\}$ sampled i.i.d. from $\mathbb{P}_{\mathcal{X}\times\mathcal{S}}$, the marginal of $\mathbb{P}$ on $\mathcal{X}\times\mathcal{S}$. The labeled data is used to train the model and optimize accuracy, while the unlabeled data serves to enforce demographic parity.

        \paragraph*{Differential Privacy. } Privacy in machine learning is most commonly formalized through the notion of differential privacy~\citep{dwork2006calibrating}. Intuitively, a learning algorithm is differentially private if its output does not change significantly when a single individual in the training set is replaced by another. At the dataset level, two datasets $D$ and $D'$ of the same size are said to be adjacent (denoted $D \approx D'$) if they differ in exactly one element. Based on this notion of adjacency, differential privacy is defined as follows.
        
        
       \begin{definition}
        Let $\varepsilon \geq 0$ and $\delta \in [0,1]$. Consider two arbitrary spaces $\mathcal{Z}$ and $\mathcal{V}$, and let $m \in \mathbb{N}$.  
        A randomized mechanism $\mathcal{M}$, that takes as input a dataset $D \in \mathcal{Z}^m$ and outputs a random variable with values in $\mathcal{V}$, is said to be \emph{$(\varepsilon,\delta)$-differentially private} if, for any two adjacent datasets $D \approx D' \in \mathcal{Z}^m$ and for all measurable subsets $V \subseteq \mathcal{V}$, we have
        \begin{equation*}
             \mathbb{P}[\mathcal{M}(D) \in V] 
             \leq e^{\varepsilon} \, \mathbb{P}[\mathcal{M}(D') \in V] + \delta .
        \end{equation*}
        \end{definition}

       Although widely adopted, $(\varepsilon,\delta)$-differential privacy can be challenging to analyze directly, especially for advanced mechanisms. To overcome this limitation, \emph{Rényi differential privacy} was introduced as a relaxation of differential privacy, based on the Rényi divergence~\citep{rdp}. The Rényi divergence of order $\alpha > 1$ between two probability distributions $P$ and $Q$ with probability densities $p$ and $q$ w.r.t. the Lebesgues measure $\lambda$ is defined as
        \begin{equation*}
            D_{\alpha}(P \,\|\, Q) = \frac{1}{\alpha - 1} \log \int_{\mathcal{X}} p(x)^\alpha q(x)^{1-\alpha} d \lambda(x) .
        \end{equation*}

      As $\alpha \to 1$, the Rényi divergence converges towards the Kullback-Leibler divergence. Conversely, as $\alpha \to \infty$, it approaches the maximum divergence, commonly used in the differential privacy literature~\citep{dwork2014algo}. Based on this definition, Rényi differential privacy can then be formalized as follows.

     \begin{definition}\label{def:rdp}
        Let $\alpha \geq 1$ and $\varepsilon \geq 0$. A randomized mechanism $\mathcal{M}$ satisfies $(\alpha, \varepsilon)$-Rényi differential privacy if, for all neighboring datasets $D \approx D'$,
        \begin{equation*}
            D_{\alpha}\left( \, \text{Law}\left(\mathcal{M}(D) \right) \, \Vert \, \text{Law}\left(\mathcal{M}(D')\right) \, \right) \leq \varepsilon, 
        \end{equation*}
         where $\text{Law}(\mathcal{M}(D))$ and $\text{Law}(\mathcal{M}(D'))$ are the reference probability distributions of $\mathcal{M}(D)$ and $\mathcal{M}(D')$. 
        \end{definition}

      When designing a differentially private mechanism to estimate a function $f : \mathcal{Z}^n \to \mathbb{R}^p$, it is essential to quantify how much the output of $f$ can change when a single data point in the dataset is modified. This sensitivity is captured by the $\ell_2$ norm of the difference between outputs on adjacent datasets, i.e.,  
    \begin{equation}
    \label{eq:sensitivity}
         \Delta(f) = \max_{D \approx D'} \, \| f(D) - f(D') \|_2,
    \end{equation}
    Once the sensitivity is known, $f(D)$ can be released privately using the Gaussian mechanism, which adds Gaussian noise to any output of $f$. More precisely, for any $D \in \mathcal{Z}^n$, the mechanism is defined as  
    $\mathcal{M}(D) = f(D) + Z$ with $Z \sim \mathcal{N}(0, \sigma^2 I_p)$, and with $I_p$ denoting the identity matrix of dimension $p$. Then, it can be shown that $\mathcal{M}$ satisfies $(\alpha, \tfrac{\alpha \Delta(f)^2}{2\sigma^2})$-Rényi differential privacy for any $\alpha \geq 1$~\citep{rdp}. The Gaussian mechanism is a fundamental building block of privacy-preserving algorithms, and its usefulness extends well beyond one-shot releases. In practice, it is often combined with more advanced techniques. For example, together with the privacy amplification by subsampling phenomenon~\cite{balle2018subsamp}, it enables the design of iterative private learning algorithms such as differentially private stochastic gradient descent, a.k.a., DP-SGD~\citep{abadi2016deep,altschuler2023}.
    In this context, the Rényi differential privacy framework plays a crucial role, as it allows for tight privacy accounting in iterative settings like DP-SGD. Moreover, guarantees expressed under this definition be seamlessly converted back into standard $(\varepsilon,\delta)$-differential privacy guarantees, making it one of the gold-standard approaches for analyzing privacy in modern training pipelines.



\section{DP2DP: Integrating Fairness \& Privacy at Minimal Cost}\label{sec:method}

    We propose a two-phase training pipeline that integrates demographic parity with differential privacy. In the first phase, we use the labeled training set to build a private classifier. At this stage, the classifier provides privacy guarantees w.r.t. the labeled data $D_n$ but does not yet satisfy the fairness. In the second phase, we introduce a privacy-preserving post-processing step, called \textsc{DP2DP}, which exploits the unlabeled data to enforce demographic parity while maintaining differential privacy w.r.t. the unlabeled data $D_N$. 

    

    \paragraph*{Phase 1.}  The first phase consists of obtaining a differentially private probit-based classifier that estimates the conditional probabilities 
    \[
    \mathbb{P}\left [ Y=k \mid (X,S)=(x,s) \right], 
    \]  
    for every $k \in [K]$ and $(x,s) \in \mathcal{X} \times \mathcal{S}$.
    A natural approach to do so is to train the model from scratch on the labeled dataset $D_n$ using a state-of-the-art differentially private methods such as DP-SGD. This ensures that the estimated probabilities, denoted by $\left(\bar{p}_k(x,s)\right)_{k \in [K]}$, are computed in a privacy-preserving manner. Since no modifications are made to the training algorithm at this stage, the privacy guarantees follow directly from existing DP-SGD accounting methods~\citep{altschuler2023}.  

    Alternatively, if the training process cannot be modified, or if an already trained model is available, differential privacy can still be enforced through output perturbation~\citep{chaudhuri2011dperm}. In this case, a typical strategy is to apply the Gaussian mechanism to the estimated probabilities at the end of training. As explained in Section~\ref{sec:background}, the strength of the resulting privacy guarantee depends on the sensitivity of the computed probabilities to the modification of a single point in the dataset. Without additional information about the probit-based model, one may need to adopt a pessimistic estimate of the sensitivity, which limits the achievable privacy guarantees.  However, if additional structural assumptions are available, tighter guarantees can be obtained. For instance, when the classifier is linear and the loss function is strongly convex and Lipschitz in the parameters, the sensitivity of the estimated probabilities is of order $\mathcal{O}(1/n)$~\citep{chaudhuri2011dperm}. Consequently, the amount of noise required by the Gaussian mechanism diminishes as $n$ increases, leading to tighter privacy guarantees. 



    In both cases, the outcome of Phase 1 is a set  of differentially private conditional probability estimators $(\bar{p}_k)_{k \in [K]}$ w.r.t. $D_n$, which serve as inputs for our fairness enhancing mechanism in Phase 2.



    \paragraph*{Phase 2.} We now describe our post-processing procedure, called \textsc{DP2DP}, which leverages the unlabeled dataset $D_N$ to enforce demographic parity while preserving the privacy guarantees established in Phase~1. Our objective is to recalibrate the fairness of the probit-based classifier so that it satisfies $\rho$-demographic parity via Lagrangian regularization. Concretely, we build a classifier of the form
\begin{equation*}
    \hat{g}_{\rho}(x, s) = \arg\max_{k \in [K]} \bar{\pi}_s \bar{p}_k(x, s) - s(\bar{\lambda}^{(1)}_k - \bar{\lambda}^{(2)}_k),
\end{equation*}
    where $\bar{\pi}_s$ denotes a privacy-preserving estimate of the frequency of each sensitive attribute in the unlabeled dataset $D_N$, and the Lagrange multipliers $(\bar{\lambda}^{(1)}, \bar{\lambda}^{(2)}) \in [0,C_\lambda]^{2K}$, for a fixed parameter $C_{\lambda} \in \mathbb{R}_+^*$, are obtained as the (privatized) solution to the Lagrangian relaxation of the empirical risk minimization problem under the $\rho$-demographic parity constraint. At a high level, \textsc{DP2DP} operates in two steps: (i) it privatizes the group proportions, enabling privacy-preserving estimation of fairness constraints, and (ii) it optimizes the corresponding Lagrange multipliers through a smoothed and privatized objective. 


    \textbf{i)} First, observe that the empirical frequencies, defined as $\hat{\pi}_s = \tfrac{1}{N}\sum_{i=1}^N \mathds{1}(\tilde{s}_i = s),$ for all $s \in \mathcal{S}$ depend directly on the unlabeled dataset $D_N$. To prevent any information leakage, we privatize these quantities using the Gaussian mechanism, i.e., we set 
    \begin{equation}\label{eq:pi}
      \bar{\pi}_s = \hat{\pi}_s + Z, \text{ with } Z \sim \mathcal{N}(0, \sigma_\pi^2), 
    \end{equation}
    where $\sigma_\pi^2$ is the amount of noise used to privatize the probabilities. Then, all subsequent computations rely exclusively on the privatized proportions $\bar{\pi}_s$.

    \textbf{ii)} After obtaining the privatized proportions $\bar{\pi}_s$, the next step is to compute the Lagrange multipliers $(\bar{\lambda}^{(1)}, \bar{\lambda}^{(2)})$ by solving an empirical contrast problem. Carrying out this optimization in a privacy-preserving way is non-trivial, mainly because the objective is non-smooth: it involves a sum of $\max$ operators, which prevent the direct use of gradient-based methods. To address this difficulty, we adopt the smoothing technique of~\cite{nesterov2005smooth}, where non-differentiable convex functions are approximated by smooth surrogates defined via so called strongly convex \emph{prox-functions}. In particular, choosing the negative entropy $\Omega(z) = \sum_{k=1}^K z_k \log(z_k)$ as prox-function yields the well-known log-sum-exp ($\mathrm{LSE}_\beta$) smoothing, for $\beta > 0$, defined as
    \begin{equation*}
    \mathrm{LSE}_\beta(z) = \beta \log \left(\sum_{k=1}^K \exp(z_k/\beta)\right), \forall z \in \mathbb{R}^K.
    \end{equation*}
    This approximation is particularly useful because the negative entropy is $1$-strongly convex w.r.t. $\|\cdot\|_1$, which implies that $\mathrm{LSE}_\beta$ is $1/\beta$-smooth w.r.t. $\|\cdot\|_\infty$ \citep{niculae_regularized_2017}. Moreover, its gradient admits a simple closed form see Section~\ref{sec:guarantees} for more details. Plugging this smoothing into the empirical risk minimization yields the surrogate objective  
    \begin{equation}\label{eq:H-beta}
        \hat{H}_\beta(\lambda^{(1)}, \lambda^{(2)}) 
        = \frac{1}{|\mathcal{S}|} \sum_{s \in \mathcal{S}} \frac{1}{N_s} \sum_{x \in D_{N \mid \mathcal{X}, s}} \hat{h}(\lambda; x ,s),
    \end{equation}
    where for each $s \in \mathcal{S}$, $D_{N \mid \mathcal{X}, s} = \{x \in \mathcal{X} \mid (x,s) \in D_N \}$ is the set of non-sensitive features of $D_N$ whose sensitive attribute is $s$, $N_s$ is the size of $D_{N \mid \mathcal{X}, s}$. Furthermore, the per-sample loss $\hat{h}$ is defined, for any $(x,s) \in \mathcal{X} \times \mathcal{S}$ and $\lambda \in [0,C_{\lambda}]^{2K}$, as  
    \begin{equation}\label{eq:hhat}
        \hat{h}(\lambda; x,s) \shorteq |\mathcal{S}| \, \mathrm{LSE}_{\beta}\!\left(\ell^s(x; \lambda)\right) 
        + \rho \sum_{k=1}^K \lambda^{(1)}_k + \lambda^{(2)}_k,
    \end{equation}
    In the above $\ell^s(x;\lambda)$ simply abbreviates the vector filled with corrected probabilities estimations, i.e., $\ell^s(x;\lambda)= ( \bar{\pi}_s \bar{p}_k(x, s) - s(\lambda^{(1)}_k - \lambda^{(2)}_k) )_{k \in [K]}$. Using this surrogate objective $\hat{H}$, the multipliers are computed through a differentially private stochastic gradient scheme. At each iteration, a mini-batch $\{(\bar{x}_1, \bar{s}_1), \dots, (\bar{x}_b, \bar{s}_b)\}$ of size $b \in [n]$ is drawn from $D_N$. Each element of this mini-batch $(\bar{x}_j, \bar{s}_j)$ is sampled by first sampling a group $ \bar{s}_j \in \mathcal{S}$ uniformly at random, and then sampling an example $\bar{x}_j$ from $D_{N \mid \mathcal{X},  \bar{s}_j}$, i.e., the set of non-sensitive features of $D_N$ whose sensitive attribute is $\bar{s}_j$. The stochastic gradient of the smoothed loss is averaged over the minibatch and perturbed with Gaussian noise, yielding a noisy update direction $\bar{u}_t$. The update rule is then
    \begin{equation*}
    \lambda^{t+1} \gets \Pi_{[0,C_\lambda]^{2K}} \big(\lambda^t - \eta_t \bar{u}_t\big), 
    \end{equation*}
    where $\eta_t$ is called the learning rate at step $t$, and $\Pi_\mathcal{K}$ is the Euclidean projection operator on a convex set $\mathcal{K}$. After $T$ iterations, the algorithm outputs the final private and fair classifier, using $\lambda^T$ as regularization. The entire post-processing method, called \textsc{DP2DP}, is summarized in Algorithm~\ref{algo:dp-fair} below. To improve readability, we color-code privacy related steps in \textcolor{purple}{purple} and fairness related computations in \textcolor{orange}{orange}. 
    

     \begin{algorithm}[ht]
        \caption{DP2DP algorithm}
        \label{algo:dp-fair}
        \setstretch{1.15}
        \begin{justify}
        \setstretch{1}
        \medskip
        \textbf{Input:} Let $D_N = \{(\tilde{x}_1, \tilde{s}_1), \ldots, (\tilde{x}_N, \tilde{s}_N)\}$ be the unlabeled dataset, and $\bar{p}_1, \dots, \bar{p}_K $ the probability estimators designed during Phase 1. Furthermore, let ${\color{orange} \rho \geq 0}$ be the target unfairness level, $\textcolor{orange}{\hat{h}}$ the associated per-sample loss as defined in~(\ref{eq:hhat}), and ${\color{purple} (\sigma_\pi, \sigma_{\textsc{sgd}})}$ be the privacy noises. Finally, let $\beta$ be the smoothing constant, $T$ be the total number of iterations,  $(\eta_t)_{t \in [T]}$ be the sequence of learning rates, $b \in [N]$ the batch size per iteration, and $C_\lambda >0$ the projection threshold.
        \end{justify}
        \textbf{i) Compute empirical frequencies}\\
        ~\For{$s \in \mathcal{S}$} {
            $\bar{\pi}_s \gets \frac{1}{N} \sum_{i =1}^N \mathds{1}\left(\tilde{s}_i=s \right) \textcolor{purple}{+ Z}$, \\
            with $\textcolor{purple}{Z \sim \mathcal{N}\left(0, \sigma_\pi^2\right) }$. 
        } \vspace{5pt}
        
        \textbf{ii) Compute Lagrange parameters}\\
        ~\For{$t \in [T]$}{
            Sample minibatch $\{(\bar{x}_j, \bar{s}_j)\}_{j=1}^{b}$ where $\bar{s}_j \sim \mathrm{Unif}(\mathcal{S})$ and $\bar{x}_j \sim \mathrm{Unif}( D_{N \mid \mathcal{X}, \bar{s}_j})$\\ \label{step:sample}
            \textbf{Compute gradients}\\
            For each $j \in [b]$, compute the update direction
            ${\color{orange} u_t(\bar{x}_j, \bar{s}_j) \gets \nabla_{\lambda^t}\hat{h}(\lambda^t; \bar{x}_j, \bar{s}_j)}$\\
            \textbf{Add noise}\\
            $\bar{u}_t \gets \frac{1}{b}\left(\sum_{j=1}^{b} u_t(\bar{x}_j, \bar{s}_j) \textcolor{purple}{ +  Z_t}\right)$, with $\textcolor{purple}{Z_t \sim \mathcal{N}(0, \sigma_{\textsc{sgd}}^2 I_{2K})}$\\
            \textbf{Descent}\\
            ${\color{orange} \lambda^{t+1} \gets \Pi_{[0, C_\lambda]^{2K}} \left(\lambda^t - \eta_t \bar{u}_t\right) }$
        }
        \medskip
        ~$\bar{\lambda}^{(1)} \gets \left(\lambda^{T}_1, \ldots, \lambda^{T}_K \right)$ ~ ; ~$\bar{\lambda}^{(2)} \gets \left(\lambda^{T}_{K+1}, \ldots, \lambda^{T}_{2K} \right)$ \medskip
        
        \KwOutput{
            $\hat{g}_\rho \gets \underset{k \in [K]}{\arg\max}\left\{\bar{\pi}_s \bar{p}_k - s\left(\bar{\lambda}^{(1)}_k - \bar{\lambda}^{(2)}_k\right)\right\}$
        }
        
    \end{algorithm}
\vspace{-5pt}

    

\section{Privacy guarantees}\label{sec:privacy}
    
We begin by analyzing the privacy cost of our method. The privacy guarantees of the probit computations in Phase~1 depend on the differentially private training procedure employed, and therefore directly follow from either~\cite{altschuler2023} or~\cite{chaudhuri2011dperm}. Importantly, Phases~1 and~2 are independent from a privacy perspective: Phase~1 only accesses samples from $D_n$, while Phase~2 can be viewed as a post-processing step that operates solely on $D_N$. Consequently, if Phase~1 satisfies $(\epsilon_1, \delta_1)$-differential privacy, then to obtain a global privacy guarantee it suffices to show that Phase~2 satisfies $(\epsilon_2, \delta_2)$-differential privacy. By parallel composition, the entire pipeline then enjoys $(\max\{\epsilon_1, \epsilon_2\}, \max\{\delta_1, \delta_2\})$-differential privacy. In this section, we establish the privacy guarantees of Phase~2 (i.e., for the \textsc{DP2DP} algorithm), and leave the complete composition to Section~\ref{sec:expe}.

\paragraph*{Privacy for DP2DP.} The privacy guarantees of \textsc{DP2DP} stem from two independent sources of randomness: i) the privatization of the group frequencies $(\hat{\pi}_s)_{s \in \mathcal{S}}$, and ii) the Gaussian perturbations introduced in the DP-SGD updates when optimizing the smoothed objective $\hat{H}_\beta$. These two components jointly determine the overall privacy cost of Phase~2. In Theorem~\ref{theorem:rdp-algo}, we establish a Rényi privacy bound that accounts for both contributions and thereby characterizes the privacy guarantees of \textsc{DP2DP}. The full proof of this theorem can be found in the Appendix.

        \begin{theorem}\label{theorem:rdp-algo} 
        Consider the {\rm DP2DP} scheme, as in Algorithm~\ref{algo:dp-fair}. If the step-size sequence is such that $\eta_t = \eta \leq 2\beta$ for all $t \in [T]$, then {\rm DP2DP} satisfies $(\alpha, \varepsilon)$-Rényi differential privacy for all $\alpha \geq 1$, where
            \begin{equation*}
                \varepsilon \leq \frac{\alpha}{2 N^2 \sigma_\pi^2} + \Psi\left(T, b ,N, \eta, \sigma_{{\rm SGD}} \right)
            \end{equation*}
            where $\Psi := \Psi\left(T, b ,N, \eta, \sigma_{\rm{SGD}} \right) $ is defined as
            \[ \Psi = { \min\left\{T Q , \min_{\underset{\sigma_1^2 + \sigma_2^2 = \sigma_{\rm{sgd}}^2}{\sigma_1, \sigma_2 > 0} } \min_{M \in [T-1]} M Q + \frac{\alpha 2K C_\lambda^2}{2 \eta^2 \sigma_1^2 M } \right\} }, \]
            $Q = S_\alpha\!\left(\frac{b}{N}, \frac{b \sigma_2}{4}\right)$, and for any $(q, \sigma) \in [0,1]\times \mathbb{R}_+$ we define $S_\alpha\!\left(q, \sigma \right)$ as the Rényi divergence of level $\alpha$ between a Gaussian distribution $\mathcal{N}(0, \sigma^2)$ and a mixture of Gaussian $(1-q)\mathcal{N}(0, \sigma^2) + q \mathcal{N}(1, \sigma^2)$.
        \end{theorem}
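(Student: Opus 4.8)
The plan is to reduce everything to (i) post-processing invariance, (ii) adaptive composition of Rényi DP, and then to bound the two noise sources separately. First I would note that the output classifier $\hat g_\rho$ is a deterministic function of the privatized frequencies $\bar\pi$, the last iterate $\lambda^T$, and the Phase-1 estimators $\bar p$; since $\bar p$ comes from $D_n$ it is constant under $D_N$-adjacency, so by post-processing invariance it suffices to bound the RDP of the pair $(\bar\pi,\lambda^T)$. Factoring its law as $P(\bar\pi\mid D)\,P(\lambda^T\mid\bar\pi,D)$ and using the adaptive composition theorem for Rényi DP, the cost splits as $\varepsilon\le\varepsilon_\pi+\varepsilon_{\mathrm{SGD}}$, where $\varepsilon_\pi$ is the RDP of releasing $\bar\pi$ and $\varepsilon_{\mathrm{SGD}}$ is a bound, uniform over the fixed value of $\bar\pi$, on the RDP of the descent loop.

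The frequency term is a one-shot Gaussian mechanism: replacing one element of $D_N$ perturbs the empirical-frequency map by $O(1/N)$ in $\ell_2$, so the bound of Section~\ref{sec:background} gives $\varepsilon_\pi\le\frac{\alpha}{2N^2\sigma_\pi^2}$, the first summand. For $\varepsilon_{\mathrm{SGD}}$ the first candidate $TQ$ follows from the subsampled-Gaussian viewpoint. The needed ingredient is a data-uniform gradient bound: up to the data-independent offset $\rho\mathbf 1$, $\nabla_\lambda\hat h(\lambda;x,s)$ equals $\mp s\,|\mathcal S|$ times the softmax $\nabla\mathrm{LSE}_\beta$, which lies in the simplex, so its variable part has $\ell_2$-norm $O(1)$ for all $\lambda\in[0,C_\lambda]^{2K}$ and all $\bar\pi$; hence no clipping is required and the per-step sensitivity is controlled. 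The two-stage sampling (uniform group, then uniform point within group) yields an effective per-point subsampling rate $q=b/N$, so each step is $(\alpha,S_\alpha(b/N,b\sigma_2/4))$-RDP and, putting all the noise into the per-step mechanism and composing over $T$ steps, $\varepsilon_{\mathrm{SGD}}\le TQ$.

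The second candidate, and the crux of the proof, is a privacy-amplification-by-iteration bound, which requires working directly with the last iterate $\lambda^T$ rather than the released trajectory. I would split the step noise as $\sigma_{\mathrm{SGD}}^2=\sigma_1^2+\sigma_2^2$, spending $\sigma_2$ on the local subsampled-Gaussian privacy of the last $M$ steps (cost $MQ$ by the previous paragraph) and using $\sigma_1$ to drive a contraction. The hypothesis $\eta\le2\beta$ is precisely what makes the gradient map $\lambda\mapsto\lambda-\eta\nabla\hat h$ non-expansive — $\hat h$ is convex and $1/\beta$-smooth since $\mathrm{LSE}_\beta$ is $1/\beta$-smooth and $\ell^s(\cdot;\lambda)$ is affine — and the Euclidean projection onto $[0,C_\lambda]^{2K}$ is non-expansive as well, so the iteration is a contractive noisy process. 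The shift-reduction lemma then diffuses the one-point perturbation over the $M$ trailing steps, bounding its residual divergence by $\frac{\alpha\,\mathrm{diam}([0,C_\lambda]^{2K})^2}{2\eta^2\sigma_1^2 M}=\frac{\alpha\,2KC_\lambda^2}{2\eta^2\sigma_1^2 M}$; optimizing over $(\sigma_1,\sigma_2)$ and $M$ and taking the better of the two regimes yields $\varepsilon_{\mathrm{SGD}}\le\Psi$. I expect the main obstacle to be the correct interleaving of subsampling and iteration amplification: since the differing point may enter at any of the $T$ steps, one must argue that only its last use inside the final $M$ steps is paid by composition while all earlier uses are absorbed by the contraction, and track the parameter-space rescaling by $\eta/b$ so that the shift budget lines up with $\sigma_1$. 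Summing $\varepsilon_\pi$ and $\varepsilon_{\mathrm{SGD}}\le\Psi$ then closes the argument.
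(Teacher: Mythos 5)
Your high-level decomposition coincides with the paper's: privatized frequencies via the Gaussian mechanism, a DP-SGD bound for the descent loop, and R\'enyi composition to add the two (the paper's Lemma~\ref{app:lemma:compo}); post-processing invariance is implicit there as well. The substantive difference is how the $\Psi$ term is obtained. The paper invokes the result of \cite{altschuler2023} as a black box (restated as Theorem~\ref{app:theorem:soft-dpsgd}), feeding it three computed ingredients: domain diameter $\sqrt{2K}\,C_\lambda$, per-sample gradient $\ell_2$-sensitivity $4$ (Lemma~\ref{app:lemma:sensi-gradient}), and smoothness $8/\beta$ (Lemma~\ref{app:lemma:sample_loss}). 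You instead attempt to re-derive that theorem: subsampled-Gaussian composition for the $TQ$ branch, and amplification-by-iteration with noise splitting and a shift-reduction lemma for the $MQ + \alpha 2K C_\lambda^2/(2\eta^2\sigma_1^2 M)$ branch. That re-derivation is precisely the content of the cited theorem, and you yourself flag its crux (``the correct interleaving of subsampling and iteration amplification'') as an unresolved obstacle; as written, this part of your argument is a plan rather than a proof, whereas the paper closes it in one line by citation.

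There is also a concrete error in the one quantitative step you do carry out: the smoothness computation. You claim $\hat{h}$ is $1/\beta$-smooth ``since $\mathrm{LSE}_\beta$ is $1/\beta$-smooth and $\ell^s(\cdot;\lambda)$ is affine,'' and conclude that $\eta \le 2\beta$ makes the gradient step non-expansive. Pre-composition with an affine map does not preserve the smoothness constant: writing $\ell^s(x;\lambda) = a^s(x) + J_s\lambda$ with $J_s = \begin{bmatrix} -sI_K & sI_K \end{bmatrix}$, one has $\nabla^2_\lambda \hat{h} = \lvert\mathcal{S}\rvert\, J_s^\top \nabla^2 \mathrm{LSE}_\beta\, J_s$, so the $\ell_2$-smoothness picks up the factor $\lvert\mathcal{S}\rvert\,\lVert J_s\rVert_2^2 = 2\cdot 2 = 4$, i.e.\ $\hat{h}$ is $4/\beta$-smooth in $\ell_2$ (the paper's Lemma~\ref{app:lemma:sample_loss} computes $8/\beta$ in the $\ell_\infty\!\to\!\ell_1$ norms). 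Non-expansiveness of $\lambda \mapsto \lambda - \eta\nabla\hat{h}(\lambda;x,s)$ for a convex $M$-smooth loss requires $\eta \le 2/M$, hence here $\eta \le \beta/2$ at best (the paper's appendix, applying Theorem~\ref{app:theorem:soft-dpsgd} with $M = 8/\beta$, requires $\eta \le \beta/4$ in Theorem~\ref{app:theorem:rdp-algo}); for $\eta \in (\beta/2, 2\beta]$ the contraction on which your amplification-by-iteration step rests simply fails. Your proposal thus inherits the main-text constant $2\beta$ without a valid justification---this incidentally exposes an inconsistency in the paper itself, whose appendix quietly replaces $2\beta$ by $\beta/4$. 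A smaller quibble: treating $(\bar{\pi}_{-1},\bar{\pi}_{1})$ as a single joint Gaussian release gives $\ell_2$-sensitivity $\sqrt{2}/N$ and hence $\alpha/(N^2\sigma_\pi^2)$, twice the stated first term; the paper reaches $\alpha/(2N^2\sigma_\pi^2)$ via a per-group sensitivity bound combined with parallel composition (Lemma~\ref{app:lemma:rdp-pi}).
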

        \begin{proof}[Skecth of Proof]
           The privacy loss of \rm{DP2DP} naturally decomposes into two components. The first arises from the release of the privatized group proportions $(\bar{\pi}_s)_{s \in \mathcal{S}}$. Since modifying a single individual in the dataset changes these proportions by at most $1/N$, the sensitivity is small. By adding Gaussian noise scaled to this sensitivity, we obtain a Rényi differential privacy guarantee for this step. The second source of privacy loss comes from the DP-SGD optimization of the smoothed objective function $\hat{H}$. Here, each per-sample gradient has sensitivity at most $4$, and the resulting privacy bound follows from including this sensitivity computation in \cite[Theorem~3.1]{altschuler2023}. To conclude, the two contributions are combined using standard Rényi differential privacy version of the composition rule, yielding the overall privacy bound stated in Theorem~\ref{theorem:rdp-algo}.
        \end{proof}


        


        \paragraph*{Discussion.}
            Theorem~\ref{theorem:rdp-algo} provides a general Rényi differential privacy guarantee, offering a relatively tight quantification of the privacy loss. Such a bound is particularly useful for precise privacy accounting in practice. However, it is often more convenient to express guarantees in the standard $(\epsilon,\delta)$-differential privacy framework. A key implication of Theorem~\ref{theorem:rdp-algo} that, under suitable conditions on the order $\alpha$ of Rényi divergence and on the noise $\sigma_{\rm{sgd}}$, the guarantee can be specialized to yield $(\epsilon,\delta)$-differentially private guarantees with explicit noise calibrations. In particular, setting noise parameters as follows
            \begin{equation*}
                \sigma_\pi < \sqrt{2} N \left(\sqrt{\log(1/\delta) + \epsilon} - \sqrt{\log(1/\delta)}\right), \text{   and }
            \end{equation*}
            
            \begin{equation*}
                \sigma_{\rm{sgd}}^2 = \frac{16 \min\left\{T, \left\lceil \tfrac{\sqrt{2K} C_\lambda N}{4 \beta}\right\rceil\right\}}{N^2 \left(\sqrt{\log(1/\delta) + \epsilon} - \sqrt{\log(1/\delta)}\right)^2 - \tfrac{1}{2} \sigma_\pi^2}, 
            \end{equation*}
             one recovers $(\epsilon,\delta)$-differential privacy (see the Appendix for details). We stress that this parametrization is provided as an interpretative tool to make Theorem~\ref{theorem:rdp-algo} more transparent. However, the resulting bounds are significantly looser than those obtained directly from the theorem. Therefore, in practice, privacy guarantees should always be computed using the exact Rényi differential privacy results and converted to standard $(\epsilon,\delta)$-DP numerically rather than relying on this simplified analytical form.

    \section{Fairness guarantees}
    \label{sec:guarantees}
    
    Having established the privacy properties of our approach, we now turn to its fairness guarantees. Recall that our objective is not only to preserve individual privacy but also to enforce demographic parity in the resulting classifier. The theorem below formalizes this by providing a bound on the expected unfairness of the classifier produced by our method.
        
        \begin{theorem}\label{thm:fairness}
        Consider {\rm DP2DP} scheme, as defined in Algorithm~\ref{algo:dp-fair} with a fixed stepsize $\eta_t = \eta \leq 2\beta$ for all $t \in [T]$. Let us also denote by $\pi_{\min} = \min\{ \mathbb{P}\left[ S = s \right] \mid s \in \mathcal{S} \}$ the minimum group size within sensitive attributes w.r.t. $\mathbb{P}$. Then, there exist constants $C_1 > 0$ depending on $K$ and $\pi_{\min}$, $\gamma > 0$, and  $C_2 > 0$ that depends on $K$, and $\gamma$, such that for any conditional probabilities $\bar{p}_k$ computed in Phase 1, one has
            \begin{equation*}
 \mathbb{E}\!\left[\mathcal{U}\left(\hat{g}_\rho \right)\right] \leq \rho + \frac{C_1}{\sqrt{N}} + C_2 e^{-\tfrac{\gamma}{\beta}}  + 4\sqrt{\tfrac{ \sqrt{2K} C_\lambda \log T}{\beta \sqrt{T}}\left(2\sqrt{2} + \rho \sqrt{2K} + \tfrac{\sigma_{\rm{sgd}}\sqrt{2K}}{b}\right)}.
            \end{equation*}
            In the above, the expectation is taken over the sampling of $D_N$ and the randomness of the algorithm (mini-batch sampling and Gaussian noise).
        \end{theorem}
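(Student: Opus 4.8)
The plan is to decompose the population unfairness of the hard classifier $\hat g_\rho$ into three contributions — a smoothing bias, a statistical error, and an optimization error — matching the three non-$\rho$ terms of the bound. For each class $k$, write the population hard fairness gap $D_k = \mathbb P_1[\hat g_\rho = k] - \mathbb P_{-1}[\hat g_\rho = k]$, and introduce its soft surrogates, namely the population soft gap built from $P^\beta_s[k] = \mathbb E_{X\mid S=s}[\,\mathrm{softmax}_k(\ell^s(X;\lambda^T)/\beta)\,]$ and the empirical soft gap $\hat D_k := \hat P^\beta_1[k] - \hat P^\beta_{-1}[k]$, where $\hat P^\beta_s[k]$ averages $\mathrm{softmax}_k(\ell^s(\cdot;\lambda^T)/\beta)$ over the $s$-group of $D_N$. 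I would then telescope
\begin{equation*}
D_k = \big(D_k - (P^\beta_1[k]-P^\beta_{-1}[k])\big) + \big((P^\beta_1[k]-P^\beta_{-1}[k]) - \hat D_k\big) + \hat D_k ,
\end{equation*}
take absolute values, maximize over $k$, and take expectations, so that $\mathbb E[\mathcal U(\hat g_\rho)]$ is controlled by three terms handled separately.

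The crux — and the step I expect to be the main obstacle — is the optimization term $\mathbb E[\max_k |\hat D_k|]$. I would first record the exact identity obtained by differentiating $\hat H_\beta$: since $\ell^s_k$ is affine in $\lambda$ with $\partial \ell^s_k/\partial\lambda^{(1)}_k = -s$ and $\nabla\,\mathrm{LSE}_\beta = \mathrm{softmax}(\cdot/\beta)$, one gets $\partial\hat H_\beta/\partial\lambda^{(1)}_k = \rho - \hat D_k$ and $\partial\hat H_\beta/\partial\lambda^{(2)}_k = \rho + \hat D_k$. Thus the empirical soft fairness gap equals $\rho$ minus a coordinate of $\nabla\hat H_\beta(\lambda^T)$, and bounding the fairness violation reduces to bounding $\|\nabla\hat H_\beta(\lambda^T)\|_\infty$. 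Using that $\hat H_\beta$ is convex and $L_\beta$-smooth with $L_\beta = \Theta(1/\beta)$ (inherited from the $1/\beta$-smoothness of $\mathrm{LSE}_\beta$), together with a one-coordinate descent argument — move $\lambda^T$ along $\pm e_{(1),k}$ by the optimal smooth step and compare the resulting value to the global minimum $\hat H_\beta^\star$ — I would obtain $|\partial_{(1),k}\hat H_\beta(\lambda^T)| \le \sqrt{2 L_\beta(\hat H_\beta(\lambda^T)-\hat H_\beta^\star)}$, hence $\max_k|\hat D_k| \le \rho + \sqrt{2 L_\beta(\hat H_\beta(\lambda^T)-\hat H_\beta^\star)}$. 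The delicate point is that this one-coordinate move must remain inside $[0,C_\lambda]^{2K}$; this is exactly where I would require $C_\lambda$ large enough that the optimal multipliers are interior, so that the upper boundary is never binding and the KKT certificate yields genuine fairness rather than a spurious boundary effect.

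It then remains to control $\mathbb E[\hat H_\beta(\lambda^T)-\hat H_\beta^\star]$. I would bound the per-sample stochastic gradient: the $\mathrm{softmax}$ block contributes at most $2\sqrt2$ in $\ell_2$ (two probability-vector blocks of norm $\le 1$, scaled by $|\mathcal S|=2$), the constraint block contributes $\rho\sqrt{2K}$, and the perturbation $Z_t/b$ contributes $\sigma_{\mathrm{sgd}}\sqrt{2K}/b$, giving the effective gradient bound $G_{\mathrm{eff}} = 2\sqrt2 + \rho\sqrt{2K} + \sigma_{\mathrm{sgd}}\sqrt{2K}/b$. Feeding $G_{\mathrm{eff}}$, the domain diameter $C_\lambda\sqrt{2K}$, and the step-size condition $\eta\le 2\beta$ into a last-iterate projected-SGD guarantee yields $\mathbb E[\hat H_\beta(\lambda^T)-\hat H_\beta^\star] = \mathcal O\!\big(C_\lambda\sqrt{2K}\,G_{\mathrm{eff}}\,\log T/\sqrt T\big)$ — the $\log T$ being the usual last-iterate penalty for convex SGD, and $\eta$ being eliminated by its optimal (feasible) choice. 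Combining with the previous display and Jensen's inequality (to pass $\mathbb E[\sqrt{\,\cdot\,}]\le\sqrt{\mathbb E[\,\cdot\,]}$, the SGD bound being uniform over $D_N$) reproduces the last term of the stated inequality, with the constant $4$ absorbing $L_\beta$ and the last-iterate constant.

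For the two remaining terms I would argue as follows. The statistical term $\mathbb E[\max_k|(P^\beta_1[k]-P^\beta_{-1}[k])-\hat D_k|]$ is controlled by a uniform law of large numbers over $\lambda\in[0,C_\lambda]^{2K}$, since the maps $x\mapsto\mathrm{softmax}_k(\ell^s(x;\lambda)/\beta)$ are uniformly bounded and Lipschitz in $\lambda$; because $\lambda^T$ is data-dependent this must hold uniformly, and a symmetrization/Rademacher bound gives $\mathcal O(1/\sqrt N)$, with the group sizes $N_s\gtrsim N\pi_{\min}$ producing the $\pi_{\min}$-dependence of $C_1$ and the privatization noise on $\bar\pi_s$ (of magnitude $\mathcal O(1/N)$ at fixed privacy level) absorbed at the same rate. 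Finally, the smoothing bias $\mathbb E[\max_k|D_k-(P^\beta_1[k]-P^\beta_{-1}[k])|]$ measures how well $\mathrm{softmax}(\cdot/\beta)$ approximates the $\arg\max$ indicator defining $\hat g_\rho$; under a margin-type condition giving a separation $\gamma$ between the top two logits of $\ell^s$, one has $|\mathrm{softmax}_k(z/\beta)-\mathds{1}(\arg\max z=k)|\le (K-1)e^{-\gamma/\beta}$ pointwise, and integrating yields $C_2 e^{-\gamma/\beta}$. Summing the three bounds gives the claim; the principal difficulty throughout is the optimization-to-fairness conversion of the second and third paragraphs, where convexity, $1/\beta$-smoothness, the box constraint, and the DP-SGD noise must be reconciled simultaneously.
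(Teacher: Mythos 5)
Your proposal is correct in its essentials, and its self-identified crux --- the optimization-to-fairness conversion --- is exactly the paper's argument: the identity $\partial_{\lambda^{(1)}_k}\hat H_\beta = \rho - \hat D_k$, $\partial_{\lambda^{(2)}_k}\hat H_\beta = \rho + \hat D_k$ (the paper writes it as $(2l-3)\sum_s s\,\hat{\mathbb{E}}_{X\mid S=s}[\mathrm{softmax}_\beta(\ell^s(X;\bar\lambda))_k]+\rho$), the gradient--suboptimality inequality $\|\nabla\hat H_\beta(\lambda^T)\|^2 \le 2L_\beta(\hat H_\beta(\lambda^T)-\hat H_\beta^\star)$ (Lemma~\ref{app:lemma:ineq-smooth}, which your one-coordinate descent argument reproves coordinatewise), the last-iterate DP-SGD utility bound with Lipschitz constant $2\sqrt2+\rho\sqrt{2K}+\sigma_{\rm sgd}\sqrt{2K}/b$ and diameter $C_\lambda\sqrt{2K}$ (Theorem~\ref{app:lemma:utility_dpsgd}), and Jensen's inequality all appear in the paper's Lemma~\ref{app:lemma:bound_dp} and the proof of Theorem~\ref{app:theo_fair}. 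Where you genuinely diverge is the arrangement of the two remaining error terms. You telescope hard-population $\to$ soft-population $\to$ soft-empirical, so your sampling error is a uniform law of large numbers over $\lambda\in[0,C_\lambda]^{2K}$ for the softmax class and your smoothing bias is paid at the population level under a margin condition; the paper telescopes hard-population $\to$ hard-empirical $\to$ soft-empirical, paying the smoothing error on the empirical measure (pointwise softmax-vs-argmax bound, Lemma~\ref{app:lemma:softmax}, plus an explicit tie term $(K-1)/N_{\min}$ from Lemma~\ref{app:lemma:proba_multi_max}) and the sampling error via a DKW bound on the scalar threshold events $\{\bar p_k-\max_{j\neq k}\bar p_j>t\}$, with a supremum over $t$ absorbing the data dependence of $\bar\lambda$. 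The paper's ordering buys a constant $C_1$ depending only on $K$ and $\pi_{\min}$, since DKW over scalar thresholds is free of $\beta$; your Rademacher/covering route handles the data dependence of $\lambda^T$ more transparently, but the natural covering-number bound for a class that is $O(1/\beta)$-Lipschitz in $\lambda$ injects $\sqrt{\log(C_\lambda/\beta)}$-type factors into the $1/\sqrt N$ term, so as written your $C_1$ depends (logarithmically) on $\beta$ --- harmless for Corollary~\ref{cor:fairness}, where $\beta=2\gamma/\log N$ only contributes $\log\log N$, but slightly weaker than the stated theorem unless you upgrade to a $\beta$-independent pseudo-dimension argument. Two further remarks: your explicit interiority requirement on $C_\lambda$ addresses a point the paper leaves implicit (its gradient-gap lemma needs a vanishing gradient at a minimizer over the orthant, guaranteed only through Lemma~\ref{app:lemma:minimum}); and to get the final $C_1/\sqrt N$ you still need the paper's small-sample bookkeeping, i.e.\ splitting on $\{N_{\min}=0\}$ and bounding $\mathbb{E}[N_{\min}^{-1/2}\mathds{1}(N_{\min}\geq 1)]$ via the binomial Lemma~\ref{app:lemma:binom}, rather than the heuristic $N_s\gtrsim N\pi_{\min}$.
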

        \begin{proof}[Proof sketch]
            The fairness bound can be understood as the combination of three distinct contributions.  First, the baseline term $\rho + \tfrac{C_1}{\sqrt{N}}$ reflects the intrinsic constraint relaxation present even in the non-private fair classifier, and therefore captures the best achievable approximation to demographic parity in this setting. The control of this term essentially sums up to bounding the deviation between a cumulative distribution function and its empirical counterpart. Second, the use of the $\mathrm{LSE}_\beta$ smoothing introduces an additional approximation error. This approximation of the $\max$ operator, induces an exponential term $C_2 e^{-\gamma/\beta}$, vanishes as the smoothing parameter $\beta$ decreases. Finally, the privatization step contributes a further error term. This term, alike existing bounds in private non-fair optimization such as~\cite{bassily2014privateerm}, depends explicitly on the number of iterations $T$, the smoothing parameter $\beta$, and the noise level $\sigma_{{\rm sgd}}$. Carefully combining these components yiedls the result. \end{proof}
        
        This theorem establishes that the unfairness of our method remains close to the target level $\rho$. To make this guarantee more interpretable, Corollary~\ref{cor:fairness} presents a simplified version of Theorem~\ref{thm:fairness} derived under natural parameter choices.

        \begin{corollary}\label{cor:fairness}
            Let $\rho \ge 0$, and fix $T=N^2$ and $\beta = 2\gamma/\log N$. Then there exists a constant $C_*$, depending on $K$ ,$\pi_{\min}$, $C_\lambda$ ,$ b$ ,$\rho$, and $\sigma_{{\rm sgd}}$ such that
            \begin{equation*}
                \mathbb{E}\!\left[\mathcal{U}(\hat{g}_\rho)\right] \leq \rho + C_* \frac{\log N}{\sqrt{N}}.
            \end{equation*}
            As before, the expectation is taken jointly over the sampling of $D_N$ and the randomness of the algorithm, including mini-batch selection and Gaussian noise.
        \end{corollary}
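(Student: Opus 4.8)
The plan is to obtain the corollary as a direct specialization of Theorem~\ref{thm:fairness}, substituting the prescribed values $T = N^2$ and $\beta = 2\gamma/\log N$ into each of the four terms of the bound and verifying that every surviving contribution decays at the common rate $\mathcal{O}(\log N/\sqrt{N})$. The baseline statistical term $C_1/\sqrt{N}$ already decays at rate $N^{-1/2}$ and is trivially dominated by $\log N/\sqrt{N}$ once $N$ is large enough, so it requires no work beyond noting this bound.

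The first genuine computation concerns the smoothing term $C_2 e^{-\gamma/\beta}$. With the choice $\beta = 2\gamma/\log N$ one has $\gamma/\beta = (\log N)/2$, hence $e^{-\gamma/\beta} = N^{-1/2}$, so this exponential collapses to $C_2/\sqrt{N}$. This is precisely the point of the parameter selection: $\beta$ is tuned so that the otherwise exponentially small (in $1/\beta$) smoothing bias is converted into a polynomial $N^{-1/2}$ rate that matches the statistical term, while still letting the smoothing be fine enough that the optimization term below does not blow up.

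Next I would treat the optimization/privacy term by substituting $\log T = 2\log N$, $\sqrt{T} = N$, and $\beta = 2\gamma/\log N$ inside the square root. The prefactor $\sqrt{2K}\,C_\lambda \log T/(\beta\sqrt{T})$ then becomes $\sqrt{2K}\,C_\lambda (\log N)^2/(\gamma N)$. Since the parenthesized factor $B := 2\sqrt{2} + \rho\sqrt{2K} + \sigma_{\mathrm{sgd}}\sqrt{2K}/b$ is a constant independent of $N$, the whole term equals $4\sqrt{\sqrt{2K}\,C_\lambda B/\gamma}\,\bigl(\log N/\sqrt{N}\bigr)$, i.e. again $\mathcal{O}(\log N/\sqrt{N})$, and indeed this is the dominant contribution since it carries the full $\log N$ factor.

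Finally I would collect the three surviving contributions and use $1/\sqrt{N}\leq \log N/\sqrt{N}$, valid once $N$ is large enough, to absorb them all into a single constant $C_* = C_1 + C_2 + 4\sqrt{\sqrt{2K}\,C_\lambda B/\gamma}$, whose dependence on $K$, $\pi_{\min}$, $C_\lambda$, $b$, $\rho$, and $\sigma_{\mathrm{sgd}}$ is inherited directly from the constants of Theorem~\ref{thm:fairness} (with $\gamma$ a fixed problem-dependent constant that gets absorbed). The argument is essentially routine algebra; there is no real obstacle, and the only conceptual content — the reason the corollary is worth isolating — is observing that the parameter choices $T = N^2$ and $\beta = 2\gamma/\log N$ are engineered so that the statistical error, the smoothing bias, and the optimization noise all balance at the same rate $\log N/\sqrt{N}$, thereby matching (up to the logarithmic factor) the best non-private post-processing guarantees.
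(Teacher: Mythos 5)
Your proposal is correct and follows exactly the paper's own route: the paper likewise obtains the corollary by direct substitution of $T=N^2$ and $\beta = 2\gamma/\log N$ into Theorem~\ref{thm:fairness}, noting that the exponential term collapses to $N^{-1/2}$ and aggregating all constants into $C_*$. Your version merely spells out the algebra (e.g.\ $e^{-\gamma/\beta}=N^{-1/2}$ and the prefactor $\sqrt{2K}\,C_\lambda(\log N)^2/(\gamma N)$ inside the square root) that the paper's one-line proof leaves implicit, which is a faithful and complete rendering of the same argument.
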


        \paragraph*{Discussion.} The above shows that, under natural parameter choices, the unfairness gap to the target $\rho$ decays at rate $C_* \tfrac{\log(N)}{\sqrt{N}}$. Importantly, the constant $C_*$ depends explicitly on the noise parameter $\sigma_{{\rm sgd}}$, as we have $C_* \in \mathcal{O}(\sqrt{\sigma_{{\rm sgd}}})$. This dependence directly connects the fairness analysis to the privacy calibration in Section~\ref{sec:privacy}. In particular, when the noise level is chosen to ensure $(\varepsilon,\delta)$-differential privacy and $T,\beta$ are fixed appropriately, we obtain $\sigma_{{\rm sgd}} \in \mathcal{O}\!\left(\sqrt{ \log N/ N }\right)$. Substituting this into $C_*$ shows that privatization does not affect the asymptotic convergence rate more than a factor $\log(N)$ compared to the state-of-the-art non-private result from~\cite{Denis_Elie_Hebiri_Hu_2024}.

        Taken together, the results of Section~\ref{sec:privacy} and Section~\ref{sec:guarantees} demonstrate that privacy-preserving mechanisms can be integrated into fairness pipelines with only minimal overhead. In fact, the additional fairness cost induced by privatization vanishes as $N$ grows, ensuring that the algorithm approaches the target level $\rho$ at rate $\tfrac{\log(N)}{\sqrt{N}}$ while remaining differentially private. It is important to note that the bound obtained in Theorem~\ref{thm:fairness} does not depend either on the sampling of the labels dataset $D_n$ nor on the privacy preserving mechanism applied in Phase~1. Notably, our fairness result is distribution-free which is a clear advantage w.r.t. the in-processing algorithm proposed in~\citep{lowy2023stochastic}. Indeed, in-processing methods involve the minimization, over a suitable family of predictors, of an empirical risk. Therefore, theoretical guarantees on the resulting classifier require assumptions on the considered set of predictors. In contrast, we do not demand any assumption on the set $(\bar{p}_k)_{k \in [K]}$.


    
    \section{Experimental Results}\label{sec:expe}
        We now present empirical results supporting our theoretical analysis. We begin with controlled synthetic experiments, which allow us to investigate the line between privacy and fairness across different regimes. We then turn to real-world datasets, where we compare our method against two baselines to demonstrate its effectiveness in practical settings. The predictive performance is measured by empirical accuracy on a held-out test set that we denote $\mathcal{T}$. Fairness is quantified using empirical demographic parity, i.e., 
        \begin{equation*}
            \hat{\mathcal{U}}(g) = \max_{k \in [K]} \left\lvert \nu_{g \mid 1}(k) - \nu_{g \mid -1}(k)\right\rvert,
        \end{equation*}
        where $\nu_{g \mid s}(k)$ is the empirical distribution of the predicted labels conditional on $S=s$, i.e., $g(X,s)=k$. 
        Smaller values of $\hat{\mathcal{U}}(g)$ indicate stronger fairness guarantees. For privacy accounting, across all experiments, we fix $\delta=o(1/N)$ and compute the corresponding $\varepsilon$ bounds given the noises using the \texttt{dp-accounting} library with exact privacy computation \citep{google_dp_accounting}. 
    
        \paragraph*{Bounded impact of privacy on fairness.} The objective of our first experiment is to validate the fact that our scheme does not suffer an important fairness loss compared to a non-private scheme. To do so, we compare our fairness performance against the state-of-the-art in fair multiclass classification from ~\citep{Denis_Elie_Hebiri_Hu_2024} across several unfairness regimes. We follow the data generation procedure of this paper which allows explicit control over the degree of unfairness via a parameter $p \in [0,1]$. Setting $p=0.5$ produces fair data, while $p \in \{0,1\}$ yields maximally unfair distributions. Data are sampled from a Gaussian Mixture Model with 10 components, feature dimension $d=20$, and 6 classes. We generate $10{,}000$ samples split into 60\% training, 20\% unlabeled, and 20\% test subsets, and vary the unfairness parameter across $p \in \{0.5,0.6,0.7,0.8,0.9,0.99\}$. In line with \cite{Denis_Elie_Hebiri_Hu_2024}, the classifier is trained only on the non-sensitive attribute.  We use a multiclass logistic regression model to estimate conditional class probabilities. Output perturbation is then applied to obtain $\bar{p}_k$ as described in Section~\ref{sec:method}. DP2DP is ran with smoothing parameter $\beta=10^{-5}$, and DP-SGD is used with batch size $b=128$ over $T=100$ iterations, with learning rate $\eta_t = 1/\sqrt{t}$. Results are averaged over $30$ independent runs. For details and additional experiments, refer to Appendix.
            
            \begin{figure}[ht]
                \centering
                \begin{subfigure}{0.48\columnwidth}
                    \centering
                    \includegraphics[height=0.28\textheight, keepaspectratio]{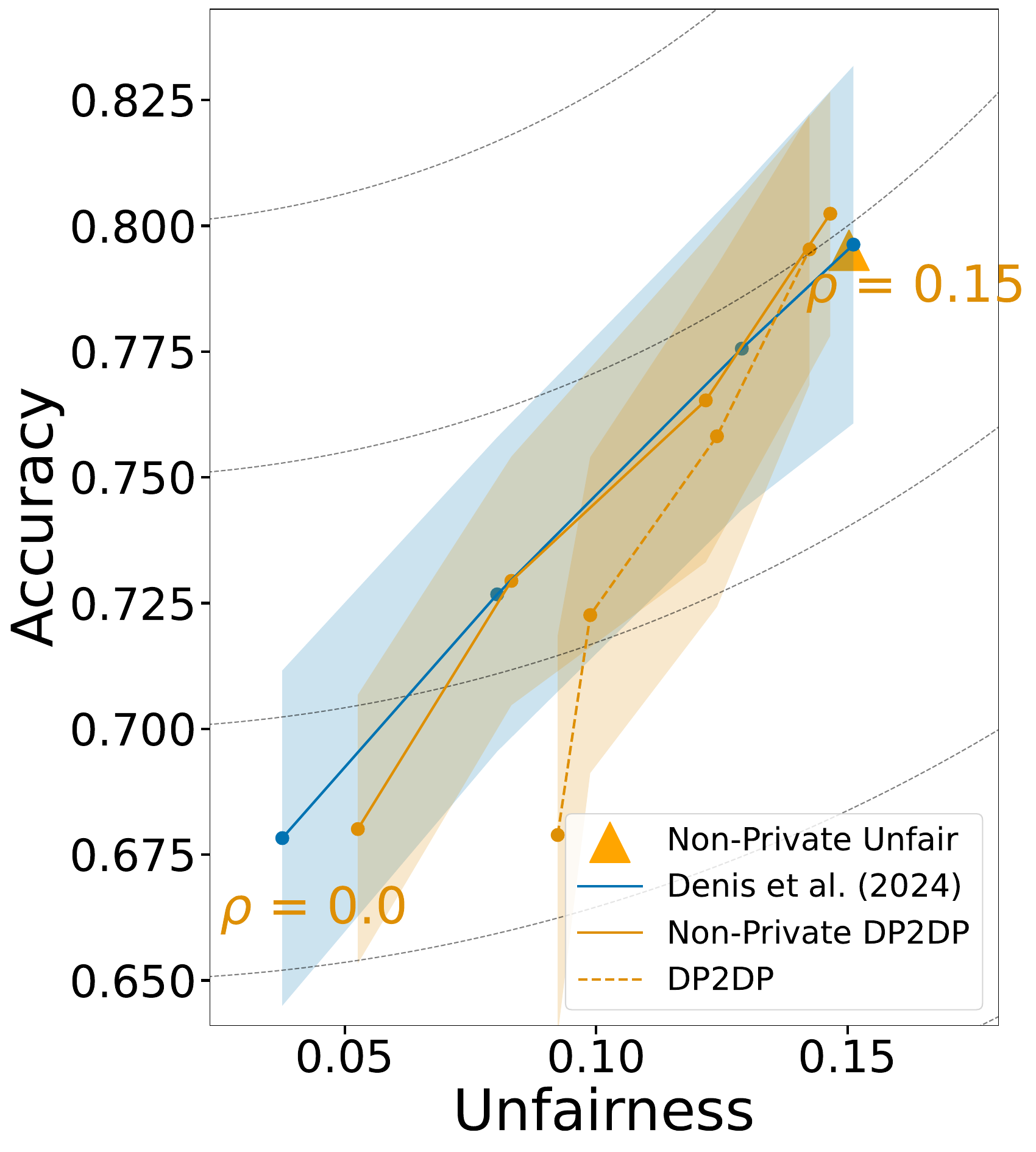}
                    \label{fig:diag_vs_p}
                \end{subfigure}
                \hfill
                \begin{subfigure}{0.48\columnwidth}
                    \centering
                    \includegraphics[height=0.28\textheight, keepaspectratio]{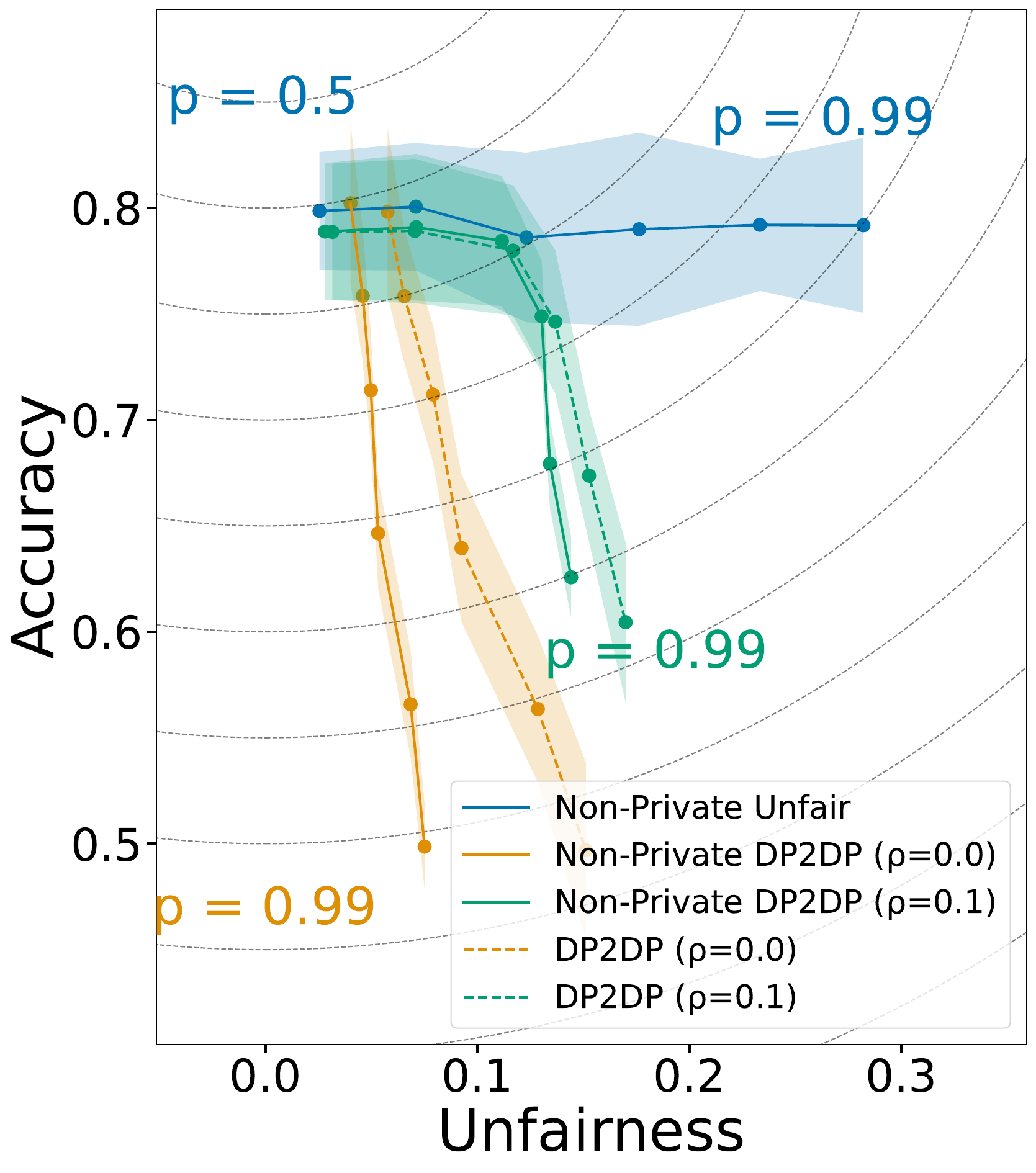}
                    \label{fig:diag_vs_rho}
                \end{subfigure}
            
                \caption{Synthetic experiments with $d=20$ features, $K=6$ classes, $10{,}000$ samples. Left: varying fairness tolerance $\rho$ ($p=0.75$). Right: varying unfairness parameter $p$. The combination of Phase 1 and Phase 2 satisfy $(0.46, 10^{-5})$-differential privacy.}                \label{fig:diagram_synth}
            \end{figure}

            Figure~\ref{fig:diagram_synth} (left) reports the performance of the our method as the fairness tolerance $\rho$ varies. We observe that even with the $\mathrm{LSE}_\beta$ smoothing, our non-private DP2DP essentially matches the fairness/accuracy profile of \cite{Denis_Elie_Hebiri_Hu_2024}, indicating that the approximation error induced by smoothing is negligible in practice. More importantly, the private DP2DP with $\varepsilon=0.46$ remains very close to both non-private baselines. This illustrates the privacy/fairness/accuracy trade-off predicted by Theorem~\ref{thm:fairness}: the privatization step induces only a controlled degradation, while the expected unfairness of the resulting classifier stays within the theoretical guarantees around the target $\rho$.
            
            Figure~\ref{fig:diagram_synth} (right) investigates the effect of the intrinsic unfairness of the data, controlled by parameter $p$. Since our non-private DP2DP already tracks \cite{Denis_Elie_Hebiri_Hu_2024} method closely, we adopt it as the reference baseline. The results show that even in highly biased regimes (e.g., $p=0.99$), our private DP2DP remains close to its non-private counterpart. This empirical observation aligns with Theorem~\ref{thm:fairness} and Corollary~\ref{cor:fairness}, which guarantee that the unfairness gap to $\rho$ remains bounded and converges at rate $\tfrac{\log(N)}{\sqrt{N}}$, despite the additional noise required for privacy.

        \paragraph*{Comparing to state-of-the-art of real data.} The objective of our second experiment is to demonstrate our experimental superiority w.r.t. the state of the art. To do so, we compare our method to 2 baselines attempting to merge privacy and fairness in classification tasks existing private baselines~\citep{tran2021a,tran2021b,lowy2023stochastic} on the fairness-aware benchmark Adult Income dataset \citep{adult_2}. This dataset contains 48,842 records from the U.S. Census, each described by 14 demographic attributes such as age, gender, education, and occupation. We designate gender as the sensitive attribute and focus on the binary classification task of predicting whether an individual earns more than $\$50k$ per year. Following the preprocessing pipeline of \cite{lowy2023stochastic}, categorical and numerical features are transformed into $102$ quantitative variables. The data are split into $55\%$ training, $20\%$ unlabeled, and $25\%$ test sets. We employ a binary logistic regression, applying output perturbation to the class probabilities to obtain $\bar{p}_k$. As in the synthetic setting, DP2DP is run with $\beta=10^{-5}$ and DP-SGD with batch size $b=128$ for $T=100$ iterations, with learning rate $\eta_t = 1/\sqrt{t}$. Results are averaged over 15 independent runs. For details and additional experiments, refer to Appendix.
            
            \begin{figure}
                \begin{subfigure}[b]{0.49\columnwidth}
                    \centering
                    \includegraphics[width=0.8\textwidth]{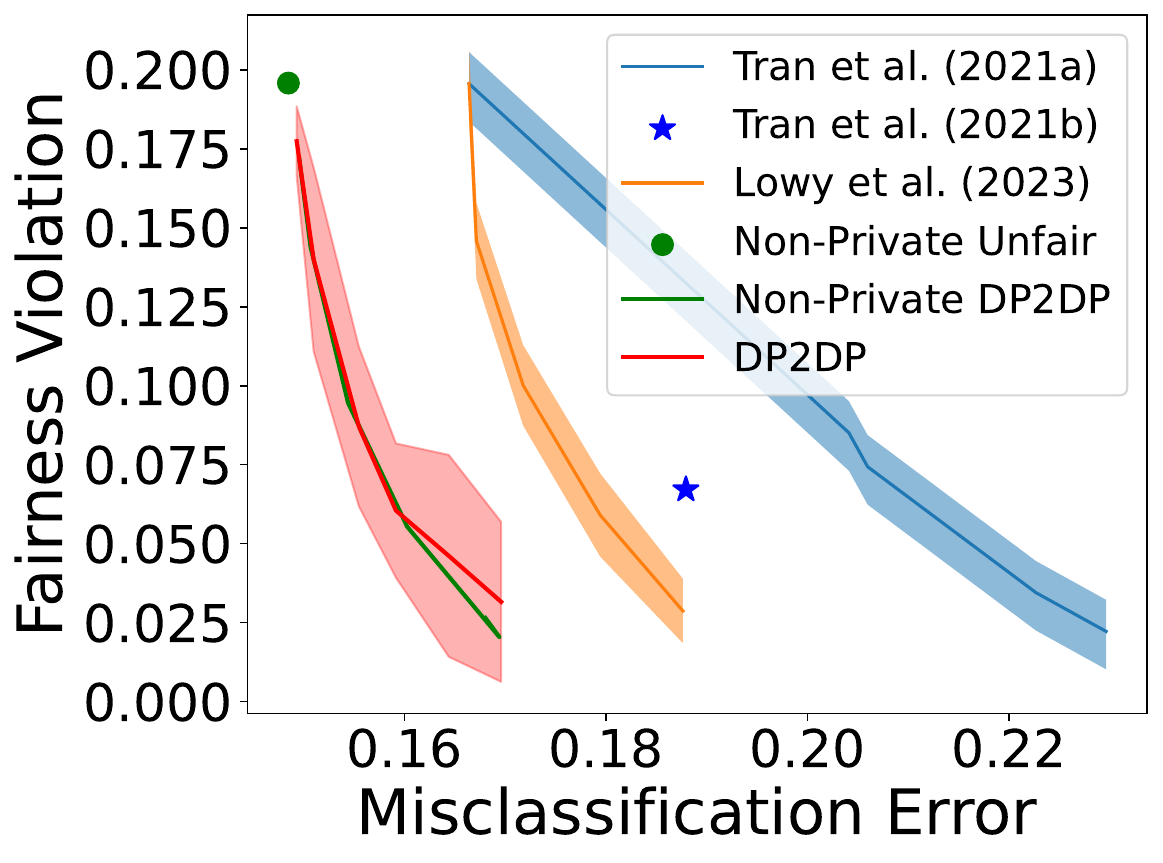}
                    \label{fig:diag_adut_eps05}
                \end{subfigure}
                \begin{subfigure}[b]{0.49\columnwidth}
                    \centering
                    \includegraphics[width=0.8\textwidth]{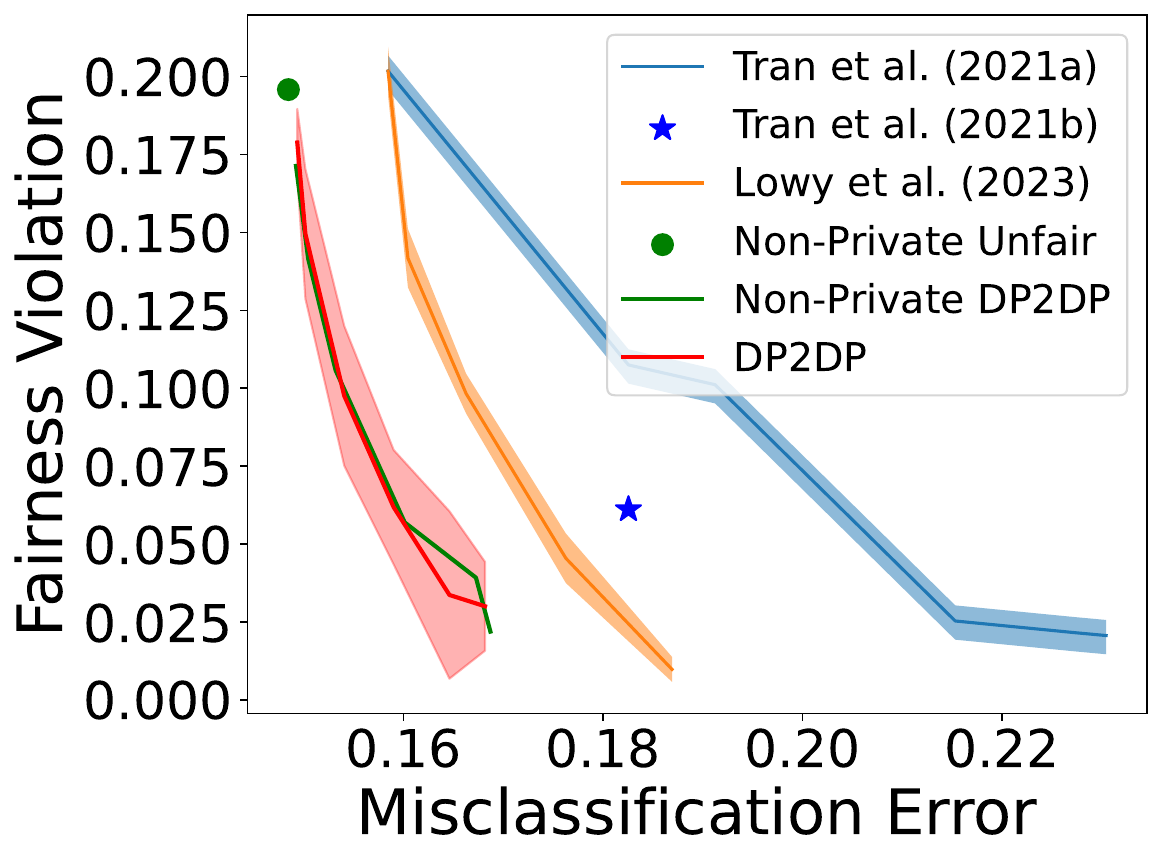}
                    \label{fig:diag_adut_eps1}
                \end{subfigure} 
                \caption{Comparison of our method (in terms of fairness/misclassification) with previous work on the Adult dataset under $(\varepsilon, \delta)$-differential privacy with $\delta = 10^{-5}$. The left panel shows  $\varepsilon = 0.5$, and the right panel for $\varepsilon = 1.0$.}
                \label{fig:diagram_adut}
            \end{figure}

        Figures~\ref{fig:diagram_adut} reports the fairness/accuracy trade-offs on the Adult dataset under two different privacy budgets, $\varepsilon=0.5$ and $\varepsilon=1.0$ for the methods we consider. In both regimes, our DP2DP method consistently outperforms existing baselines achieving strictly better misclassification error at comparable or lower demographic parity violations. This demonstrates that combining fairness post-processing with differential privacy, as proposed in DP2DP, leads to significantly improved practical trade-offs compared to prior approaches. Note that the gap between DP2DP and its non-private counterpart is small across both privacy levels. This observation directly reflects Theorem~\ref{thm:fairness}: as $N$ grows large, the fairness cost of privatization diminishes at rate $\tfrac{\log(N)}{\sqrt{N}}$, ensuring that the private classifier converges to the non-private one. On the Adult dataset, which contains nearly 50,000 examples, this asymptotic effect is already clearly visible in practice. Similar observation hold on the Parkinson \citep{parkinsons_telemonitoring_189} and the Credit Card dataset \citep{default_of_credit_card_clients_350} (see Appendix details on additional experiments).

    \section{Perspectives}
    \label{sec:conclusion}


    In future work, we plan to extend our procedure in several directions. First, handling multiple sensitive attributes is a natural next step, as it would broaden the applicability of our method. Second, we aim to generalize our approach to other fairness notions, such as Equalized Odds, which necessitate the use of labeled data in Phase~2. Finally, addressing the unawareness setting (where only the feature vector $X$ is available at prediction time) remains an important and challenging open direction for future work.


 \newpage  
    

\bibliography{bibfile} 
\bibliographystyle{plainnat}

\clearpage
\appendix
\thispagestyle{empty}


\begin{center}
\Large{\textbf{APPENDIX}}
\end{center}
\bigskip

\section*{ORGANIZATION OF THE APPENDIX}
Appendix~\ref{app:defs} recalls standard analytical definitions.
Appendix~\ref{app:properties} gathers auxiliary technical results useful for the following proofs.
Appendix~\ref{app:privacy} provides the complete privacy analysis of the proposed \textsc{DP2DP} algorithm.
Appendix~\ref{app:fairness} contains the fairness proofs.
Finally, Appendix~\ref{app:exp} presents additional experimental details, including the synthetic data generation procedure and extended empirical results on benchmark datasets.

\section{COMPLEMENTARY DEFINITIONS}\label{app:defs}
    
    This section recalls a few standard definitions and results used in the analysis, in particular regarding Lipschitz continuity and smoothness of differentiable functions.  
    Throughout, we denote by $\lVert \cdot \rVert$ a norm on $\mathbb{R}^n$, and by $\lVert \cdot \rVert_*$ its associated dual norm.
    
    \begin{definition}[$L$-Lipschitz continuity]
        A function $f:\mathbb{R}^n \to \mathbb{R}$ is said to be \emph{$L$-Lipschitz} with respect to $\lVert \cdot \rVert$ if for all $x,y \in \mathbb{R}^n$,
        \begin{equation*}
            |f(x) - f(y)| \le L \lVert x - y \rVert.
        \end{equation*}
        Equivalently, when $f$ is differentiable, this is implied by $\lVert \nabla f(x) \rVert_* \le L$ for all $x \in \mathbb{R}^n$.
    \end{definition}
    
    \begin{definition}[$\beta$-smoothness]
        A differentiable function $f:\mathbb{R}^n \to \mathbb{R}$ is said to be \emph{$\beta$-smooth} with respect to $\lVert \cdot \rVert$ if its gradient is $\beta$-Lipschitz, i.e.,
        \begin{equation*}
            \lVert \nabla f(x) - \nabla f(y) \rVert_* \le \beta \lVert x - y \rVert, \quad \forall x,y \in \mathbb{R}^n.
        \end{equation*}
        Equivalently, when $f$ is twice differentiable, this is implied by $\lVert \nabla^2 f(x) \rVert_{op} \leq \beta$ for all $x \in \mathbb{R}^n$, where the operator norm is induced by $\lVert \cdot \rVert$ and $\lVert \cdot \rVert_*$.
    \end{definition}
    
    \begin{proposition}[Descent lemma {\citep[Theorem~2.1.5]{nesterov_lectures_2018}}]\label{app:prop:descent}
        If $f$ is $\beta$-smooth with respect to $\lVert \cdot \rVert$, then for all $x,y \in \mathbb{R}^n$,
        \begin{equation*}
            f(y) \le f(x) + \langle \nabla f(x), y - x \rangle + \frac{\beta}{2}\lVert y - x \rVert^2.
        \end{equation*}
        Equivalently, in dual form,
        \begin{equation*}
            f(x) + \langle \nabla f(x), y - x \rangle + \frac{1}{2\beta}\lVert \nabla f(x) - \nabla f(y) \rVert_*^2 \le f(y).
        \end{equation*}
    \end{proposition}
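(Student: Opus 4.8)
The plan is to establish the two inequalities separately: the primal upper bound follows from $\beta$-smoothness alone via a one-dimensional integration argument, while the dual lower bound is then deduced from the primal one by a shift-and-minimize argument that additionally exploits convexity of $f$ (implicit in the cited characterization of $C^{1,1}_\beta$ functions in \citep{nesterov_lectures_2018}). I would treat the primal form first, since the dual form reduces to it applied to an auxiliary function.

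For the primal form, I would restrict $f$ to the segment joining $x$ and $y$. Setting $\phi(t) = f(x + t(y-x))$ for $t \in [0,1]$, the map $\phi$ is continuously differentiable with $\phi'(t) = \langle \nabla f(x + t(y-x)), y - x\rangle$, so the fundamental theorem of calculus gives $f(y) - f(x) = \int_0^1 \langle \nabla f(x + t(y-x)), y - x\rangle\, dt$. Subtracting the first-order term $\langle \nabla f(x), y - x\rangle = \int_0^1 \langle \nabla f(x), y - x\rangle\, dt$ and applying the Hölder (Cauchy--Schwarz) inequality for the norm/dual-norm pair yields, for each $t$, the bound $\langle \nabla f(x+t(y-x)) - \nabla f(x), y - x\rangle \le \lVert \nabla f(x+t(y-x)) - \nabla f(x)\rVert_* \, \lVert y - x\rVert$. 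The $\beta$-smoothness hypothesis bounds the first factor by $\beta t \lVert y - x\rVert$, and integrating $\beta t \lVert y - x\rVert^2$ over $t \in [0,1]$ produces exactly $\tfrac{\beta}{2}\lVert y - x\rVert^2$, which is the primal inequality. No convexity is needed here.

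For the dual form, I would fix $x$ and introduce the shifted function $\phi(z) = f(z) - \langle \nabla f(x), z\rangle$. This $\phi$ is again $\beta$-smooth (shifting by a linear term leaves the gradient's Lipschitz constant unchanged) and satisfies $\nabla \phi(x) = \nabla f(x) - \nabla f(x) = 0$; by convexity of $\phi$, the point $x$ is therefore a global minimizer. Applying the already-proved primal bound to $\phi$ at base point $y$ gives $\phi(x) \le \min_{z}\big[\phi(y) + \langle \nabla \phi(y), z - y\rangle + \tfrac{\beta}{2}\lVert z - y\rVert^2\big]$, and the minimization over $u = z - y$ is where the argument is delicate. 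The crux is the identity $\min_{u}\big[\langle g, u\rangle + \tfrac{\beta}{2}\lVert u\rVert^2\big] = -\tfrac{1}{2\beta}\lVert g\rVert_*^2$ with $g = \nabla \phi(y)$, which I would prove by bounding $\langle g, u\rangle \ge -\lVert g\rVert_* \lVert u\rVert$, minimizing the resulting scalar expression $-\lVert g\rVert_* r + \tfrac{\beta}{2} r^2$ over $r = \lVert u\rVert \ge 0$, and noting the dual-norm bound is attained by some direction so that the minimum is genuinely reached. Substituting $\phi$ and $\nabla \phi(y) = \nabla f(y) - \nabla f(x)$ back and rearranging recovers the stated dual inequality.

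I expect the main obstacle to be the general, possibly non-Euclidean, norm in the dual form: one must carry the norm/dual-norm pairing correctly through the Hölder step and, more importantly, verify the sharp identity $\min_u[\langle g,u\rangle + \tfrac{\beta}{2}\lVert u\rVert^2] = -\tfrac{1}{2\beta}\lVert g\rVert_*^2$, whose tightness relies on attainment of the dual norm (finite-dimensionality and compactness of the unit sphere guarantee this). A secondary point worth flagging is that the dual lower bound genuinely requires convexity of $f$, consistent with Theorem~2.1.5 of \citep{nesterov_lectures_2018} characterizing smooth convex functions, whereas the primal descent inequality holds under smoothness alone.
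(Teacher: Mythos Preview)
The paper does not give its own proof of this proposition; it is stated as a citation to \citep[Theorem~2.1.5]{nesterov_lectures_2018} and used as a black box (specifically, only the dual form is invoked once, in the proof of Lemma~\ref{app:lemma:ineq-smooth}). Your argument is correct and is essentially the standard textbook proof: the primal bound via the fundamental theorem of calculus plus the H\"older pairing, and the dual bound via the shifted function $\phi(z)=f(z)-\langle\nabla f(x),z\rangle$ minimized at $x$. You are also right to flag that the dual inequality needs convexity of $f$; this hypothesis is indeed implicit in the Nesterov reference even though the paper's statement does not list it explicitly.
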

    
    \begin{lemma}[Gradient–function gap inequality]\label{app:lemma:ineq-smooth}
        Let $f:\mathbb{R}^n \to \mathbb{R}$ be a differentiable and $\beta$-smooth function. Suppose there exists a global minimizer $x^* \in \mathbb{R}^n$ such that $x^* \in \arg\min_{x \in \mathbb{R}^n} f(x)$. Then, for all $x \in \mathbb{R}^n$,
        \begin{equation*}
            \lVert \nabla f(x) \rVert_*^2 \le 2\beta \bigl(f(x) - f(x^*)\bigr).
        \end{equation*}
    \end{lemma}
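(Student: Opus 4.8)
The plan is to treat this as the standard ``$\beta$-smoothness yields a Polyak-type gradient bound'' argument, working directly from the primal form of the descent lemma (Proposition~\ref{app:prop:descent}). Crucially, no convexity of $f$ is required: the only ingredients are $\beta$-smoothness and the existence of a global minimizer $x^*$. First I would fix $x \in \mathbb{R}^n$ and introduce the quadratic upper model $m(y) = f(x) + \langle \nabla f(x), y-x\rangle + \frac{\beta}{2}\lVert y-x\rVert^2$. By the descent lemma this model dominates $f$ pointwise, i.e. $f(y) \le m(y)$ for every $y \in \mathbb{R}^n$.

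The core observation is that the global minimizer pins the model from below. Since $f(x^*) \le f(y) \le m(y)$ holds for every $y$, taking the infimum over $y$ gives $f(x^*) \le \inf_{y} m(y)$. It then remains to evaluate this infimum. Writing $u = y-x$, I would minimize $\langle \nabla f(x), u\rangle + \frac{\beta}{2}\lVert u\rVert^2$ by first optimizing along a fixed unit direction $v$ and then over directions: the inner one-dimensional minimization (optimal step $-\langle\nabla f(x), v\rangle/\beta$) gives value $-\langle\nabla f(x), v\rangle^2/(2\beta)$, and the outer optimization over $v$ with $\lVert v\rVert = 1$ invokes precisely the definition of the dual norm, $\sup_{\lVert v\rVert = 1}\langle\nabla f(x), v\rangle = \lVert\nabla f(x)\rVert_*$. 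Hence $\inf_y m(y) = f(x) - \frac{1}{2\beta}\lVert\nabla f(x)\rVert_*^2$, and combining with the previous display and rearranging yields exactly $\lVert\nabla f(x)\rVert_*^2 \le 2\beta\bigl(f(x) - f(x^*)\bigr)$.

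Equivalently, and perhaps more transparently, one can bypass the infimum and directly substitute the steepest-descent point $y = x - \tfrac{\lVert\nabla f(x)\rVert_*}{\beta}\, v^\star$ into the descent lemma, where $v^\star$ is a unit vector achieving $\langle\nabla f(x), v^\star\rangle = \lVert\nabla f(x)\rVert_*$; using $f(x^*) \le f(y)$ then produces the same bound after an elementary simplification. The only genuinely non-routine point is the passage to a general (non-Euclidean) norm: one must justify that the supremum defining the dual norm is attained by some $v^\star$, which follows from continuity of the linear functional $v \mapsto \langle\nabla f(x), v\rangle$ on the compact unit sphere of $\mathbb{R}^n$. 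I expect this to be the only step requiring any care; everything else is a short, self-contained manipulation of the descent inequality.
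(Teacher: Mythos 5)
Your proof is correct, and it takes a genuinely different route from the paper's. The paper applies the \emph{dual} form of the descent lemma (the second display of Proposition~\ref{app:prop:descent}) to the pair $(x^*,x)$ and then invokes the first-order condition $\nabla f(x^*)=0$, after which the claim is a one-line rearrangement. You never touch $\nabla f(x^*)$: you use only the \emph{primal} quadratic upper bound, sandwich $f(x^*)\le f(y)\le m(y)$ for all $y$, and compute $\inf_y m(y)=f(x)-\tfrac{1}{2\beta}\lVert\nabla f(x)\rVert_*^2$ via the inner one-dimensional minimization followed by the dual-norm supremum. The difference is not merely cosmetic. The primal descent inequality holds for \emph{every} $\beta$-smooth function, whereas the dual form quoted in Proposition~\ref{app:prop:descent} is, at that level of generality, a theorem about \emph{convex} smooth functions (Nesterov's Theorem~2.1.5, cited there, concerns the convex class; for a non-convex smooth $f$ such as $f(x)=-\lVert x\rVert_2^2/2$ it fails for generic pairs $(x,y)$). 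The paper's proof is therefore safe only because the particular instance it uses---at a global minimizer, where $\nabla f(x^*)=0$---is exactly the inequality being proved, and because the function the lemma is ultimately applied to, $\hat H_\beta$, is convex anyway (Lemma~\ref{app:lemma:sample_loss}). Your argument proves the lemma exactly as stated, with no hidden convexity assumption, which is what your opening remark promises. The one point you flag as delicate---attainment of the dual-norm supremum on the unit sphere---is correctly handled by compactness, and could even be dispensed with by working with the infimum over $y$ directly rather than exhibiting a maximizing direction.
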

    
    \begin{proof}
    Let $x \in \mathbb{R}^n$, since $f$ is $\beta$-smooth, Proposition~\ref{app:prop:descent} gives
    \begin{equation*}
        f(x^*) + \langle \nabla f(x^*), x - x^* \rangle + \frac{1}{2\beta}\lVert \nabla f(x^*) - \nabla f(x) \rVert_*^2 \leq f(x).
    \end{equation*}
    Rearranging terms and use the first order condition $\nabla f(x^*) = 0$ yields
    \begin{equation*}
        \lVert \nabla f(x) \rVert_*^2 \le 2\beta (f(x) - f(x^*)).
    \end{equation*}
    \end{proof}

    \begin{definition}[Operator norms]\label{def:operator-norms}
        Let $\lVert \cdot \rVert_p$ and $\lVert \cdot \rVert_q$ be vector norms on $\mathbb{R}^n$ and $\mathbb{R}^m$, respectively.  
        For $A\in\mathbb{R}^{m\times n}$, the operator norm is
        \begin{equation*}
            \lVert A \rVert_{p, q} := \sup_{v\neq 0}\frac{\lVert Av \rVert_q}{\lVert v \rVert_p} = \inf\{c > 0 : \lVert A v \rVert_q \leq c \lVert v \rVert_p \; \forall v \in \mathbb{R}^n\}.
        \end{equation*}
        We use the following shorthands:
        \begin{equation*}
            \lVert A \rVert_2 := \lVert A \rVert_{2, 2} = \sqrt{\lambda_{\max}(A^\top A)},\qquad
            \lVert A \rVert_1 := \lVert A \rVert_{1, 1}=\max_{j=[n]}\sum_{i=1}^m \lvert a_{ij} \rvert,\qquad
            \lVert A \rVert_\infty := \lVert A \rVert_{\infty, \infty}=\max_{i=[m]}\sum_{j=1}^n \lvert a_{ij} \rvert.
        \end{equation*}
    \end{definition}

\section{TECHNICAL LEMMAS}\label{app:properties}
    In this section, we collect several auxiliary mathematical results that are used throughout the privacy and fairness analyses. These lemmas formalize basic probability result, regularity properties of the per-sample loss, and existence results for the optimization problem introduced in the main paper.
    
    \begin{lemma}[{\citealp[Lemma~4.1]{GyorfiBook_2002}}]\label{app:lemma:binom}
        Let $Z\sim\mathrm{Bin}(N,p)$ and adopt the convention $\tfrac{0}{0}=0$. Then
        \begin{equation*}
            \mathbb{E}\left[\frac{\mathds{1}\left(Z \geq 1\right)}{Z}\right] \leq \frac{2}{(N+1)p}.
        \end{equation*}
    \end{lemma}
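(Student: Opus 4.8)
The plan is to reduce the problem to computing a simpler, exactly tractable expectation, namely $\mathbb{E}[1/(Z+1)]$, and then evaluate the latter in closed form. The first step rests on the elementary observation that on the event $\{Z \geq 1\}$ one has $\tfrac{1}{Z} \leq \tfrac{2}{Z+1}$, since this is equivalent to $Z+1 \leq 2Z$, i.e. $Z \geq 1$. Using this pointwise bound and then discarding the indicator (which only enlarges the expectation, as the summand is nonnegative), I would write
\begin{equation*}
    \mathbb{E}\!\left[\frac{\mathds{1}(Z \geq 1)}{Z}\right] \leq \mathbb{E}\!\left[\frac{2\,\mathds{1}(Z \geq 1)}{Z+1}\right] \leq 2\,\mathbb{E}\!\left[\frac{1}{Z+1}\right].
\end{equation*}

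Next I would evaluate $\mathbb{E}[1/(Z+1)]$ exactly by expanding against the binomial probability mass function and applying the combinatorial identity $\tfrac{1}{k+1}\binom{N}{k} = \tfrac{1}{N+1}\binom{N+1}{k+1}$, which follows immediately from writing out the factorials. Substituting this identity into the sum over $k$ and reindexing with $j = k+1$ yields
\begin{equation*}
    \mathbb{E}\!\left[\frac{1}{Z+1}\right] = \frac{1}{N+1}\sum_{k=0}^{N}\binom{N+1}{k+1} p^{k}(1-p)^{N-k} = \frac{1}{(N+1)p}\sum_{j=1}^{N+1}\binom{N+1}{j} p^{j}(1-p)^{N+1-j}.
\end{equation*}
The remaining sum is the total mass of a $\mathrm{Bin}(N+1,p)$ distribution with the $j=0$ term removed, hence it equals $1-(1-p)^{N+1} \leq 1$.

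Combining the two steps gives
\begin{equation*}
    \mathbb{E}\!\left[\frac{\mathds{1}(Z \geq 1)}{Z}\right] \leq \frac{2\bigl(1-(1-p)^{N+1}\bigr)}{(N+1)p} \leq \frac{2}{(N+1)p},
\end{equation*}
which is the claimed bound. I do not expect a genuine obstacle here: the argument is a short and classical computation, and the only pieces requiring a moment of care are spotting the inequality $\tfrac{1}{Z} \leq \tfrac{2}{Z+1}$ that lets one pass to the integrable quantity $1/(Z+1)$, and correctly applying the factorial identity and reindexing so that the residual sum is recognized as a (truncated) binomial total mass. The convention $\tfrac{0}{0}=0$ simply ensures the left-hand side is well defined on $\{Z=0\}$, and is compatible with the indicator already killing that event.
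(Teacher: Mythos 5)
Your proposal is correct and follows essentially the same route as the paper's proof: the pointwise bound $\tfrac{1}{z} \leq \tfrac{2}{z+1}$ on $\{z \geq 1\}$, the exact evaluation of $\mathbb{E}[1/(Z+1)]$ via the identity $\tfrac{1}{k+1}\binom{N}{k} = \tfrac{1}{N+1}\binom{N+1}{k+1}$ and reindexing, and the final bound $1-(1-p)^{N+1} \leq 1$. The only difference is cosmetic: the paper establishes the closed-form identity first and then applies the pointwise bound, whereas you proceed in the reverse order.
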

    \begin{proof}
        We first compute the auxiliary identity
        \begin{equation}\label{eq:aux}
            \mathbb{E}\left[\frac{1}{Z+1}\right] = \sum_{k=0}^{N}\frac{1}{k+1}\binom{N}{k}p^{k}(1-p)^{N-k} = \frac{1-(1-p)^{N+1}}{(N+1)p}.
        \end{equation}
        
        Indeed, using $\frac{1}{k+1}\binom{N}{k} = \frac{1}{N+1}\binom{N+1}{k+1}$,
        \begin{equation*}
            \sum_{k=0}^{N}\frac{1}{k+1}\binom{N}{k}p^{k}(1-p)^{N-k} = \frac{1}{N+1}\sum_{k=0}^{N}\binom{N+1}{k+1}p^{k}(1-p)^{N-k}.
        \end{equation*}
        
        Write $p^{k}=p^{k+1}/p$ and change index $j=k+1$:
        \begin{equation*}
            \sum_{k=0}^{N}\frac{1}{k+1}\binom{N}{k}p^{k}(1-p)^{N-k} = \frac{1}{(N+1)p}\sum_{j=1}^{N+1}\binom{N+1}{j}p^{j}(1-p)^{(N+1)-j}.
        \end{equation*}
        Now using the binomial formula $(x + y)^n = \sum_{k=1}^{n}\binom{n}{k}x^{k}(y)^{n-k} + y^n, \; \forall x, y \in \mathbb{R}, \; n \in \mathbb{N}$:
        \begin{equation*}
            \sum_{k=0}^{N}\frac{1}{k+1}\binom{N}{k}p^{k}(1-p)^{N-k} = \frac{1}{(N+1)p}\left(1-(1-p)^{N+1}\right),
        \end{equation*}
        
        which proves \eqref{eq:aux}.
        
        Next, for each integer $z\ge 0$ we have the pointwise bound
        \begin{equation}\label{eq:pointwise}
            \frac{\mathds{1}(z\ge 1)}{z} \leq \frac{2\mathds{1}(z\ge 1)}{z+1} \leq \frac{2}{z+1}.
        \end{equation}
        
        Taking expectations in \eqref{eq:pointwise} and using \eqref{eq:aux},
        \begin{equation*}  
            \mathbb{E}\left[\frac{\mathds{1}\left(Z\ge 1\right)}{Z}\right]
            \le 2\,\mathbb{E}\!\left[\frac{1}{Z+1}\right]
            =2\cdot\frac{1-(1-p)^{N+1}}{(N+1)p}
            \le \frac{2}{(N+1)p},
        \end{equation*}
        
        since $1-(1-p)^{N+1}\le 1$.
    \end{proof}

    The next result establishes key regularity properties of the per-sample loss function used in our fairness objective. 
    
    \begin{lemma}\label{app:lemma:sample_loss}
        For all $(x, s) \in \mathcal{X} \times \mathcal{S}$, the per-sample loss $\hat{h}(\cdot; x, s)$ as defined in \eqref{eq:hhat} is convex, $(2\sqrt{2} + \rho\sqrt{2K})$-Lipschitz with respect to $\lVert \cdot \rVert_2$ and $8/\beta$-smooth with respect to $\lVert \cdot \rVert_\infty$.
    \end{lemma}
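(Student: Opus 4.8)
The plan is to exploit the affine dependence of $\ell^s$ on $\lambda$ and reduce all three claims to the stated properties of $\mathrm{LSE}_\beta$. I would first write $\hat{h}(\lambda; x,s) = |\mathcal{S}|\,\mathrm{LSE}_\beta(c + A\lambda) + \rho\,\langle \mathbf{1}_{2K}, \lambda\rangle$, where $c = (\bar{\pi}_s \bar{p}_k(x,s))_{k \in [K]}$ is constant in $\lambda$ and $A \in \mathbb{R}^{K \times 2K}$ is the linear map sending $\lambda = (\lambda^{(1)},\lambda^{(2)})$ to $(-s\lambda^{(1)}_k + s\lambda^{(2)}_k)_{k \in [K]}$; explicitly, row $k$ of $A$ carries entry $-s$ in column $k$ and $+s$ in column $K+k$, with all other entries zero. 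Recall also that $|\mathcal{S}| = 2$ since $\mathcal{S} = \{-1,1\}$. Convexity is then immediate: $\mathrm{LSE}_\beta$ is convex, composing it with the affine map $\lambda \mapsto c + A\lambda$ preserves convexity, and the remaining term $\rho\,\langle\mathbf{1}_{2K},\lambda\rangle$ is linear, so a nonnegative combination of these is convex.

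For the Lipschitz bound I would differentiate through the chain rule. Since $\nabla\mathrm{LSE}_\beta$ is the softmax, the vector $p := \nabla\mathrm{LSE}_\beta(c+A\lambda)$ lies in the probability simplex, so $\|p\|_2 \le \|p\|_1 = 1$. Then $\nabla_\lambda \hat{h} = |\mathcal{S}|\,A^\top p + \rho\,\mathbf{1}_{2K}$, and the explicit form $A^\top p = (-sp_1,\dots,-sp_K,\,sp_1,\dots,sp_K)$ gives $\|A^\top p\|_2 = \sqrt{2}\,\|p\|_2 \le \sqrt{2}$ using $s^2 = 1$. The triangle inequality, together with $\|\rho\,\mathbf{1}_{2K}\|_2 = \rho\sqrt{2K}$, then yields $\|\nabla_\lambda\hat{h}\|_2 \le 2\sqrt{2} + \rho\sqrt{2K}$, which is exactly the claimed Lipschitz constant for $\|\cdot\|_2$ (whose dual norm is itself).

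The smoothness claim is the delicate one, and is where the operator-norm bookkeeping must be handled carefully, because smoothness with respect to $\|\cdot\|_\infty$ is measured in the dual norm $\|\cdot\|_1$. I would bound the Hessian $\nabla^2_\lambda\hat{h} = |\mathcal{S}|\,A^\top H A$, with $H := \nabla^2\mathrm{LSE}_\beta(c+A\lambda)$, directly in the mixed operator norm $\|\cdot\|_{\infty,1}$. The stated $1/\beta$-smoothness of $\mathrm{LSE}_\beta$ with respect to $\|\cdot\|_\infty$ translates to $\|H\|_{\infty,1} \le 1/\beta$. Chaining submultiplicatively through the intermediate $\ell_\infty$ and $\ell_1$ norms gives $\|A^\top H A\|_{\infty,1} \le \|A^\top\|_{1,1}\,\|H\|_{\infty,1}\,\|A\|_{\infty,\infty}$, and from the explicit description of $A$ both $\|A\|_{\infty,\infty}$ (the maximum absolute row sum of $A$) and $\|A^\top\|_{1,1}$ (the maximum absolute column sum of $A^\top$, i.e. the row sums of $A$) equal $2$. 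Multiplying by $|\mathcal{S}| = 2$ then yields $\|\nabla^2_\lambda\hat{h}\|_{\infty,1} \le 2 \cdot 2 \cdot (1/\beta) \cdot 2 = 8/\beta$, establishing $8/\beta$-smoothness via the Hessian characterization in Definition of $\beta$-smoothness.

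The main obstacle is precisely this operator-norm accounting: one must verify that the norm relevant to $\|\cdot\|_\infty$-smoothness is the mixed norm $\|\cdot\|_{\infty,1}$, that the submultiplicative chaining is set up with the correct intermediate norms ($\ell_\infty \to \ell_\infty \to \ell_1 \to \ell_1$), and that the two factors of $2$ coming from $A$ and $A^\top$, together with the $|\mathcal{S}| = 2$ prefactor, combine to exactly $8/\beta$ rather than a looser constant. The Lipschitz and convexity parts, by contrast, are routine once the affine structure and the probability-simplex property of the softmax are in place.
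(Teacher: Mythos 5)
Your proof is correct and follows essentially the same route as the paper's: the same affine decomposition $\ell^s(x;\lambda) = a^s(x) + J_s\lambda$ (your $A$ is the paper's $J_s$), the same chain-rule and simplex argument for the Lipschitz constant, and the same operator-norm bookkeeping producing $2 \cdot 2 \cdot \tfrac{1}{\beta} \cdot 2 = \tfrac{8}{\beta}$. The only cosmetic difference is in the smoothness step, where the paper bounds the gradient difference directly via the $\tfrac{1}{\beta}$-Lipschitzness of $\mathrm{softmax}_\beta$ from $(\mathbb{R}^K, \lVert\cdot\rVert_\infty)$ to $(\mathbb{R}^K, \lVert\cdot\rVert_1)$, whereas you pass through the Hessian $\lvert\mathcal{S}\rvert\,A^\top H A$ and its $\lVert\cdot\rVert_{\infty,1}$ operator norm — an equivalent second-order formulation of the same estimate.
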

    \begin{proof}
        Denote $\lambda = (\lambda^{(1)},\lambda^{(2)})^\top \in \mathbb R^{2K}$ and $\mu = (\mu^{(1)},\mu^{(2)})^\top \in \mathbb R^{2K}$.  For all $(x, y) \in \mathcal{X} \times \mathcal{S}$, note we can rewrite
        \begin{equation*}
            \ell^s(x;\lambda) = a^s(x) + J_s \lambda,\qquad
            J_s = \begin{bmatrix}-s I_K & s I_K\end{bmatrix}\in\mathbb R^{K\times 2K}, \qquad a^s(x) = \left(\bar{\pi}_s \bar{p}_1(x, s), \ldots, \bar{\pi}_s \bar{p}_K(x, s)\right)^\top.
        \end{equation*}
        \begin{enumerate}
            \item \emph{Convexity.} Let $(x, y) \in \mathcal{X} \times \mathcal{S}$, $\mathrm{LSE}_\beta$ is convex on $\mathbb{R}^K$ \citep[Examples~3.1.5]{Boyd_Vandenberghe_2004} and $\ell^s(x; \cdot)$ is affine in $\lambda$. Therefore $\mathrm{LSE}_\beta \circ \ell^s(x; \cdot)$ is convex in $\lambda$. The linear regularizer is convex as well, so $\hat{h}(\cdot; x, s)$ is convex.
            
            \item \emph{Lipschitzness.} By the chain rule, for all $(x, y) \in \mathcal{X} \times \mathcal{S}$,
                \begin{equation*}
                    \nabla_\lambda \mathrm{LSE}_\beta(\ell^s(x;\lambda)) = J_s^\top \mathrm{softmax}_\beta(\ell^s(x;\lambda)).
                \end{equation*}
                Hence
                \begin{equation*}
                    \nabla_\lambda \hat{h}(\lambda; x,s) = \lvert \mathcal{S} \rvert J_s^\top \mathrm{softmax}_\beta(\ell^s(x;\lambda)) + \rho \begin{bmatrix}\mathbf{1}_K\\[2pt] \mathbf{1}_K\end{bmatrix}.
                \end{equation*}
                
                 Since $J_s J_s^\top =  2 s^2I_K$, by Definition~\ref{def:operator-norms} we have $\lVert J_s \rVert_2 = \lVert J_s^\top \rVert_2 = \lvert s \rvert\sqrt{2}$. Also, $\lVert \mathrm{softmax}_\beta(\ell^s(x;\lambda)) \rVert_2 \leq 1$, and $s \in \{-1 ,1\}$. Thus for all $(x, y) \in \mathcal{X} \times \mathcal{S}$ we have
                \begin{equation*}
                    \lVert \lvert \mathcal{S} \rvert J_s^\top \mathrm{softmax}_\beta(\ell^s(x;\lambda)) \rVert_2 \leq \lvert \mathcal{S} \rvert \, \lVert J_s^\top\rVert_{2} \lVert \mathrm{softmax}_\beta(\ell^s(x;\lambda)) \rVert_2 \leq \lvert \mathcal{S} \rvert \lvert s \rvert\sqrt{2} = 2 \sqrt{2}.
                \end{equation*}
                The regularizer contributes $\rho \left\lVert \begin{smallmatrix} \mathbf{1}_K \\ \mathbf{1}_K \end{smallmatrix} \right\rVert_2 = \rho\sqrt{2K}$. Combining gives the stated Lipschitz constant.

                \begin{align*}
                    \left\lVert \nabla_\lambda \hat{h}(\lambda; x,s)\right\rVert_2 &= \left\lVert \lvert \mathcal{S} \rvert J_s^\top \mathrm{softmax}_\beta(\ell^s(x;\lambda)) + \rho \begin{bmatrix}\mathbf{1}_K\\[2pt] \mathbf{1}_K\end{bmatrix} \right\rVert_2 \\
                    &\leq \left\lVert \lvert \mathcal{S} \rvert J_s^\top \mathrm{softmax}_\beta(\ell^s(x;\lambda))\right\rVert_2 + \rho \left\lVert\begin{matrix}\mathbf{1}_K\\[2pt] \mathbf{1}_K\end{matrix} \right\rVert_2 \\
                    &\leq 2 \sqrt{2} + \rho\sqrt{2K}.
                \end{align*}
                
            \item \emph{Smoothness.}  Throughout, recall that $\mathrm{LSE}_\beta$ is $\tfrac{1}{\beta}$-smooth, i.e., its gradient, $\mathrm{softmax}_\beta$, is $\tfrac{1}{\beta}$-Lipschitz from $(\mathbb{R}^{K}, \lVert \cdot \rVert_\infty)$ to $(\mathbb{R}^{K}, \lVert \cdot \rVert_1)$. Moreover, by Definition~\ref{def:operator-norms}, $\lVert J_s \lVert_\infty = 2 \lvert s \rvert$ and $\lVert J_s^\top \lVert_1 = \lVert J_s \lVert_\infty = 2\lvert s \rvert$. 
            Therefore, we have for all $(x, y) \in \mathcal{X} \times \mathcal{S}$,
            \begin{equation*}
                 \nabla_\lambda \hat{h}(\lambda;x,s)-\nabla_\lambda \hat{h}(\mu;x,s) = \lvert \mathcal{S} \rvert J_s^\top \left(\mathrm{softmax}_\beta(\ell^s(x;\lambda))-\mathrm{softmax}_\beta(\ell^s(x;\mu))\right).
            \end{equation*}
            Taking $\ell_1$ norms and using $\lVert A v \rVert_1 \leq \lVert A \rVert_1 \lVert v \rVert_1 \; \forall v$,
            \begin{equation*}
                \left\lVert\nabla_\lambda \hat{h}(\lambda;x,s)-\nabla_\lambda \hat{h}(\mu;x,s)\right\rVert_1
                \leq \lvert\mathcal{S}\rvert \lVert J_s^\top \rVert_{1} \left\lVert\mathrm{softmax}_\beta(\ell^s(x;\lambda))-\mathrm{softmax}_\beta(\ell^s(x;\mu))\right\rVert_{1}.
            \end{equation*}
            By $\tfrac{1}{\beta}$-Lipschitzness of $\mathrm{softmax}_\beta$,
            \begin{equation*}
                \left\lVert \mathrm{softmax}_\beta(\ell^s(x;\lambda))-\mathrm{softmax}_\beta(\ell^s(x;\mu)) \right\rVert_{1} \leq \tfrac{1}{\beta} \left\lVert \ell^s(x;\lambda)-\ell^s(x;\mu) \right\rVert_{\infty}.
            \end{equation*}
            Since $\ell^s(x;\cdot)$ is linear with  $J_s$, we have
            \begin{equation*}
                \left\lVert \ell^s(x;\lambda)-\ell^s(x;\mu) \right\rVert_{\infty}
                = \lVert J_s(\lambda-\mu) \rVert_{\infty} \leq \lVert J_s \rVert_{\infty} \lVert \lambda-\mu \rVert_{\infty}.
            \end{equation*}
            Combining the above,
            \begin{equation*}
                \bigl\|\nabla_\lambda \hat{h}(\lambda;x,s)-\nabla_\lambda \hat{h}(\mu;x,s)\bigr\|_1
                \le \frac{|\mathcal{S}|}{\beta}\,\|J_s^\top\|_{1}\,\|J_s\|_{\infty}\,\|\lambda-\mu\|_{\infty}.
            \end{equation*}
            Using $\lVert J_s^\top \rVert_{1} = \lVert J_s \rVert_{\infty}=2 \lvert s \rvert$ and $\lvert \mathcal{S} \rvert = 2$ with $\lvert s \rvert = 1$,
            \begin{equation*}
                \left\lVert \nabla_\lambda \hat{h}(\lambda;x,s)-\nabla_\lambda \hat{h}(\mu;x,s) \right\rVert_1
                \leq \frac{8}{\beta} \lVert \lambda-\mu \rVert_{\infty}.
            \end{equation*}
            Thus, $\nabla_\lambda \hat{h}(\cdot;x,s)$ is $\tfrac{8}{\beta}$-Lipschitz.
        \end{enumerate} 
    \end{proof}
    
    The following lemma establishes the existence of a solution to the empirical objective $\hat{H}_\beta$.
    \begin{lemma}[Existence of a minimizer]\label{app:lemma:minimum}
        There exists at least one solution, that lies in a compact subset $[0,C_\lambda]^{2K}\subset\mathbb{R}^{2K}_+$ for some $C_\lambda>0$, to the problem $\min_{\lambda\in\mathbb{R}^{2K}_+} \hat{H}_\beta(\lambda)$, where
        \begin{equation*}
            \hat{H}_\beta(\lambda^{(1)},\lambda^{(2)})
            =\sum_{s\in\mathcal S}\frac{1}{N_s}\sum_{x\in D_{N\mid\mathcal X,s}}
            \mathrm{LSE}_\beta \left(\ell^s(x;\lambda)\right)
            + \rho\sum_{k=1}^K\left(\lambda^{(1)}_k+\lambda^{(2)}_k\right).
        \end{equation*}
    \end{lemma}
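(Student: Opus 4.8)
The plan is to invoke the Weierstrass extreme value theorem, which requires showing that $\hat{H}_\beta$ is continuous with bounded sublevel sets over the closed orthant $\mathbb{R}^{2K}_+$. Continuity is immediate, since $\hat{H}_\beta$ is a finite sum of compositions of the smooth map $\mathrm{LSE}_\beta$ with affine maps $\ell^s(x;\cdot)$, plus a linear term. Convexity follows directly from Lemma~\ref{app:lemma:sample_loss} (each per-sample loss $\hat{h}(\cdot;x,s)$ is convex, and $\hat{H}_\beta$ is a non-negative combination of such terms); I will reuse it to handle the degenerate case below. The real content is to rule out directions of recession.

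First I would record that $\hat{H}_\beta$ is bounded below. Using $\mathrm{LSE}_\beta(z)\ge \max_k z_k$ and pairing, for a fixed index $k_0$, one $s=1$ summand with one $s=-1$ summand, the contributions $-\left(\lambda^{(1)}_{k_0}-\lambda^{(2)}_{k_0}\right)$ and $+\left(\lambda^{(1)}_{k_0}-\lambda^{(2)}_{k_0}\right)$ cancel, leaving only the bounded, non-negative terms $\bar{\pi}_s\bar{p}_{k_0}$. Hence $\inf_{\mathbb{R}^{2K}_+}\hat{H}_\beta>-\infty$.

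The core step is coercivity. I would examine $\hat{H}_\beta$ along an arbitrary ray $\lambda = t v$ with $v\in\mathbb{R}^{2K}_+$, $\|v\|=1$, $t\to\infty$. Writing $a_k = v^{(1)}_k - v^{(2)}_k$, the leading-order coefficient in $t$ of the $s=1$ block is $\max_k(-a_k)$ and that of the $s=-1$ block is $\max_k a_k$, whose sum is $\max_k a_k - \min_k a_k \ge 0$. The regularizer contributes $\rho\sum_k\left(v^{(1)}_k+v^{(2)}_k\right)=\rho\|v\|_1$. Since $v\ge 0$ and $v\ne 0$ force $\|v\|_1>0$, for $\rho>0$ the total recession coefficient is at least $\rho\|v\|_1$, which is bounded below by a positive constant uniformly over the (compact) unit sphere of the orthant. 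This makes every sublevel set bounded.

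The main obstacle is precisely the degenerate directions along which the $\mathrm{LSE}$ part is invariant: increasing $\lambda^{(1)}_k$ and $\lambda^{(2)}_k$ by the same amount leaves each $a_k$ unchanged, while a global shift $a\mapsto a+c\mathbf{1}$ leaves the sum of the two $\mathrm{LSE}$ blocks unchanged. The first family is neutralized by the $\ell_1$ regularizer above; for the edge case $\rho=0$, where the second family is a genuine flat direction, I would pass to the difference variable $a = \lambda^{(1)}-\lambda^{(2)}\in\mathbb{R}^K$ (at fixed $a$ the regularizer is minimized over the orthant by setting $\min\{\lambda^{(1)}_k,\lambda^{(2)}_k\}=0$), observe via the recession computation that the reduced convex objective is coercive modulo the line spanned by $\mathbf{1}$ and constant along it, and select the minimizer with $\min_k a_k=0$; recovering $\lambda$ then gives a bounded minimizer. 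Finally, restricting $\hat{H}_\beta$ to the nonempty set $\{\hat{H}_\beta\le\hat{H}_\beta(0)\}\cap\mathbb{R}^{2K}_+$, which is compact by the above, yields a point attaining the minimum, and any uniform bound on this set furnishes the constant $C_\lambda$ with the minimizer lying in $[0,C_\lambda]^{2K}$.
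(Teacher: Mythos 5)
Your proof is correct and follows the same overall strategy as the paper's: continuity and convexity of $\hat{H}_\beta$ (via Lemma~\ref{app:lemma:sample_loss}), a lower bound on the $\mathrm{LSE}_\beta$ part obtained from $\mathrm{LSE}_\beta(z)\ge\max_k z_k$ together with the cancellation of the $\lambda$-dependent terms across $s\in\{-1,1\}$, and then existence of a minimizer on a compact sublevel set. The difference is in how coercivity is extracted: the paper derives the global bound $\hat{H}_\beta(\lambda)\ge\rho\sum_{k}(\lambda^{(1)}_k+\lambda^{(2)}_k)$ by exchanging maxima and averages, and then invokes the standard existence result for convex, lower semicontinuous, coercive functions; you instead run a recession analysis along rays $\lambda=tv$, which amounts to the same inequality at leading order. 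Where your write-up genuinely adds something is the edge case $\rho=0$: there the paper's bound only yields $\hat{H}_\beta\ge 0$, which is \emph{not} coercivity --- the objective is constant along $(\lambda^{(1)},\lambda^{(2)})\mapsto(\lambda^{(1)}+c\mathbf{1}_K,\lambda^{(2)}+c\mathbf{1}_K)$ and, at $\rho=0$, also under the shift $a\mapsto a+c\mathbf{1}_K$ of the difference variable $a=\lambda^{(1)}-\lambda^{(2)}$ --- so the paper's proof as written implicitly assumes $\rho>0$, while the statement and Theorem~\ref{thm:fairness} allow $\rho\ge 0$. Your reduction to the variable $a$, coercivity modulo $\mathrm{span}(\mathbf{1}_K)$, and the normalization $\min_k a_k=0$ (with $\lambda^{(2)}=0$) close exactly this gap and still produce a minimizer lying in some $[0,C_\lambda]^{2K}$. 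One small caveat, common to both arguments: asserting that the residual terms $\bar{\pi}_s\bar{p}_{k_0}$ are non-negative implicitly assumes $\bar{\pi}_s\ge 0$, which the Gaussian privatization of $\hat{\pi}_s$ does not strictly guarantee; what is actually needed is only that these terms are bounded, and that holds regardless, so both proofs survive with this cosmetic fix.
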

    \begin{proof}
        First, $\hat{H}_\beta$ is continuous and convex in $(\lambda^{(1)},\lambda^{(2)})$ from Lemma~\ref{app:lemma:sample_loss}.  
        We now show that $\hat{H}_\beta$ is coercive.
        
        Recall that for any $x\in\mathbb{R}^n$,
        \begin{equation*}
            \max_{i\in[n]} x_i \leq \mathrm{LSE}_\beta(x)
            \leq \max_{i\in[n]} x_i + \log n .
        \end{equation*}
        Therefore,
        \begin{equation*}
            \sum_{s \in \mathcal{S}} \frac{1}{N_s}\sum_{x\in D_{N \mid \mathcal{X},s}}
            \mathrm{LSE}_\beta\left(\ell^s(x;\lambda)\right) \geq \sum_{s \in \mathcal{S}} \frac{1}{N_s}\sum_{x\in D_{N\mid\mathcal X,s}}
            \max_{k\in[K]}\left\{\bar\pi_s \bar p_k(x,s) - s \left(\lambda^{(1)}_k - \lambda^{(2)}_k\right)\right\}.
        \end{equation*}
        Using that averages of maxima dominate maxima of averages and then exchanging the order of
        $\max$ and $\sum$,
        \begin{align*}
            \sum_{s \in \mathcal{S}} \frac{1}{N_s} \sum_{x\in D_{N \mid \mathcal{X},s}} \max_{k}\{\bar{\pi}_s \bar{p}_k(x,s) - s(\lambda^{(1)}_k - \lambda^{(2)}_k)\} &\geq \sum_{s \in \mathcal{S}} \max_{k}\left\{ \frac{1}{N_s} \sum_{x\in D_{N \mid \mathcal{X},s}} \left(\bar{\pi}_s \bar{p}_k(x,s) - s(\lambda^{(1)}_k - \lambda^{(2)}_k) \right)\right\}\\
            &\geq \max_{k}\left\{ \sum_{s \in \mathcal{S}} \left(\frac{ \bar{\pi}_s}{N_s} \sum_{x\in D_{N \mid \mathcal{X},s}} \bar{p}_k(x,s) - s (\lambda^{(1)}_k - \lambda^{(2)}_k)\right)\right\}\\
        \end{align*}
        Since $s \in \{-1, 1\}$ we have
        \begin{align*}
            \sum_{s \in \mathcal{S}} \frac{1}{N_s} \sum_{x\in D_{N \mid \mathcal{X},s}} \max_{k}\{\bar{\pi}_s \bar{p}_k(x,s) - s(\lambda^{(1)}_k - \lambda^{(2)}_k)\} &\geq \max_{k}\left\{\sum_{s \in \mathcal{S}} \frac{\bar{\pi}_s}{N_s} \sum_{x\in D_{N \mid \mathcal{X},s}} \bar{p}_k(x,s)\right\}\\
            &\geq 0.
        \end{align*}
        Hence,
        \begin{equation*}
            \hat H_\beta(\lambda^{(1)},\lambda^{(2)}) \geq \rho\sum_{k=1}^K\left(\lambda^{(1)}_k + \lambda^{(2)}_k\right),
        \end{equation*}
        which shows that $\hat H_\beta$ is coercive on $\mathbb{R}^{2K}_+$. By standard results for
        lower semicontinuous, coercive, convex functions, a minimizer $(\hat\lambda^{(1)},\hat\lambda^{(2)})$ exists and lies in a compact subset $[0,C_\lambda]^{2K} \subset \mathbb{R}^{2K}_+$ for some $C_\lambda>0$.
    \end{proof}

    \subsection{DP-SGD}
        We recall below the standard DP-SGD algorithm and two key theoretical guarantees used in our analysis.
        \begin{algorithm}[H]
            \caption{Differentially Private Stochastic Gradient Descent (DP-SGD)}
            \label{app:algo:dp-sgd}
            \KwInput{
                Examples $\{(x_1, y_1), \ldots, (x_n, y_n)\} = \{z_1, \ldots, z_n\}$, loss function $\mathcal{L}(\theta) =  \tfrac{1}{n} \sum_{i=1}^n \ell(\theta; z_i)$. Parameters: learning rate $\eta_t$, noise scale $\sigma$, group size $b$.
            }
            \textbf{Initialize} $\theta_0$ randomly \\
            
            \For{$t \in [T]$}{
                Take a random sample $b_t$ with sampling probability $b/n$ \\
                \textbf{Compute gradient}\\
                For each $i \in b_t$, compute $g_t(z_i) \gets \nabla_{\theta^t}\ell(\theta_t; z_i)$ \\
                \textbf{Add noise}\\
                $\tilde{g}_t \gets \frac{1}{b}\left(\sum_{i \in b_t} g_t(z_i) + \mathcal{N}(0, \sigma^2 I_p)\right)$\\
                \textbf{Descent}\\
                $\theta_{t+1} \gets \Pi_{\Theta}(\theta_t - \eta_t \tilde{g}_t)$
            }
            \KwOutput{
                $\theta_T$
            }
        \end{algorithm} 
            
        \begin{theorem}[RDP for DP-SGD \cite{altschuler2023}]\label{app:theorem:soft-dpsgd}
            Let $\Theta \subset \mathbb{R}^p$ be a convex set of diameter bounded by $D$, and consider optimizing convex losses $\ell$ over $\Theta$ that are $M$-smooth and have a $\ell_2$-sensitivity $\Delta_2 < \infty$. For any number of iterations $T$, dataset size $n \in \mathbb{N}$, batch size $b \leq n$, stepsize $\eta \leq 2/M$, noise parameter $\sigma > 0$, and initialization $\theta_0 \in \Theta$, DP-SGD satisfies $(\alpha, \varepsilon)$-RDP for all $\alpha \geq 1$ and
            \begin{equation*}
                \varepsilon \leq \min\left\{T Q , \min_{\underset{\sigma_1^2 + \sigma_2^2 = \sigma^2}{\sigma_1, \sigma_2 > 0} } \min_{\tilde{T} \in [T-1]} \tilde{T}Q + \frac{\alpha D^2}{2 \eta^2 \sigma_1^2 \tilde{T}} \right\}
            \end{equation*}
            with $Q = S_\alpha\!\left(\frac{b}{n}, \frac{b \sigma_2}{\Delta_2}\right)$ where $S_\alpha\!\left(q, \sigma \right) := D_\alpha(\mathcal{N}(0, \sigma^2) \Vert (1-q)\mathcal{N}(0, \sigma^2) + q \mathcal{N}(1, \sigma^2))$.
        \end{theorem}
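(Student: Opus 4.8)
The plan is to reproduce the two-regime analysis of \cite{altschuler2023}, bounding the Rényi divergence $D_\alpha(\mathrm{Law}(\theta_T)\,\|\,\mathrm{Law}(\theta_T'))$ between the outputs of DP-SGD on adjacent datasets $D \approx D'$. I would first isolate the two structural facts that drive the argument. Writing the update as $\theta_{t+1} = \Pi_\Theta(\phi_t(\theta_t) + \eta\zeta_t)$, with $\phi_t(x) = x - \tfrac{\eta}{b}\sum_{i \in b_t}\nabla\ell(x;z_i)$ the subsampled gradient step and $\zeta_t \sim \mathcal{N}(0,\sigma^2 I_p)$ the Gaussian injected on the averaged gradient, the effective parameter-space perturbation has standard deviation $\eta\sigma$. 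Second, since every $\ell(\cdot;z_i)$ is convex and $M$-smooth and $\eta \le 2/M$, co-coercivity of the gradient yields $\|x - \eta\nabla\ell(x;z) - (y-\eta\nabla\ell(y;z))\| \le \|x-y\|$; composing with the non-expansive projection $\Pi_\Theta$ shows that each noise-free map $\phi_t$, and hence $\Pi_\Theta\circ\phi_t$, is $1$-Lipschitz almost surely over the batch randomness.

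The first branch $TQ$ is the naive composition bound. Conditioning on the sampled batch, the only data-dependent object at step $t$ is the averaged gradient $\tfrac1b\sum_{i\in b_t}\nabla\ell(\theta_t;z_i)$, whose $\ell_2$-sensitivity across $D\approx D'$ is $\Delta_2/b$; since the differing index lands in the batch with probability $b/n$, step $t$ is a subsampled Gaussian mechanism with normalized noise multiplier $b\sigma/\Delta_2$, hence Rényi cost $S_\alpha(b/n, b\sigma/\Delta_2) = Q$. Adaptive composition of Rényi divergences across the $T$ steps gives $\varepsilon \le TQ$.

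The second branch is the privacy-amplification-by-iteration bound, obtained via the shifted Rényi divergence calculus of Feldman--Mironov--Talwar as sharpened in \cite{altschuler2023}. Splitting the per-step Gaussian as $\sigma^2 = \sigma_1^2 + \sigma_2^2$ into two independent pieces, I would reserve the $\sigma_2$-piece for the subsampled-Gaussian privacy of the last $\tilde T$ steps, composing to $\tilde T Q$ with $Q = S_\alpha(b/n, b\sigma_2/\Delta_2)$, which accounts for the influence of the differing point during those steps, and spend the $\sigma_1$-piece to dissolve the influence of the first $T-\tilde T$ steps. The latter is the crux: rather than tracking the early iterations, I bound the gap between the two trajectory laws at time $T-\tilde T$ by the diameter, $W_\infty \le D$, and then drive the shifted divergence $D_\alpha^{(z)}$ down to a genuine divergence over the final $\tilde T$ iterations using two lemmas at each step: the contraction lemma, which passes the shift through the $1$-Lipschitz map $\Pi_\Theta\circ\phi_t$ without growth, and the shift-reduction lemma $D_\alpha^{(z-a)}(\cdot * \mathcal N_\tau \,\|\, \cdot * \mathcal N_\tau) \le D_\alpha^{(z)}(\cdot\,\|\,\cdot) + \tfrac{\alpha a^2}{2\tau^2}$ with $\tau = \eta\sigma_1$, which shrinks the shift by $a$ per step at that cost. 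Reducing the initial shift $D$ uniformly by $a = D/\tilde T$ over the $\tilde T$ steps (optimal by convexity of $a\mapsto a^2$) telescopes to total cost $\tilde T\cdot\tfrac{\alpha (D/\tilde T)^2}{2\eta^2\sigma_1^2} = \tfrac{\alpha D^2}{2\eta^2\sigma_1^2\tilde T}$, leaving a zero terminal shift. Minimizing over the split $\sigma_1^2+\sigma_2^2=\sigma^2$ and over $\tilde T \in [T-1]$, then taking the minimum against $TQ$, yields the stated bound.

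The main obstacle is the clean interleaving of subsampling amplification with the shifted-divergence machinery, since the two operate in different regimes: the subsampled Gaussian produces a Gaussian-mixture one-step divergence, whereas the contraction and shift-reduction lemmas are phrased for a deterministic Lipschitz map followed by a single Gaussian convolution. Reconciling them requires conditioning on the batch-selection randomness, checking that the random map $\Pi_\Theta\circ\phi_t$ is non-expansive path-by-path so the contraction lemma applies almost surely, and apportioning the single injected Gaussian into its shift-reduction role ($\sigma_1$) and its per-step privacy role ($\sigma_2$) consistently. Pinning the constants also hinges on tracking the $\eta$ and $1/b$ rescalings uniformly, so that the parameter-space noise $\eta\sigma_1$ enters the shift-reduction cost while the averaged-gradient sensitivity $\Delta_2/b$ enters the multiplier $b\sigma_2/\Delta_2$ defining $Q$.
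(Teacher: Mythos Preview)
The paper does not prove this theorem: it is stated verbatim as a cited result from \cite{altschuler2023} and used as a black box in the proof of Theorem~\ref{theorem:rdp-algo}. There is therefore no ``paper's own proof'' to compare against.

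That said, your sketch is a faithful outline of the argument in \cite{altschuler2023}: the $TQ$ branch via adaptive composition of subsampled-Gaussian steps, and the second branch via the shifted-Rényi-divergence calculus of Feldman--Mironov--Talwar, with the key observation that projection onto a set of diameter $D$ bounds the $W_\infty$ distance between the two chains at any intermediate time by $D$, so the first $T-\tilde T$ steps can be ``forgotten'' and the residual shift dissolved over the last $\tilde T$ steps using the $\sigma_1$-portion of the noise. Your handling of the noise split $\sigma^2=\sigma_1^2+\sigma_2^2$, the non-expansiveness of $\Pi_\Theta\circ\phi_t$ under $\eta\le 2/M$, and the telescoping $\sum a_t^2$ cost with uniform $a_t=D/\tilde T$ are all the right ingredients. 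The one point worth tightening is the interaction you flag at the end: in \cite{altschuler2023} the subsampling and the shift-reduction are not interleaved step-by-step as independent effects, but rather the shifted divergence is tracked through a \emph{Contractive Noisy Iteration} in which the subsampled-Gaussian per-step divergence appears as an additive term alongside the shift-reduction cost; your conditioning-on-the-batch description is morally right but the formal packaging is slightly different.
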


        \begin{theorem}[Utility guarantees for DP-SGD \cite{bassily2014privateerm}]\label{app:lemma:utility_dpsgd}
            Let $\Theta \subset \mathbb{R}^p$ be a convex domain of diameter bounded by $D$, and let the loss function $\ell$ be convex and $L$-Lipschitz over $\Theta$. For $T > 0$ and $\gamma_t = \frac{D}{\sqrt{t\left(L^2 + \tfrac{\sigma^2}{b^2}p\right)}}$, there exists a constant C such that DP-SGD guarantees:
            \begin{equation*}
                \mathbb{E}[\mathcal{L}(\theta_T)] - \min_{\theta \in \Theta}\mathcal{L}(\theta) \leq C\left(\tfrac{D \log T}{\sqrt{T}}(L + \tfrac{\sigma}{b}p)\right)
            \end{equation*}
        \end{theorem}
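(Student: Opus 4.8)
The plan is to read DP2DP's inner loop as ordinary \emph{projected} stochastic gradient descent on the convex objective $\mathcal{L}$, driven by the noisy gradient oracle $\tilde g_t$, and then to invoke a \emph{last-iterate} convergence analysis for the $1/\sqrt{t}$ step-size schedule. The last-iterate focus matters: Algorithm~\ref{app:algo:dp-sgd} returns $\theta_T$, not an average, and it is precisely the last iterate (rather than the averaged one) that incurs the extra $\log T$ factor appearing in the statement. First I would pin down the two relevant statistics of the private gradient. Conditioned on $\theta_t$, the minibatch-plus-noise direction $\tilde g_t=\tfrac1b\bigl(\sum_{i\in b_t}\nabla\ell(\theta_t;z_i)+Z_t\bigr)$ is an \emph{unbiased} estimator of $\nabla\mathcal{L}(\theta_t)$, since subsampling at inclusion rate $b/n$ followed by the $1/b$ normalization reproduces the empirical average and $Z_t$ is mean-zero. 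For the second moment, $L$-Lipschitzness gives $\lVert\nabla\ell(\theta_t;z_i)\rVert_2\le L$, so the averaged minibatch gradient has norm at most $L$, while the independent Gaussian noise contributes $\mathbb{E}\lVert Z_t/b\rVert_2^2=p\sigma^2/b^2$; hence $\mathbb{E}[\lVert\tilde g_t\rVert_2^2\mid\theta_t]\le L^2+p\sigma^2/b^2=:G^2$. Observe that the schedule in the statement is exactly $\gamma_t=D/(G\sqrt t)$, the canonical SGD step size for a domain of diameter $D$ and gradient second moment $G^2$.

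Second, I would record the textbook one-step inequality. Nonexpansiveness of the Euclidean projection together with the expansion of $\lVert\theta_{t+1}-u\rVert_2^2$ yields, for any fixed comparator $u\in\Theta$,
\[
\mathbb{E}\big[\langle\tilde g_t,\theta_t-u\rangle\big]\le\frac{\mathbb{E}\lVert\theta_t-u\rVert_2^2-\mathbb{E}\lVert\theta_{t+1}-u\rVert_2^2}{2\gamma_t}+\frac{\gamma_t G^2}{2},
\]
and, by unbiasedness combined with convexity of $\mathcal{L}$, the left-hand side lower-bounds $\mathbb{E}[\mathcal{L}(\theta_t)-\mathcal{L}(u)]$. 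Taking $u=\theta^\star\in\arg\min_\Theta\mathcal{L}$, summing over $t\in[T]$, and telescoping against the increasing sequence $1/\gamma_t$ (using $\lVert\theta_t-\theta^\star\rVert_2\le D$ and $\sum_{t\le T}1/\sqrt t\le 2\sqrt T$) recovers the standard \emph{average}-iterate guarantee $O(GD/\sqrt T)$, with no logarithm. This is the routine part of the argument.

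The genuine obstacle is upgrading this average-iterate bound to a guarantee on the last iterate $\theta_T$, which is what the theorem actually certifies. Here I would follow the suffix-averaging technique of Shamir--Zhang: define the suffix averages $S_k=\tfrac{1}{k+1}\sum_{i=0}^{k}\theta_{T-i}$, so that $S_0=\theta_T$ and $S_{T-1}$ is the full average, use the convexity identity $S_k=\tfrac{1}{k+1}\theta_{T-k}+\tfrac{k}{k+1}S_{k-1}$ to control each increment $\mathbb{E}[\mathcal{L}(S_{k-1})-\mathcal{L}(S_k)]$, and bound the resulting window regret by applying the one-step inequality on the suffix $\{T-k,\dots,T\}$ with the comparator taken to be the relevant suffix average. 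Each window contributes a term of order $GD/(k\sqrt T)$, and telescoping back from $S_{T-1}$ to $S_0=\theta_T$ sums a harmonic series $\sum_{k=1}^{T}1/k=O(\log T)$, which is exactly where the logarithmic factor enters and produces $\mathbb{E}[\mathcal{L}(\theta_T)]-\min_\Theta\mathcal{L}=O(GD\log T/\sqrt T)$. This telescoping must be carried out carefully to keep a single $\log T$ and a clean absolute constant.

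Finally, I would substitute the effective gradient bound, using $\sqrt{a+c}\le\sqrt a+\sqrt c$ and $\sqrt p\le p$ for $p\ge1$ to write $G=\sqrt{L^2+p\sigma^2/b^2}\le L+\sigma\sqrt p/b\le L+\sigma p/b$. Plugging this into $O(GD\log T/\sqrt T)$ gives the stated form $C\,\tfrac{D\log T}{\sqrt T}\bigl(L+\tfrac{\sigma}{b}p\bigr)$ for an absolute constant $C$. In summary, the subsampling/second-moment bookkeeping and the one-step projected-SGD lemma are standard; the only delicate step is the Shamir--Zhang last-iterate telescoping, which converts the average-iterate rate into the last-iterate rate at the cost of the $\log T$ factor.
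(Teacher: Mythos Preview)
The paper does not supply a proof of this statement; it is quoted as an external result from \cite{bassily2014privateerm} and used as a black box inside the proof of Lemma~\ref{app:lemma:bound_dp}. Your proposal therefore goes beyond what the paper itself does. The approach you outline is the correct one: reducing DP-SGD to projected SGD driven by an unbiased, bounded-second-moment oracle and then invoking the Shamir--Zhang suffix-averaging analysis is precisely the standard route to a last-iterate guarantee, and it is exactly where the $\log T$ factor enters. One minor technical wrinkle worth flagging: under the Poisson subsampling of Algorithm~\ref{app:algo:dp-sgd} the batch size $|b_t|$ is random, so your deterministic claim $\bigl\lVert\tfrac{1}{b}\sum_{i\in b_t}\nabla\ell(\theta_t;z_i)\bigr\rVert_2\le L$ fails whenever $|b_t|>b$; the second moment must be computed directly in expectation and picks up an additional $O(L^2/b)$ term, which is harmless for the final rate. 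Under fixed-size minibatch sampling (as in the original Bassily et al.\ analysis) this issue does not arise and your bound $G^2=L^2+p\sigma^2/b^2$ is exact.
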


\section{DETAILS ON THE PRIVACY ANALYSIS}\label{app:privacy}
    
    In this section, we provide the detailed privacy guarantees supporting Theorem~\ref{theorem:rdp-algo}. We begin by quantifying the sensitivity of the quantities released, then combine them via Rényi differential privacy composition.
    
    \subsection{Proof Outline}
        The overall privacy proof proceeds in three stages:
        \begin{enumerate}
            \item Bounding the sensitivity and privacy cost of the privatized group proportions;
            \item Deriving the Rényi differential privacy guarantee for the DP-SGD updates in Phase~2;
            \item Combining both mechanisms through standard Rényi differential privacy composition.
        \end{enumerate}
        
        \subsubsection{Group Proportions}\label{app:subsubsec:pi}
            We first analyze the sensitivity and privacy guarantees of the group proportion estimates $(\bar{\pi}_s)_{s \in \mathcal{S}}$.
            
            \begin{lemma}\label{app:lemma:sensi-pi}
                The $\ell_2$-sensitivity of $\phi_s : D_N \to \hat{\pi}_s$ is $1/N$ for each $s \in \mathcal{S}$.
            \end{lemma}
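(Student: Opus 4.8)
The plan is to directly evaluate the maximal change in $\hat\pi_s$ under the replacement of a single sample, using the definition of $\ell_2$-sensitivity recalled in~\eqref{eq:sensitivity}. Since $\phi_s(D_N) = \hat\pi_s = \tfrac{1}{N}\sum_{i=1}^N \mathds{1}(\tilde s_i = s)$ is a scalar-valued map, its $\ell_2$-sensitivity reduces to $\Delta(\phi_s) = \max_{D_N \approx D_N'} |\phi_s(D_N) - \phi_s(D_N')|$, so the whole argument amounts to bounding an absolute value rather than a genuine vector norm.

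First I would fix an arbitrary pair of adjacent datasets $D_N \approx D_N'$. By the adjacency convention adopted in Section~\ref{sec:background}, $D_N$ and $D_N'$ have the same size $N$ and differ in exactly one coordinate, say the $j$-th, so that $(\tilde x_i, \tilde s_i) = (\tilde x_i', \tilde s_i')$ for all $i \neq j$. Substituting into the definition of $\phi_s$, all matching terms cancel and I am left with
\[
\phi_s(D_N) - \phi_s(D_N') = \frac{1}{N}\bigl(\mathds{1}(\tilde s_j = s) - \mathds{1}(\tilde s_j' = s)\bigr).
\]
The key step is then to observe that the difference of two indicator functions lies in $\{-1,0,1\}$, which immediately yields $|\phi_s(D_N) - \phi_s(D_N')| \le 1/N$. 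Taking the maximum over all adjacent pairs gives $\Delta(\phi_s) \le 1/N$.

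To conclude that the sensitivity equals exactly $1/N$ rather than merely being bounded by it, I would exhibit a pair attaining the bound: take $D_N$ containing a point whose sensitive attribute equals $s$, and let $D_N'$ be obtained by replacing that point with one whose attribute differs from $s$ (possible since $|\mathcal{S}| = 2$). Then the indicator difference equals $1$ and the bound is achieved. I do not anticipate any substantive obstacle here; the only point requiring minor care is to invoke the correct adjacency notion (replacement of one element at fixed size, rather than insertion or deletion), since a different convention would alter the normalization and hence the resulting constant. The computation itself is elementary.
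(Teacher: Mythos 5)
Your proposal is correct and follows essentially the same argument as the paper: cancel all but the modified term in the difference of the two empirical frequencies, bound the indicator difference by $1$, and note the bound is attained when the replaced record switches membership in group $s$. The only cosmetic difference is that you spell out the worst-case pair explicitly, which the paper states in one sentence.
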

            \begin{proof}
                For all $s \in \mathcal{S}$, recall that $\hat{\pi}_s = \tfrac{1}{N}\sum_{i=1}^N \mathds{1}(\tilde{s}_i = s)$. 
                Let $D_N = \{(\tilde{x}_j, \tilde{s}_j) \; \vert \; j \in [N]\}$ and $\mathcal{D}'_N = \{(\tilde{x}'_j, \tilde{s}'_j) \; \vert \; j \in [N]\}$ be two neighboring datasets that differ in exactly one entry, say at index $j$.
                All terms cancel in the difference of the two estimates except the $j$-th term, i.e.,
                \begin{align*}
                    \Delta_2(\phi_s) &= \left\lvert \frac{1}{N}\sum_{i=1}^N \mathds{1}(\tilde{s}_i=s) - \frac{1}{N}\sum_{i=1}^N \mathds{1}(\tilde{s}'_i=s) \right\rvert \\
                    &= \frac{1}{N}\, \bigl\lvert \mathds{1}(\tilde{s}_j=s) - \mathds{1}(\tilde{s}'_j=s) \bigr\rvert.
                \end{align*}
                
                Since the indicator takes values in $\{0,1\}$, the absolute difference is at most $1$, and this bound is attained when the changed record switches membership. Hence $\Delta_2(\phi_s) \leq 1/N$, with equality in the worst case.
            \end{proof}
            \begin{lemma}\label{app:lemma:rdp-pi}
                Releasing $(\bar{\pi}_s)_{s \in \mathcal{S}}$ satisfies $(\alpha, \frac{\alpha}{2 N^2 \sigma_\pi^2})$-Rényi differential privacy for all $\alpha \geq 1$.
            \end{lemma}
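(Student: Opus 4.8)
The plan is to present this as a single instance of the Gaussian mechanism, whose Rényi guarantee was already recalled in Section~\ref{sec:background}: for a function $f$ with $\ell_2$-sensitivity $\Delta(f)$, the output $f(D)+Z$ with $Z\sim\mathcal{N}(0,\sigma^2 I)$ satisfies $(\alpha,\tfrac{\alpha\Delta(f)^2}{2\sigma^2})$-Rényi differential privacy for every $\alpha\ge 1$. It therefore suffices to identify the released statistic, pin down its $\ell_2$-sensitivity via Lemma~\ref{app:lemma:sensi-pi}, and substitute $\sigma=\sigma_\pi$.

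The first step is to observe that, since $\mathcal{S}=\{-1,1\}$, the two empirical proportions satisfy the simplex constraint $\hat{\pi}_{1}+\hat{\pi}_{-1}=1$. Hence the vector $(\hat{\pi}_1,\hat{\pi}_{-1})$ is entirely determined by the single scalar $\hat{\pi}_1=\phi_1(D_N)$, and releasing $(\bar{\pi}_s)_{s\in\mathcal{S}}$ can be realized by privatizing $\phi_1$ with the Gaussian mechanism and recovering the companion coordinate as the deterministic map $\bar{\pi}_{-1}=1-\bar{\pi}_1$. By Lemma~\ref{app:lemma:sensi-pi}, $\Delta_2(\phi_1)=1/N$, so the scalar Gaussian mechanism with noise $\sigma_\pi$ yields
\begin{equation*}
D_\alpha\!\left(\mathrm{Law}(\bar{\pi}_1\mid D_N)\,\Vert\,\mathrm{Law}(\bar{\pi}_1\mid D_N')\right)\le \frac{\alpha\,(1/N)^2}{2\sigma_\pi^2}=\frac{\alpha}{2N^2\sigma_\pi^2}
\end{equation*}
for all neighboring $D_N\approx D_N'$. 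Closure of Rényi differential privacy under (here deterministic) post-processing then transfers this bound to the full vector $(\bar{\pi}_s)_{s\in\mathcal{S}}$, which is exactly the claimed guarantee.

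The main point to get right is this reduction to a single scalar. A direct treatment of the joint release would assign the mechanism the $\ell_2$-sensitivity $\sqrt{2}/N$, since switching one record's group membership perturbs both coordinates by $\pm 1/N$ in opposite directions, which would produce the looser constant $\tfrac{\alpha}{N^2\sigma_\pi^2}$. Exploiting the affine constraint $\hat{\pi}_1+\hat{\pi}_{-1}=1$ — so that only one degree of freedom carries information about $D_N$ while the companion coordinate is pure post-processing — is precisely what recovers the factor of two and matches the statement. Everything else is the routine substitution $\Delta=1/N$, $\sigma=\sigma_\pi$ into the Gaussian-mechanism Rényi bound of Section~\ref{sec:background}.
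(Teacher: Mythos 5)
Your argument is internally correct, but it takes a genuinely different route from the paper, and the difference is worth spelling out. The paper's proof works coordinate-wise: it applies Lemma~\ref{app:lemma:sensi-pi} and the Gaussian-mechanism bound of \cite{rdp} to get $(\alpha,\tfrac{\alpha}{2N^2\sigma_\pi^2})$-R\'enyi differential privacy for each $\bar\pi_s$ separately, and then invokes \emph{parallel composition}, on the grounds that the proportions are ``computed on disjoint subsets of $D_N$'', to conclude that the joint release costs the maximum (not the sum) of the two parameters. Your reduction --- privatize the single scalar $\hat\pi_1$ with the Gaussian mechanism and recover $\bar\pi_{-1}=1-\bar\pi_1$ by deterministic post-processing --- reaches the same constant with no composition step at all, and it is arguably on firmer ground: under the paper's replace-one adjacency, switching one record's sensitive attribute perturbs \emph{both} empirical proportions simultaneously, so the two ``sub-mechanisms'' do not see disjoint changes; parallel composition in the replacement model generally charges the sum of the parameters of the affected blocks, which is precisely the factor-two loss you identify via the joint $\ell_2$-sensitivity $\sqrt{2}/N$. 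In that sense your proof repairs the weakest step of the paper's argument rather than reproducing it.

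The caveat you should state explicitly is that your proof establishes the lemma for a \emph{coupled-noise} implementation of \eqref{eq:pi}: in your release the noise on $\bar\pi_{-1}$ equals $-Z$, perfectly anti-correlated with the noise on $\bar\pi_1$. This is consistent with \eqref{eq:pi} read marginally (since $-Z\sim\mathcal{N}(0,\sigma_\pi^2)$), but it is not what Algorithm~\ref{algo:dp-fair} literally does, as its loop draws an independent $Z$ for each $s\in\mathcal{S}$. For the independent-noise release, your own sensitivity computation shows the Gaussian-mechanism bound gives $\tfrac{\alpha}{N^2\sigma_\pi^2}$, not the claimed $\tfrac{\alpha}{2N^2\sigma_\pi^2}$. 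So your argument should be read either as proving the lemma for the (equally valid, and in fact tighter) post-processing implementation $\bar\pi_{-1}\gets 1-\bar\pi_1$, or as a recommendation to amend the algorithm accordingly; the substitution is harmless downstream, since the rest of the analysis only uses the values $\bar\pi_s$ and nowhere relies on independence of the two noises.
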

            \begin{proof}
                For each $s\in\mathcal S$, Lemma~\ref{app:lemma:sensi-pi} and \cite[Corollary~3]{rdp} imply that $\bar\pi_s$ satisfies $(\alpha,\tfrac{\alpha}{2N^2\sigma_\pi^2})$-Rényi differential privacy.  
                Let $\varepsilon_s$ denote the RDP parameter of $\bar\pi_s$ for each $s \in \mathcal{S}$. Since the $\{\bar\pi_s\}_{s\in\mathcal S}$ are computed on disjoint subsets of $D_N$, the \emph{parallel composition} property applies: releasing the tuple $(\bar\pi_s)_{s\in\mathcal S}$ is $(\alpha,\max_{s\in\mathcal S}\varepsilon_s)$-Rényi differential private. In our case $\varepsilon_s = \tfrac{\alpha}{2N^2\sigma_\pi^2}$ for all $s$, hence the joint release is $(\alpha,\tfrac{\alpha}{2N^2\sigma_\pi^2})$-Rényi differential private.
            \end{proof}

        \subsubsection{DP-SGD}
            
            We now turn to the DP-SGD procedure used to optimize the smoothed fairness objective. We first bound the sensitivity of the per-sample gradient.
            \begin{lemma}\label{app:lemma:sensi-gradient}
                Let $\lambda \in [0, C_\lambda]^{K}$. The $\ell_2$-sensitivity of the per-sample gradient $\nabla \hat{h}(\lambda; \cdot)$ is at most $4$.
            \end{lemma}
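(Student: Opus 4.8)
The plan is to start from the closed-form expression for the per-sample gradient already obtained in the proof of Lemma~\ref{app:lemma:sample_loss}, namely
\[
\nabla_\lambda \hat{h}(\lambda; x,s) = |\mathcal{S}|\, J_s^\top \mathrm{softmax}_\beta(\ell^s(x;\lambda)) + \rho \begin{bmatrix}\mathbf{1}_K\\ \mathbf{1}_K\end{bmatrix},
\]
and to evaluate the $\ell_2$-sensitivity $\Delta_2(\nabla\hat h) = \max \lVert \nabla_\lambda\hat h(\lambda;x,s) - \nabla_\lambda\hat h(\lambda;x',s')\rVert_2$ over two inputs differing by the replacement of a single record. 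The first observation is that the regularizer contributes the \emph{data-independent} vector $\rho[\mathbf 1_K;\mathbf 1_K]^\top$ to every per-sample gradient, so it cancels in the difference and plays no role in the sensitivity. This is precisely what makes the final constant independent of $\rho$, and it is the reason the fairness regularization does not inflate the privacy cost.

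What remains is to bound $|\mathcal S|\,\lVert J_s^\top v - J_{s'}^\top v'\rVert_2$, where $v = \mathrm{softmax}_\beta(\ell^s(x;\lambda))$ and $v' = \mathrm{softmax}_\beta(\ell^{s'}(x';\lambda))$ are both probability vectors on $[K]$. I would reuse the two facts established in Lemma~\ref{app:lemma:sample_loss}: that $J_sJ_s^\top = 2I_K$, whence $\lVert J_s^\top\rVert_2 = \sqrt 2$, and that any softmax output satisfies $\lVert v\rVert_2 \le \lVert v\rVert_1 = 1$. Because the minibatch sampling in Algorithm~\ref{algo:dp-fair} first draws a group and then a point within that group, the relevant comparison keeps the sensitive attribute fixed, $s = s'$, so the difference reduces to $|\mathcal S|\,\lVert J_s^\top(v - v')\rVert_2 = |\mathcal S|\sqrt 2\,\lVert v - v'\rVert_2$. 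Bounding the distance between two points of the simplex by $\lVert v - v'\rVert_2 \le \sqrt 2$ (the value $\sqrt 2$ being attained at two distinct vertices) and using $|\mathcal S| = 2$ then yields exactly $2\cdot\sqrt 2\cdot\sqrt 2 = 4$.

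The step I expect to require the most care is the group dependence of $J_s$: unlike a generic DP-SGD instance, here the linear map multiplying the softmax flips sign with the sensitive attribute, so one must argue that replacing a record is correctly modeled as a within-group perturbation (consistent with the group-then-point sampling scheme) rather than a change that simultaneously disturbs two groups. A naive triangle inequality across possibly different groups, bounding each per-sample gradient separately by $|\mathcal S|\sqrt 2 = 2\sqrt 2$, would only give the looser constant $4\sqrt 2$. Obtaining the clean bound $4$ therefore hinges on combining the simplex geometry of the softmax outputs with the group-wise sampling structure, which I would make explicit before concluding that $\Delta_2(\nabla\hat h)\le 4$.
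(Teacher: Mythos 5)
Your closed-form expression for the gradient and the two norm facts you import from Lemma~\ref{app:lemma:sample_loss} are correct, and your observation that the regularizer $\rho[\mathbf{1}_K;\mathbf{1}_K]$ cancels in the difference matches the paper. But your proof hinges on a step the paper neither makes nor needs: the restriction to same-group replacements $s = s'$. The adjacency relation defined in Section~\ref{sec:background} allows the modified record to change its sensitive attribute, and the paper's own proof of this lemma explicitly fixes two samples $(x,s)$ and $(x',s')$ with possibly $s \neq s'$. You justify the restriction by appealing to the group-then-point sampling of Algorithm~\ref{algo:dp-fair}, but that sampling scheme does not alter the neighboring relation; moreover, when a record does flip groups, the within-group sampling probabilities of every record in the two affected groups change, so reducing the privacy analysis to a within-group perturbation is itself a nontrivial claim, which you defer (``which I would make explicit'') rather than prove. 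As you yourself compute, without that restriction your reading of the statement yields $4\sqrt{2}$, not $4$; the restriction is therefore load-bearing, and the proposal as written has a genuine gap.

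The paper avoids the issue entirely by reading the lemma block-wise --- note the statement has $\lambda \in [0,C_\lambda]^{K}$, not $[0,C_\lambda]^{2K}$. For each $l \in \{1,2\}$ it bounds the sensitivity of $\nabla_{\lambda^{(l)}}\hat{h}(\lambda;\cdot) = (2l-3)\,|\mathcal{S}|\, s\, \mathrm{softmax}_\beta(\ell^s(x;\lambda)) + \rho\mathbf{1}_K$, for which the plain triangle inequality gives
\[
\bigl\lVert (2l-3)\,|\mathcal{S}|\bigl(s\,v - s'\,v'\bigr)\bigr\rVert_2 \;\le\; |\mathcal{S}|\bigl(\lVert v\rVert_2 + \lVert v'\rVert_2\bigr) \;\le\; 2\,|\mathcal{S}| \;=\; 4,
\]
uniformly over same- and cross-group replacements, since each softmax output lies in the simplex and hence has $\ell_2$-norm at most $1$. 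In other words, what you dismiss as the ``naive'' triangle inequality is exactly the paper's argument; it only produces the loose constant $4\sqrt{2}$ when applied to the stacked $2K$-dimensional gradient, which is your reading of the statement, not the paper's. If you want to keep the full-gradient formulation, you must either rigorously establish that same-group adjacency is the correct privacy model for this two-level sampling scheme (and propagate that restriction into Theorem~\ref{theorem:rdp-algo}), or accept the constant $4\sqrt{2}$.
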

            \begin{proof}
                Fix two distinct samples $(x,s) \in \mathcal{X} 
                \times \mathcal{S}$ and $(x',s') \in \mathcal{X} 
                \times \mathcal{S}$ and consider the gradient with respect to $\lambda^{(l)} \in [0, C_\lambda]^{K}$ with $l \in \{1, 2\}$.
                By definition, the $\ell_2$-sensitivity is
                \begin{equation*}
                    \Delta_2(\nabla_{\lambda^{(l)}} \hat{h}(\lambda; \cdot)) = \left\lVert \nabla_{\lambda^{(l)}} \hat{h}(\lambda;x,s) - \nabla_{\lambda^{(l)}} \hat{h}(\lambda;x',s') \right\rVert_2.
                \end{equation*}
                
                Since for any $(x, s) \in \mathcal{X} \times \mathcal{S}$,
                \begin{equation*}
                    \nabla_{\lambda^{(l)}} \hat{h}(\lambda; x, s) = (2 l - 3) \lvert \mathcal{S} \rvert \; s \;\mathrm{softmax}_\beta(\ell^s(x;\lambda)) + \rho \mathbf{1}_K.
                \end{equation*}
                
                The $\ell_2$-sensitivity reduces to a scaled difference of two softmax probability vectors:
                \begin{align*}
                    \Delta_2(\nabla_{\lambda^{(l)}} \hat{h}(\lambda; \cdot)) &= \left\lVert (2 l - 3)\lvert\mathcal{S}\rvert \; s \; \mathrm{softmax}_\beta(\ell^s(x,\lambda)) - (2 l - 3) \lvert \mathcal{S} \rvert \; s' \;\mathrm{softmax}_\beta(\ell^{s'}(x',\lambda)) \right\rVert_2 \\
                    &= \lvert\mathcal{S}\rvert \left\lVert \; s \; \mathrm{softmax}_\beta(\ell^s(x,\lambda)) - \; s' \; \mathrm{softmax}_\beta(\ell^{s'}(x',\lambda)) \right\rVert_2.
                \end{align*}
                Applying the triangle inequality and homogeneity of the norm, with $s \in \{-1, 1\}$, yields
                \begin{equation*}
                    \Delta_2(\nabla_{\lambda^{(l)}} \hat{h}(\lambda; \cdot)) \leq \lvert\mathcal{S}\rvert \left( \lVert  \mathrm{softmax}_\beta(\ell^s(x,\lambda) \rVert_2 + \lVert \mathrm{softmax}_\beta(\ell^{s'}(x',\lambda)) \rVert_2 \right).
                \end{equation*}
                Since $\mathrm{softmax}_\beta(\cdot)$ lies in the simplex, $\lVert \mathrm{softmax}_\beta(x) \rVert_1 \leq 1$ for all $x \in \mathbb{R}$. Thus, using $\lVert x \rVert_2 \leq \lVert x \rVert_1$ for all $x \in \mathbb{R}$, each term has $\ell_2$-norm at most $1$. Therefore,
                \begin{equation*}
                    \Delta_2(\nabla_{\lambda^{(l)}} \hat{h}(\lambda; \cdot)) \leq 2\,|\mathcal{S}|.
                \end{equation*}
                With $|\mathcal{S}|=2$, we conclude $\Delta_2(\nabla_{\lambda^{(l)}} \hat{h}(\lambda; \cdot)) \leq 4$.
            \end{proof}
            
            
        \subsubsection{Composition}
            \begin{lemma}[Rényi differential privacy composition, \cite{rdp}]\label{app:lemma:compo}
                Let $\alpha \geq 1$ and $\varepsilon_1, \varepsilon_2 \geq 0$. Let $\mathcal{M}_1: \mathcal{Z}^m \to \mathcal{V}_1$ and $\mathcal{M}_2: \mathcal{V}_1 \times \mathcal{Z}^m \to \mathcal{V}_2$ be two randomized mechanisms such that:
                \begin{equation*}
                    \mathcal{M}_1 \text{ is } (\alpha, \varepsilon_1)\text{-Rényi differential private}, \quad \mathcal{M}_2 \text{ is } (\alpha, \varepsilon_2)\text{-Rényi differential private}.
                \end{equation*}
                Then the composition $M_2 \circ M_1$ is $(\alpha, \varepsilon_1 + \varepsilon_2)$-Rényi differential private.
            \end{lemma}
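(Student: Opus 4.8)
The plan is to bound the Rényi divergence of order $\alpha$ between the joint output distributions of the composed mechanism on two neighbouring datasets and show it never exceeds $\varepsilon_1 + \varepsilon_2$. The central device is the factorisation of each joint density into a first-stage marginal and a second-stage conditional: this lets the divergence integral split so that the guarantee of $\mathcal{M}_2$ is absorbed pointwise in the auxiliary output $v_1$, while the guarantee of $\mathcal{M}_1$ controls the remaining marginal.

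Concretely, I would fix neighbouring datasets $D \approx D' \in \mathcal{Z}^m$ and write $P$ and $Q$ for the laws of $(\mathcal{M}_1(D), \mathcal{M}_2(\mathcal{M}_1(D), D))$ and $(\mathcal{M}_1(D'), \mathcal{M}_2(\mathcal{M}_1(D'), D'))$. Their densities factorise as $p(v_1,v_2) = p_1(v_1)\,p_2(v_2\mid v_1)$ and $q(v_1,v_2) = q_1(v_1)\,q_2(v_2\mid v_1)$, where $p_1,q_1$ are the densities of $\mathcal{M}_1(D),\mathcal{M}_1(D')$ and $p_2(\cdot\mid v_1),q_2(\cdot\mid v_1)$ those of $\mathcal{M}_2(v_1,D),\mathcal{M}_2(v_1,D')$. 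Inserting this into the definition of $D_\alpha$ and applying Fubini to integrate over $v_2$ first gives
\[
\int p_2(v_2\mid v_1)^{\alpha}\, q_2(v_2\mid v_1)^{1-\alpha}\, dv_2 = \exp\!\big((\alpha-1) D_\alpha(\mathcal{M}_2(v_1,D)\,\|\,\mathcal{M}_2(v_1,D'))\big) \le e^{(\alpha-1)\varepsilon_2},
\]
uniformly in $v_1$, by the $(\alpha,\varepsilon_2)$-RDP of $\mathcal{M}_2$. Pulling this constant out of the outer integral leaves $\int p_1(v_1)^{\alpha} q_1(v_1)^{1-\alpha}\, dv_1 = \exp((\alpha-1) D_\alpha(\mathcal{M}_1(D)\|\mathcal{M}_1(D'))) \le e^{(\alpha-1)\varepsilon_1}$, so the whole integral is at most $e^{(\alpha-1)(\varepsilon_1+\varepsilon_2)}$; taking $\tfrac{1}{\alpha-1}\log(\cdot)$ yields $D_\alpha(P\|Q)\le \varepsilon_1+\varepsilon_2$, and since $D\approx D'$ were arbitrary the claim follows.

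The main point to handle carefully is that $\mathcal{M}_2$ is \emph{adaptive}, i.e. it reads the first-stage output $v_1$, so the $(\alpha,\varepsilon_2)$ bound must hold for every realisation $v_1$; this is exactly what the RDP hypothesis provides, since for each fixed $v_1$ the map $D\mapsto \mathcal{M}_2(v_1,D)$ is $(\alpha,\varepsilon_2)$-RDP and the bound is uniform in $v_1$, hence survives the outer integration. Two minor caveats remain: the boundary case $\alpha=1$, where the factor $\tfrac{1}{\alpha-1}$ is singular, is dealt with by continuity of $\alpha\mapsto D_\alpha$ (or directly via the chain rule for the Kullback–Leibler divergence); and the existence of the conditional densities $p_2(\cdot\mid v_1)$, which is guaranteed here because all mechanisms admit densities with respect to a common reference measure. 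Finally, if only the second-stage output $v_2$ is released rather than the full pair, the data-processing inequality for Rényi divergence shows the divergence can only decrease, so the bound $\varepsilon_1+\varepsilon_2$ still applies.
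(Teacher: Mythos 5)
Your proof is correct and is essentially the standard argument for adaptive RDP composition from the cited reference \citep{rdp}: the paper itself states this lemma without proof, deferring to that source, whose Proposition~1 uses exactly your factorization of the joint density, the uniform-in-$v_1$ bound on the inner integral via the RDP guarantee of $\mathcal{M}_2$, and the outer integral bound via $\mathcal{M}_1$. Your additional remarks on adaptivity, the $\alpha=1$ limit, and the data-processing step when only $v_2$ is released are all sound and correctly fill in the details the citation leaves implicit.
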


    \subsection{Proof of Theorem~\ref{theorem:rdp-algo}}
        We recall the theorem statement below for convenience.
        \begin{theorem}\label{app:theorem:rdp-algo} 
            Consider the \textsc{DP2DP} scheme, as in Algorithm~\ref{algo:dp-fair}. If the step-size sequence is such that $\eta_t = \eta \leq \beta/4$ for all $t \in [T]$, then \textsc{DP2DP} satisfies $(\alpha, \varepsilon)$-Rényi differential privacy for all $\alpha \geq 1$, where
            \begin{equation*}
                \varepsilon \leq \frac{\alpha}{2 N^2 \sigma_\pi^2} + \Psi\left(T, b ,N, \eta, \sigma_{\textsc{SGD}} \right)
            \end{equation*}
            where $\Psi := \Psi\left(T, b ,N, \eta, \sigma_{\textsc{SGD}} \right) $ is defined as
            \[ \Psi = { \min\left\{T Q , \min_{\underset{\sigma_1^2 + \sigma_2^2 = \sigma_{\textsc{sgd}}^2}{\sigma_1, \sigma_2 > 0} } \min_{M \in [T-1]} M Q + \frac{\alpha 2K C_\lambda^2}{2 \eta^2 \sigma_1^2 M } \right\} }, \]
            $Q = S_\alpha\!\left(\frac{b}{N}, \frac{b \sigma_2}{4}\right)$, and for any $(q, \sigma) \in [0,1]\times \mathbb{R}_+$ we define $S_\alpha\!\left(q, \sigma \right)$ as the Rényi divergence of level $\alpha$ between a Gaussian distribution $\mathcal{N}(0, \sigma^2)$ and a mixture of Gaussian $(1-q)\mathcal{N}(0, \sigma^2) + q \mathcal{N}(1, \sigma^2)$.
        \end{theorem}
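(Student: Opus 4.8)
The plan is to establish the two privacy contributions separately and then combine them via Rényi composition (Lemma~\ref{app:lemma:compo}), exactly as the proof outline in Appendix~\ref{app:privacy} anticipates. First I would treat the privatization of the group proportions. Lemma~\ref{app:lemma:sensi-pi} already gives that each $\phi_s : D_N \mapsto \hat{\pi}_s$ has $\ell_2$-sensitivity $1/N$, and Lemma~\ref{app:lemma:rdp-pi} converts this, via the Gaussian mechanism and parallel composition over the disjoint group-membership computations, into the guarantee that the release of $(\bar{\pi}_s)_{s \in \mathcal{S}}$ is $(\alpha, \tfrac{\alpha}{2N^2\sigma_\pi^2})$-Rényi differentially private. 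This accounts precisely for the first term $\tfrac{\alpha}{2N^2\sigma_\pi^2}$ in the stated bound.

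Second I would handle the DP-SGD phase optimizing the smoothed objective $\hat{H}_\beta$. The key is to verify that the hypotheses of Theorem~\ref{app:theorem:soft-dpsgd} are met and then instantiate its bound. By Lemma~\ref{app:lemma:sample_loss}, each per-sample loss $\hat{h}(\cdot; x, s)$ is convex and $\tfrac{8}{\beta}$-smooth, so the relevant smoothness parameter is $M = 8/\beta$; the stepsize condition $\eta \leq 2/M = \beta/4$ from the theorem matches the statement's hypothesis $\eta \leq \beta/4$. By Lemma~\ref{app:lemma:sensi-gradient} the per-sample gradient has $\ell_2$-sensitivity $\Delta_2 \leq 4$. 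The optimization domain is $[0, C_\lambda]^{2K}$, whose $\ell_2$-diameter is $D = \sqrt{2K}\,C_\lambda$, giving $D^2 = 2K\, C_\lambda^2$. The sampling ratio is $b/N$. Substituting $M$, $\Delta_2 = 4$, $D^2 = 2K C_\lambda^2$, and sampling probability $b/N$ directly into the bound of Theorem~\ref{app:theorem:soft-dpsgd} yields exactly $\Psi(T, b, N, \eta, \sigma_{\textsc{sgd}})$ with $Q = S_\alpha(\tfrac{b}{N}, \tfrac{b\sigma_2}{4})$, so this step is essentially a careful bookkeeping of constants rather than new analysis.

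Finally I would combine the two mechanisms. Since Phase~2 releases both $(\bar{\pi}_s)_{s}$ and the iterates $\lambda^t$, and both draw on $D_N$, I would view the overall mechanism as the composition of the proportion-release mechanism followed by DP-SGD (which takes the privatized $\bar{\pi}_s$ as fixed input, as reflected in the form of $\hat{h}$). Applying the Rényi composition of Lemma~\ref{app:lemma:compo} with $\varepsilon_1 = \tfrac{\alpha}{2N^2\sigma_\pi^2}$ and $\varepsilon_2 = \Psi$ gives the additive bound $\varepsilon \leq \tfrac{\alpha}{2N^2\sigma_\pi^2} + \Psi$ claimed in the theorem.

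The main obstacle I anticipate is not any single inequality but the justification that composition applies cleanly. One must argue that the DP-SGD analysis is valid \emph{conditionally} on the released $\bar{\pi}_s$: the per-sample gradient sensitivity bound of $4$ in Lemma~\ref{app:lemma:sensi-gradient} must hold uniformly over the realized value of $\bar{\pi}_s$, since $\ell^s(x;\lambda)$ depends on it, and indeed the softmax-based bound there is independent of $\bar{\pi}_s$, so this goes through. The subtler point is that the adaptivity in Lemma~\ref{app:lemma:compo}, where $\mathcal{M}_2$ takes $\mathcal{M}_1$'s output as an input, is exactly what licenses treating $\bar{\pi}_s$ as a fixed (already-privatized) parameter feeding into the second mechanism, so that the two privacy losses add rather than interact. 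Making this adaptivity explicit, and confirming that the smoothness and sensitivity constants in Lemma~\ref{app:lemma:sample_loss} and Lemma~\ref{app:lemma:sensi-gradient} do not degrade with the random $\bar{\pi}_s$, is where I would concentrate the care.
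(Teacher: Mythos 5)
Your proposal is correct and follows essentially the same route as the paper's own proof: privatize the group proportions via Lemmas~\ref{app:lemma:sensi-pi} and~\ref{app:lemma:rdp-pi}, instantiate Theorem~\ref{app:theorem:soft-dpsgd} with $M = 8/\beta$, $\Delta_2 = 4$, and $D = \sqrt{2K}\,C_\lambda$ from Lemmas~\ref{app:lemma:sample_loss}, \ref{app:lemma:sensi-gradient}, and~\ref{app:lemma:minimum}, then add the two losses by Rényi composition (Lemma~\ref{app:lemma:compo}). Your closing remarks on adaptivity --- that the gradient sensitivity bound holds uniformly in the realized $\bar{\pi}_s$, which licenses treating the proportion release and DP-SGD as adaptively composed mechanisms --- are a point the paper's proof leaves implicit, and they strengthen rather than alter the argument.
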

        \begin{proof}
            The release of the privatized group proportions $(\hat{\pi}_s^{\mathrm{priv}})_{s \in \mathcal{S}}$ satisfies $(\alpha, \tfrac{\alpha}{2 N^2 \sigma_\pi^2})$-Rényi differential privacy by Lemma~\ref{app:lemma:rdp-pi}.
            
            For the DP-SGD component, we directly apply the general result of Theorem~\ref{app:theorem:soft-dpsgd}\citep{altschuler2023}, which provides an $(\alpha, \varepsilon)$-Rényi differential privacy bound for convex losses with $\ell_2$-sensitivity $\Delta_2$. In our setting, Lemma~\ref{app:lemma:minimum} ensures that $\Theta = [0, C_\lambda]^{2K}$, so the domain diameter is $D = \sqrt{2K} C_\lambda$, Lemma~\ref{app:lemma:sample_loss} gives $M = 8/\beta$, and Lemma~\ref{app:lemma:sensi-gradient} obtains $\Delta_2 = 4$. Substituting these quantities into Theorem~\ref{app:theorem:soft-dpsgd} yields the term $\Psi(T, b, N, \eta, \sigma_{\textsc{SGD}})$ defined above.
            
            Finally, by the composition property of Rényi differential privacy (Lemma~\ref{app:lemma:compo}), combining the release of $(\hat{\pi}_s^{\mathrm{priv}})_{s \in \mathcal{S}}$ with the DP-SGD updates results in the overall guarantee
            \begin{equation*}
                (\alpha,\, \tfrac{\alpha}{2 N^2 \sigma_\pi^2} + \Psi(T, b, N, \eta, \sigma_{\textsc{SGD}}))\text{-Rényi differential privacy}.
            \end{equation*}
            This completes the proof.
        \end{proof}

    \subsection{Details on the Discussion}
        From Lemma~\ref{app:lemma:rdp-pi}, releasing $(\hat{\pi}_s^{\mathrm{priv}})_{s \in \mathcal{S}}$ satisfies $(\alpha, \frac{\alpha}{2 N^2 \sigma_\pi^2})$-Rényi differential privacy guarantee. Next, applying \cite[Theorem~3.1]{altschuler2023}, we find that our DP-SGD procedure satisfies $\left(\alpha, \frac{16\alpha}{N^2 \sigma_{\textsc{sgd}}^2} \min\left\{T, \left\lceil \frac{C_\lambda \sqrt{2K} N}{4 \beta}\right\rceil\right\}\right)$-Rényi differential privacy. By the composition Lemma~\ref{app:lemma:compo}, Algorithm~\ref{algo:dp-fair} therefore satisfies $(\alpha, \epsilon_\alpha)$-Rényi differential privacy with 
        \begin{equation*}
            \epsilon_\alpha= \frac{\alpha}{N^2}\left(\frac{1}{2 \sigma_\pi^2} + \frac{16}{\sigma_{\textsc{sgd}}^2} \min\left\{T, \left\lceil \frac{C_\lambda \sqrt{2K} N}{4 \beta}\right\rceil\right\}\right).
        \end{equation*} 
        
        Applying \cite[Proposition~3]{rdp}, we convert this to an $(\epsilon_\delta, \delta)$-DP guarantee for any $\delta > 0$, where $\epsilon_\delta = \min_{\alpha > 1}\left(\epsilon_\alpha + \frac{\log(1/\delta)}{\alpha - 1}\right)$. Solving this optimization yields $\epsilon_\delta = K + 2\sqrt{K \log(1/\delta)}$ where 
        \begin{equation*}
            K = \frac{1}{N^2}\left(\frac{1}{2 \sigma_\pi^2} + \frac{16}{\sigma_{\textsc{sgd}}^2} \min\left\{T, \left\lceil \frac{C_\lambda \sqrt{2K} N}{4 \beta}\right\rceil\right\}\right). 
        \end{equation*}
        
        Finally, by substituting 
        \begin{equation*}
            \sigma_{\textsc{sgd}}^2 = \frac{16 \min\left\{T, \left\lceil \frac{C_\lambda \sqrt{2K} N}{4 \beta}\right\rceil\right\}}{N^2 \left(\sqrt{log(1/\delta) + \epsilon} - \sqrt{log(1/\delta)}\right)^2 - 1/2 \sigma_\pi^2}
        \end{equation*} 
        we obtain $\epsilon_\delta = \epsilon$.
        
\section{FAIRNESS PROOFS}\label{app:fairness}
    In this section, we provide the detailed proofs supporting the fairness guarantees. We begin by introducing a series of intermediate lemmas that capture the smoothing, probabilistic, and optimization properties used in the analysis. Throughout, we rely on several auxiliary results established in Appendix~\ref{app:properties}, which describe regularity and boundedness properties of the function $\hat{H}_\beta$ underlying our fairness constraints. For completeness, we recall several notational conventions used in this section. 
        
    Throughout, we will work with expectations over the sampling process, and therefore treat the dataset as random. To simplify notation, we denote by $D_N = \{(X_i, S_i)\}_{i=1}^N$ the collection of $N$ i.i.d.\ random variables drawn from the joint distribution $\mathbb{P}_{(X,S)}$ over $\mathcal{X} \times \mathcal{S}$. With a slight abuse of notation, we use the same symbol $D_N$ to denote both the random sample and a particular realization of it.
    
    For each sensitive attribute value $s \in \mathcal{S}$, let $\mathbb{P}_{X \mid S=s}$ denote the conditional distribution of $X$ given $S=s$, and define the corresponding subset of the sample  $D_{N \mid \mathcal{X}, s} = \{ X_i \in \mathcal{X} \mid (X_i, S_i) \in D_N, \, S_i = s \}.$ Its empirical counterpart is written, for any measurable set $A \subseteq \mathcal{X}$, 
    \begin{equation*}
        \hat{\mathbb{P}}_{X \mid S=s}(A) = \frac{1}{N_s} \sum_{X_i \in D_{N \mid \mathcal{X}, s}} \mathds{1}\left(X_i \in A\right),
    \end{equation*} and the corresponding empirical expectation as $\hat{\mathbb{E}}_{X \mid S=s}[f(X)] = \frac{1}{N_s} \sum_{X_i \in D_{N \mid \mathcal{X}, s}} f(X_i)$, where $N_s = \sum_{i=1}^N \mathds{1}\left(S_i = s\right)$ denotes the number of samples in group $s$. Under i.i.d. sampling with group prior $\pi_s = \mathbb{P}(S = s)$, each $N_s$ follows a binomial distribution $\mathrm{Bin}(N, \pi_s)$, and we define the smallest group size as $N_{\min} := \min\{N_{-1}, N_{1}\}$.
    
    The deviation between the population and empirical conditional measures is written, for any measurable set $A \subseteq \mathcal{X}$,
    \begin{equation*}
    \bigl(\mathbb{P}_{X \mid S=s} - \hat{\mathbb{P}}_{X \mid S=s}\bigr)(A)
    := \mathbb{P}_{X \mid S=s}(A) - \hat{\mathbb{P}}_{X \mid S=s}(A).
    \end{equation*}
    
    Finally, for each $x \in \mathcal{X}$, $s \in \mathcal{S}$, and $k \in [K]$, we recall that $\ell^s_k(x; \lambda) = \bar{\pi}_s \bar{p}_k(x,s) - s\bigl(\lambda^{(1)}_k - \lambda^{(2)}_k\bigr)$, and $\ell^s(x; \lambda)= \bigl(\ell^s_k(x; \lambda)\bigr)_{k \in [K]}.$

    \subsection{Proof Outline}
        The proof of Theorem~\ref{thm:fairness} follows three main steps:
        \begin{enumerate}
            \item Controlling the smooth approximation error introduced by the $\mathrm{LSE}_\beta$ relaxation;
            \item Bounding the deviation of empirical quantities from their population counterparts;
            \item Accounting for the optimization and privacy noise due to the DP-SGD procedure.
        \end{enumerate}
        We detail each step below.
        
        \subsubsection{Smooth Approximation}
            First, we control the approximation error introduced by replacing the maximum operator with its $\mathrm{LSE}_\beta$ relaxation. This lemma implies that as $\beta \to 0$, the softmax converges exponentially fast to the argmax in sup-norm, with the rate controlled by the margin $\gamma$.

            \begin{lemma}\label{app:lemma:softmax}
                Let $x \in \mathbb{R}^n$, define $\mathcal{A} = \arg\max_{j \in [n]} x_j, \; M = \max_{j \in [n]} x_j$, and for $\beta > 0$ set
                \begin{equation*}
                    \gamma(x) := 
                    \begin{cases}
                    \min_{j \notin \mathcal{A}} \bigl(M - x_j\bigr), & \text{if } \mathcal{A} \neq [n], \\
                    +\infty, & \text{if } \mathcal{A} = [n].
                    \end{cases}
                \end{equation*}
                Then for every $i \in [n]$,
                \begin{equation*}
                    \left\lvert \mathrm{softmax}_\beta(x)_i - \frac{1}{|\mathcal{A}|}\mathds{1}(i \in \mathcal{A}) \right\rvert \leq \max\left\{\frac{1}{\lvert \mathcal{A} \rvert}, \frac{n - \lvert \mathcal{A} \rvert}{\lvert \mathcal{A} \rvert^2}\right\} e^{-\gamma(x) / \beta}.
                \end{equation*}
            \end{lemma}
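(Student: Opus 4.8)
The plan is to exploit the shift-invariance of the softmax to normalize the maximum to zero, and then to control the denominator via the margin $\gamma$. Concretely, since $\mathrm{softmax}_\beta(x+c\mathbf{1}) = \mathrm{softmax}_\beta(x)$ for any scalar $c$, I would assume without loss of generality that $M = 0$, so that $x_j = 0$ for $j \in \mathcal{A}$ and $x_j \le -\gamma(x)$ for $j \notin \mathcal{A}$. Writing $a = |\mathcal{A}|$, the normalizing constant then splits as
\begin{equation*}
    Z := \sum_{j \in [n]} \exp(x_j/\beta) = a + S, \qquad S := \sum_{j \notin \mathcal{A}} \exp(x_j/\beta),
\end{equation*}
and the margin bound $x_j \le -\gamma(x)$ immediately gives $0 \le S \le (n-a)\,e^{-\gamma(x)/\beta}$. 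This single inequality is the crux of the argument: it converts the spectral gap $\gamma$ into the advertised exponential factor.

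From here I would split into the two cases that produce the two terms inside the maximum. For $i \in \mathcal{A}$, the target value is $1/a$ and $\mathrm{softmax}_\beta(x)_i = 1/(a+S)$, so
\begin{equation*}
    \left\lvert \frac{1}{a+S} - \frac{1}{a} \right\rvert = \frac{S}{a(a+S)} \le \frac{S}{a^2} \le \frac{n-a}{a^2}\, e^{-\gamma(x)/\beta},
\end{equation*}
using $a + S \ge a$. For $i \notin \mathcal{A}$, the target value is $0$ and $\mathrm{softmax}_\beta(x)_i = \exp(x_i/\beta)/(a+S) \le e^{-\gamma(x)/\beta}/a$, again by $x_i \le -\gamma(x)$ and $a+S \ge a$. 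Taking the larger of the two coefficients $\tfrac{1}{a}$ and $\tfrac{n-a}{a^2}$ yields the claimed uniform bound over all $i$.

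Finally I would dispose of the degenerate case $\mathcal{A} = [n]$ separately: there $\gamma(x) = +\infty$, so $e^{-\gamma(x)/\beta} = 0$, while the softmax is exactly uniform, $\mathrm{softmax}_\beta(x)_i = 1/n = 1/a$, so the left-hand side vanishes and the inequality holds trivially. Honestly, I do not expect a genuine obstacle here — the statement is elementary once the normalization is in place. The only points requiring a little care are justifying the shift-invariance reduction cleanly and keeping the two cases aligned with the two entries of the $\max$, so that the constant is stated sharply rather than loosely; both are bookkeeping rather than conceptual difficulties.
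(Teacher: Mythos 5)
Your proposal is correct and follows essentially the same argument as the paper's proof: your shift-invariance normalization ($M=0$) is exactly the paper's step of multiplying numerator and denominator by $\exp(-M/\beta)$, and the two cases $i \in \mathcal{A}$ and $i \notin \mathcal{A}$ are bounded identically, via $S \le (n-|\mathcal{A}|)e^{-\gamma(x)/\beta}$ and the denominator bound $|\mathcal{A}|+S \ge |\mathcal{A}|$. Your explicit treatment of the degenerate case $\mathcal{A}=[n]$ is a small tidiness improvement over the paper, which handles it only implicitly through the empty-sum convention.
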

            \begin{proof}
                Let $x \in \mathbb{R}^n$ and $\beta >0$. By definition of the softmax function, for every $i \in [n]$, we have
                \begin{align*}
                    \mathrm{softmax}_\beta(x)_i 
                    &= \frac{\exp(x_i/\beta)}{\sum_{j=1}^n \exp(x_j/\beta)} \\
                    &= \frac{\exp(x_i/\beta)}{\sum_{j \in \mathcal{A}} \exp(x_j/\beta) + \sum_{j \notin \mathcal{A}} \exp(x_j/\beta)}.
                \end{align*}
                Since each $j \in \mathcal{A}$ satisfies $x_j = M$, this becomes
                \begin{equation*}
                    \mathrm{softmax}_\beta(x)_i = \frac{\exp(x_i/\beta)}{\lvert \mathcal{A} \rvert\exp(M/\beta) + \sum_{j \notin \mathcal{A}} \exp(x_j/\beta)} 
                \end{equation*}

                By multiplying by $\exp(-M/\beta)$ on the denominator and the numerator we get
                \begin{align*}
                    \mathrm{softmax}_\beta(x)_i &= \frac{\exp((x_i - M)/\beta)}{\lvert \mathcal{A} \rvert + \sum_{j \notin \mathcal{A}} \exp((x_j - M)/\beta)} \\
                    &= \frac{\exp\left(-\tfrac{M - x_i}{\beta}\right)}{\lvert \mathcal{A} \rvert + \sum_{j \notin \mathcal{A}} \exp\left(-\tfrac{M - x_j}{\beta}\right)}.
                \end{align*}
                
                Let us now distinguish two cases: (i) the case when $i \in \mathcal{A}$ and (ii) the case when $i \notin \mathcal{A}$.
                \begin{enumerate}[label=(\roman*)]
                    \item If $i \in \mathcal{A}$, then $M - x_i = 0$, hence 
                    \begin{equation*}
                        \mathrm{softmax}_\beta(x)_i  = \tfrac{1}{\lvert \mathcal{A} \rvert + \sum_{j \notin \mathcal{A}} \exp\left(-\tfrac{M - x_j}{\beta}\right)}.
                    \end{equation*}
                    Therefore
                    \begin{equation}\label{app:eq:soft1}
                        \tfrac{1}{\lvert \mathcal{A} \rvert} - \mathrm{softmax}_\beta(x)_i= \tfrac{\sum_{j \notin \mathcal{A}} \exp\left(-\tfrac{M - x_j}{\beta}\right)}{\lvert \mathcal{A} \rvert \left(\lvert \mathcal{A} \rvert + \sum_{j \notin \mathcal{A}} \exp\left(-\tfrac{M - x_j}{\beta}\right)\right)}.
                    \end{equation}
                    Recall that we defined $\gamma$ as $\gamma(x) := 
                    \begin{cases}
                    \min_{j \notin \mathcal{A}} \bigl(M - x_j\bigr), & \text{if } \mathcal{A} \neq [n] \\
                    +\infty, & \text{if } \mathcal{A} = [n]
                    \end{cases}$.
                    Then, by definition of $\gamma$, for any $j \notin \mathcal{A}$, $M - x_j \geq \gamma(x) > 0$ and $\beta > 0$, we have that $0 \leq \sum_{j \notin \mathcal{A}} \exp\left(-\tfrac{M - x_j}{\beta}\right) \leq  (n - \lvert\mathcal{A}\rvert)e^{-\gamma(x)/\beta}$. 
                    This also means that
                    $0 \leq \tfrac{1}{\lvert \mathcal{A} \rvert \left(\lvert \mathcal{A} \rvert + \sum_{j \notin \mathcal{A}} \exp\left(-\tfrac{M - x_j}{\beta}\right)\right)} \leq \tfrac{1}{\lvert \mathcal{A} \rvert^2}$.
            
                    Finally, substituting these in \eqref{app:eq:soft1}, we obtain
                    \begin{equation*}
                        \left\lvert \mathrm{softmax}_\beta(x)_i - \tfrac{1}{\lvert \mathcal{A} \rvert} \right\rvert \leq  \tfrac{n - \lvert\mathcal{A}\rvert}{\lvert \mathcal{A} \rvert^2} e^{-\gamma(x)/\beta}.
                    \end{equation*}

                    \item If instead $i \notin \mathcal{A}$, the denominator $\lvert \mathcal{A} \rvert + \sum_{j \notin \mathcal{A}} \exp\left(-\tfrac{M - x_j}{\beta}\right)$ is lower bounded by $\lvert \mathcal{A} \rvert$, hence
                    \begin{equation*}
                    0 \leq \mathrm{softmax}_\beta(x)_i \leq \frac{\exp\left(-\tfrac{M - x_i}{\beta}\right)}{\lvert \mathcal{A} \rvert}.
                    \end{equation*}
                    Similar to (i), we note that $M - x_i \geq \gamma(x) > 0$, therefore, since $\beta > 0$
                    \begin{equation*}
                        0 \leq \mathrm{softmax}_\beta(x)_i \leq \tfrac{e^{-\gamma(x)/\beta}}{\lvert \mathcal{A} \rvert} \iff \left\lvert \mathrm{softmax}_\beta(x)_i - 0\right\rvert \leq \tfrac{e^{-\gamma(x)/\beta}}{\lvert \mathcal{A} \rvert}. 
                    \end{equation*}
                \end{enumerate}
                Combining the two cases yields
                \begin{equation*}
                    \left\lvert \mathrm{softmax}_\beta(x)_i - \frac{1}{|\mathcal{A}|}\mathds{1}(i \in \mathcal{A}) \right\rvert \leq \max\left\{\frac{1}{\lvert \mathcal{A} \rvert}, \frac{n - \lvert \mathcal{A} \rvert}{\lvert \mathcal{A} \rvert^2}\right\} e^{-\gamma(x) / \beta}.
                \end{equation*}
            \end{proof}
            
        \subsubsection{Empirical Deviation Bounds}\label{app:empirical_bounds}
            We next control the deviation between the empirical and population-level quantities appearing in the fairness criterion. Lemma~\ref{app:lemma:empirical_bound_fixed_pk} directly follows from similar arguments as in the proof of Lemma~C.2 in \cite{Denis_Elie_Hebiri_Hu_2024}. 
            
            \begin{lemma}\label{app:lemma:empirical_bound_fixed_pk}
                For each $k \in [K]$, let us suppose that 
                $\bar{p}_k : \mathcal{X} \times \mathcal{S} \to [0,1]$ 
                be any mapping such that for all $(x, s) \in \mathcal{X} \times \mathcal{S},\; (\bar{p}_k(x,s))_{k \in [K]}$ lies in the simplex. 
                For each $k \in [K]$, define
                \begin{equation*}
                    \hat{A}_k = \left\lvert \sum_{s \in \mathcal{S}} s \biggl(\mathbb{P}_{X \mid S=s} - \hat{\mathbb{P}}_{X \mid S=s} \biggr)\biggl(\forall j \neq k :\ell^s_k(X; \bar{\lambda}) > \ell^s_j(X; \bar{\lambda})\biggr)\right\rvert.
                \end{equation*}
                Then there exists a constant $C > 0$, independent of any parameter, such that 
                \begin{equation}
                \label{app:eq:bound_Ak_fixed}
                    \mathbb{E}\left[\hat{A}_k \mathds{1}(N_{\min} \geq 1) \; \middle\vert \; D_N \right] \leq \frac{C \mathds{1}(N_{\min} \geq 1)}{\sqrt{N_{\min}}}.
                \end{equation}
            \end{lemma}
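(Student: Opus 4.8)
The plan is to follow the route sketched for Theorem~\ref{thm:fairness} and reduce $\hat{A}_k$ to the sup-norm deviation of an empirical multivariate cumulative distribution function from its population counterpart, controlled uniformly over the (data-dependent) multipliers. First I would use the triangle inequality to split the signed sum over the two groups,
\[
\hat{A}_k \le \sum_{s\in\mathcal S}\Big|\big(\mathbb{P}_{X\mid S=s}-\hat{\mathbb{P}}_{X\mid S=s}\big)\big(A_k^s\big)\Big|,\qquad A_k^s=\{x:\ell_k^s(x;\bar\lambda)>\ell_j^s(x;\bar\lambda)\ \forall j\ne k\},
\]
so that it suffices to control each group term by $C/\sqrt{N_s}$ on $\{N_s\ge1\}$ and then use $\max_s N_s^{-1/2}=N_{\min}^{-1/2}$. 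Since $\bar\lambda$ and $\bar\pi_s$ are themselves functions of the sample and of the algorithm's internal randomness, the term $(\mathbb{P}_{X\mid S=s}-\hat{\mathbb{P}}_{X\mid S=s})(A_k^s)$ cannot be treated as a fixed-set deviation; the correct object to bound is the supremum of the deviation over the whole family of decision cells that $\bar\lambda$ can generate. Accordingly, the relevant expectation integrates the feature draws within each group (and the algorithm noise) given the group assignments $(S_i)_{i\in[N]}$, which fix $N_{\min}$ and make the features i.i.d.\ within each group; this is the level at which the right-hand side is the asserted function of $N_{\min}$.

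The key step is to recognise $A_k^s$ as an orthant event for a \emph{fixed} feature map. Writing $\ell_k^s(x;\lambda)-\ell_j^s(x;\lambda)=\bar\pi_s\big(\bar{p}_k(x,s)-\bar{p}_j(x,s)\big)-s\big((\lambda^{(1)}_k-\lambda^{(2)}_k)-(\lambda^{(1)}_j-\lambda^{(2)}_j)\big)$ and setting $G^s_k(x)=\big(\bar{p}_k(x,s)-\bar{p}_j(x,s)\big)_{j\ne k}\in\mathbb R^{K-1}$ — a map independent of $\bar\lambda$ and $\bar\pi_s$ — the cell $A_k^s$ has the form $\{x:G^s_k(x)\succ c\}$ when $\bar\pi_s>0$, the reversed form $\{x:G^s_k(x)\prec c\}$ when $\bar\pi_s<0$ (and is trivial when $\bar\pi_s=0$), for a threshold vector $c\in\mathbb R^{K-1}$ determined by $\bar\lambda$ and $\bar\pi_s$. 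Hence
\[
\Big|\big(\mathbb{P}_{X\mid S=s}-\hat{\mathbb{P}}_{X\mid S=s}\big)(A_k^s)\Big|
\le \sup_{c\in\mathbb R^{K-1}}\Big|\big(\mathbb{P}_{X\mid S=s}-\hat{\mathbb{P}}_{X\mid S=s}\big)\big(\{x:G^s_k(x)\succ c\}\big)\Big|
+ (\text{lower-orthant analogue}),
\]
and each supremum is precisely the sup-norm deviation between the empirical and the population multivariate survival (equivalently cumulative) distribution functions of the pushforward law $(G^s_k)_\#\mathbb{P}_{X\mid S=s}$ on $\mathbb R^{K-1}$. This is the ``deviation between a cumulative distribution function and its empirical counterpart'' invoked in the proof sketch, and I would control it by the multivariate Dvoretzky–Kiefer–Wolfowitz inequality (equivalently by a VC bound, the class of orthants in $\mathbb R^{K-1}$ having VC dimension $K-1$): conditionally on $(S_i)_i$ and on $\{N_s\ge1\}$, its expectation is at most $C/\sqrt{N_s}$, with $C$ an absolute constant exactly as in Lemma~C.2 of \cite{Denis_Elie_Hebiri_Hu_2024}.

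Combining the two group terms, using that there are only $|\mathcal S|=2$ groups and that $N_s^{-1/2}\le N_{\min}^{-1/2}$, yields the claimed $\mathbb{E}[\hat{A}_k\,\mathds 1(N_{\min}\ge1)\mid\,\cdot\,]\le C\,\mathds 1(N_{\min}\ge1)/\sqrt{N_{\min}}$ after adjusting the absolute constant. The main obstacle is the passage from a pointwise to a uniform estimate: because $\bar\lambda$ (through the DP-SGD updates) is correlated with the very sample used to form $\hat{\mathbb{P}}_{X\mid S=s}$, one cannot bound $(\mathbb{P}-\hat{\mathbb{P}})(A_k^s)$ for the realised $\bar\lambda$ alone and must instead invoke an empirical-process bound over the full admissible family of cells. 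The real work is therefore in verifying that this family is exactly the orthant class of a single fixed pushforward map — so that the uniform control genuinely reduces to a CDF deviation whose constant does not grow with $N$, $\beta$, $\rho$, or $\bar\lambda$ — and in absorbing the nuisance sign of the privatized proportion $\bar\pi_s$ by including both orthant orientations.
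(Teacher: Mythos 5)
Your proof follows the paper's opening move (the triangle-inequality split over $s\in\mathcal{S}$, reducing the problem to one deviation term per group, each to be bounded by $C/\sqrt{N_s}$), but then diverges at the key uniform-deviation step, and the two routes buy different things. The paper rewrites the cell as $\{\forall j\neq k:\ \bar{p}_k(X,s)-\bar{p}_j(X,s)>t_{kj}\}$ and then bounds its deviation by $\sup_{t\in\mathbb{R}}\bigl\lvert(\mathbb{P}_{X\mid S=s}-\hat{\mathbb{P}}_{X\mid S=s})[\bar{p}_k(X,s)-\max_{j\neq k}\bar{p}_j(X,s)>t]\bigr\rvert$, i.e., it collapses the $j$-dependent thresholds to a single scalar $t$; this turns the problem into a one-dimensional Kolmogorov--Smirnov statistic of the scalar variable $\bar{p}_k-\max_{j\neq k}\bar{p}_j$, so the univariate Dvoretzky--Kiefer--Wolfowitz inequality applies and, after integrating the tail and summing over the two groups, yields the dimension-free constant $C=\sqrt{2\pi}$. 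You instead keep the thresholds as a vector $c\in\mathbb{R}^{K-1}$ and control the deviation uniformly over the orthant class of the fixed pushforward map $G^s_k$. Your version is the one that actually dominates the realized event: since $t_{kj}$ genuinely depends on $j$ (through $\bar{\lambda}^{(1)}_j-\bar{\lambda}^{(2)}_j$), the realized cell is in general not of the form $\{\bar{p}_k-\max_{j\neq k}\bar{p}_j>t\}$ for any single $t$, so the paper's scalar-indexed family does not contain it, whereas your orthant family does. You also correctly handle the sign of the privatized proportion $\bar{\pi}_s$ (which the Gaussian noise can make negative, flipping the inequality when one divides by it), a case the paper's proof passes over silently. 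Your reading of the conditioning (fix the group assignments $(S_i)_i$, integrate the feature draws within groups and the algorithmic noise) also matches what the paper's invocation of DKW actually requires, despite the notation $\mathbb{E}[\,\cdot\mid D_N]$.

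The one genuine flaw is your claim about the constant. You assert that the uniform orthant bound gives ``an absolute constant exactly as in Lemma~C.2 of \cite{Denis_Elie_Hebiri_Hu_2024}.'' It does not: the class of orthants in $\mathbb{R}^{K-1}$ has VC dimension $K-1$, so the expected supremum deviation from VC/symmetrization arguments (or from multivariate DKW-type tail bounds) is of order $\sqrt{(K-1)/N_s}$, and your $C$ necessarily grows with $K$. Only the paper's scalar reduction produces a constant independent of every parameter, which is precisely what the lemma as stated claims. So, strictly, your argument proves the lemma with $C=C(K)$ rather than with a parameter-free constant. This is harmless downstream, since the constant $C_1$ in Theorem~\ref{thm:fairness} is already allowed to depend on $K$, but you should state the dimension dependence explicitly rather than assert an absolute constant that your own bound does not deliver.
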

            
            \begin{proof}
                Let $k \in [K]$, $\bar{p}_k$ as defined in Section~\ref{sec:method}, and $\hat{A}_k$ as define above. We reason on the event $\{N_{\min}\geq1\}$. 
                Recall that $\mathcal{S} = \{-1, 1\}$, hence for all $s \in \mathcal{S}$, $\lvert s \rvert = 1$. Then by the triangle inequality,
                \begin{equation}\label{app:eq:C7}
                    \hat{A}_k \leq \sum_{s \in \mathcal{S}} \left\lvert \left(\mathbb{P}_{X \mid S=s} - \hat{\mathbb{P}}_{X \mid S=s}\right)\left[\forall j \neq k : \ell^s_k(X; \bar{\lambda}) > \ell^s_j(X; \bar{\lambda})\right]\right\rvert.
                \end{equation}
                
                We now study each term of the sum independently. For each $s \in \mathcal{S}$, by definition of $\ell^s_i$, we have
                \begin{multline*}
                	\left\lvert \left(\mathbb{P}_{X \mid S=s} - \hat{\mathbb{P}}_{X \mid S=s}\right)\left[\forall j \neq k : \ell^s_k(X; \bar{\lambda}) > \ell^s_j(X; \bar{\lambda})\right]\right\rvert\\
                	= \left\lvert \left(\mathbb{P}_{X \mid S=s} - \hat{\mathbb{P}}_{X \mid S=s}\right)\left[\forall j \neq k : \bar{p}_k(X,s) - \bar{p}_j(X,s) > \frac{s \left((\bar{\lambda}^{(1)}_k - \bar{\lambda}^{(2)}_k) - (\bar{\lambda}^{(1)}_j - \bar{\lambda}^{(2)}_j)\right)}{\bar{\pi}_s} \right]\right\rvert 
                \end{multline*}
                where $\bar{\pi}_s$ is defined for each $s \in \mathcal{S}$ in \eqref{eq:pi}. Let us denote $t_{kj} = \frac{s \left((\bar{\lambda}^{(1)}_k - \bar{\lambda}^{(2)}_k) - (\bar{\lambda}^{(1)}_j - \bar{\lambda}^{(2)}_j)\right)}{\bar{\pi}_s}$, then
                \begin{equation*}
                	\left\lvert \left(\mathbb{P}_{X \mid S=s} - \hat{\mathbb{P}}_{X \mid S=s}\right)\left[\forall j \neq k : \ell^s_k(X; \bar{\lambda}) > \ell^s_j(X; \bar{\lambda})\right]\right\rvert
                	= \left\lvert \left(\mathbb{P}_{X \mid S=s} - \hat{\mathbb{P}}_{X \mid S=s}\right)\left[\forall j \neq k : \bar{p}_k(X,s) - \bar{p}_j(X,s) > t_{kj} \right]\right\rvert.
                \end{equation*}
                
                Since
                \begin{equation*}
                	\left\lvert \left(\mathbb{P}_{X \mid S=s} - \hat{\mathbb{P}}_{X \mid S=s}\right)\left[\forall j \neq k : \bar{p}_k(X,s) - \bar{p}_j(X,s) > t_{kj} \right]\right\rvert \leq \sup_{t \in \mathbb{R}} \left\lvert \left(\mathbb{P}_{X \mid S=s} - \hat{\mathbb{P}}_{X \mid S=s}\right)\left[\forall j \neq k : \bar{p}_k(X,s) - \bar{p}_j(X,s) > t\right]\right\rvert,
                \end{equation*}
                
                we obtain
                
                \begin{equation*}
                	\left\lvert \left(\mathbb{P}_{X \mid S=s} - \hat{\mathbb{P}}_{X \mid S=s}\right)\left[\forall j \neq k : \ell^s_k(X; \bar{\lambda}) > \ell^s_j(X; \bar{\lambda})\right]\right\rvert
                	\leq \sup_{t \in \mathbb{R}} \left\lvert \left(\mathbb{P}_{X \mid S=s} - \hat{\mathbb{P}}_{X \mid S=s}\right)\left[\forall j \neq k : \bar{p}_k(X,s) - \bar{p}_j(X,s) > t\right]\right\rvert.
                \end{equation*}

                Observing that $\{\forall j \neq k : \bar{p}_k(X,s) - \bar{p}_j(X,s) > t\} = \{\bar{p}_k(X,s) - \max_{j \neq k} \bar{p}_j(X,s) > t\}$, we get
                \begin{equation*}
                	\left\lvert \left(\mathbb{P}_{X \mid S=s} - \hat{\mathbb{P}}_{X \mid S=s}\right)\left[\forall j \neq k : \ell^s_k(X; \bar{\lambda}) > \ell^s_j(X; \bar{\lambda})\right]\right\rvert
                	\leq \sup_{t \in \mathbb{R}} \left\lvert \left(\mathbb{P}_{X \mid S=s} - \hat{\mathbb{P}}_{X \mid S=s}\right)\left[\bar{p}_k(X,s) - \max_{j \neq k}\bar{p}_j(X,s) > t\right]\right\rvert.
                \end{equation*}
                
                Hence,
                \begin{equation}\label{app:eq:bound_A_k}
                    \hat{A}_k \leq \sum_{s \in \mathcal{S}} \sup_{t \in \mathbb{R}}\left\lvert \left(\mathbb{P}_{X \mid S=s} - \hat{\mathbb{P}}_{X \mid S=s}\right)\left[\bar{p}_k(X,s) - \max_{j \neq k}\bar{p}_j(X,s) > t \right]\right\rvert.
                \end{equation}
                
                For every $u > 0$, the Dvoretzky–Kiefer–Wolfowitz inequality \citep{massart1990dkw} yields
                \begin{equation}\label{app:eq:C10}
                    \mathbb{P}\left[\sup_{t \in \mathbb{R}} \left\lvert(\mathbb{P}_{X \mid S=s} - \hat{\mathbb{P}}_{X \mid S=s})[\bar{p}_k(X,s) - \max_{j \neq k}\bar{p}_j(X,s) > t] \right\rvert > u \;\middle|\; D_N\right] \leq 2 e^{-2 N_s u^2}.
                \end{equation}

                Using the tail–integral formula for non negative random variables,
                $\mathbb{E}[X]=\int_{0}^{\infty}\mathbb{P}(X>u)\,du$, we have
                \begin{multline*}
                    \mathbb{E}\left[\sup_{t \in \mathbb{R}}\left\lvert \left(\mathbb{P}_{X\mid S=s}-\hat{\mathbb{P}}_{X\mid S=s}\right) \left[\bar{p}_k(X,s) - \max_{j \neq k}\bar{p}_j(X,s)>t\right] \right\rvert \;\middle\vert\; D_N\right] \\ =\int_{0}^{\infty}\mathbb{P}\left(
                    \sup_{t \in \mathbb{R}}\left\lvert \left(\mathbb{P}_{X\mid S=s}-\hat{\mathbb{P}}_{X\mid S=s}\right) \left[\bar{p}_k(X,s) - \max_{j \neq k}\bar{p}_j(X,s)>t\right] \right\rvert > u \; \middle\vert \; D_N\right) \, du .
                \end{multline*}
                
                By \eqref{app:eq:C10}, the integrand is bounded by $2e^{-2 N_s u^2}$, hence
                \begin{equation*}                    
                    \mathbb{E}\left[\sup_{t \in \mathbb{R}}\left\lvert \left(\mathbb{P}_{X\mid S=s}-\hat{\mathbb{P}}_{X\mid S=s}\right) \left[\bar{p}_k(X,s) - \max_{j \neq k}\bar{p}_j(X,s)>t\right] \right\rvert \;\middle\vert\; D_N\right] \leq \int_{0}^{\infty} 2e^{-2N_su^2}\,du .
                \end{equation*}                    
                Since $2\int_{0}^{\infty} e^{-a x^{2}}\,dx=\sqrt{\pi/a}$, we conclude that
                \begin{equation}\label{app:eq:C11}                    
                    \mathbb{E}\left[\sup_{t \in \mathbb{R}}\left\lvert \left(\mathbb{P}_{X\mid S=s}-\hat{\mathbb{P}}_{X\mid S=s}\right) \left[\bar{p}_k(X,s) - \max_{j \neq k}\bar{p}_j(X,s)>t\right] \right\rvert \;\middle\vert\; D_N\right] \leq \sqrt{\frac{\pi}{2N_s}} .
                \end{equation}

                Combining \eqref{app:eq:bound_A_k} and \eqref{app:eq:C11}, we obtain
                \begin{equation}\label{app:eq:C12}
                    \mathbb{E}\left[\hat{A}_k \mid D_N\right] \leq \sum_{s \in \mathcal{S}} \sqrt{\frac{\pi}{2 N_s}} \leq \sqrt{\frac{2 \pi}{N_{\min}}}.
                \end{equation}
                Setting $C = \sqrt{2\pi}$ yields the desired bound
                \begin{equation}
                    \mathbb{E}\left[\hat{A}_k \mathds{1}(N_{\min} \geq 1) \; \middle\vert \; D_N \right] \leq \frac{C \mathds{1}(N_{\min} \geq 1)}{\sqrt{N_{\min}}}.
                \end{equation}
            \end{proof}
            
        \subsubsection{Private Optimization}
            Finally, we analyze the optimization error induced by running the differentially private stochastic gradient descent (DP-SGD) algorithm on the smoothed fairness objective.

            Lemma~\ref{app:lemma:bound_dp} provides a quantitative guarantee on the expected gradient norm of the privatized objective, which directly controls the stability of the fairness constraint under DP optimization noise.
            
            \begin{lemma}\label{app:lemma:bound_dp}
                Let $(\bar{\lambda}^{(1)}, \bar{\lambda}^{(2)})$ be the parameters output by DP2DP. Denote by $\mathcal{B}$ the random mini\textnormal{-}batch sequence and by $\mathbf{Z}=(Z_1,\dots,Z_T)$ the Gaussian noises added at each step. Then,  for $\hat{H}_{\beta}$ defined in \eqref{eq:H-beta}, we have
                \begin{equation*}
                    \mathbb{E}_{\mathcal{B},\mathbf{Z}}\left[\left\lVert \nabla \hat{H}_{\beta}(\bar{\lambda}^{(1)}, \bar{\lambda}^{(2)}) \right\rVert_\infty\right] \leq 4\sqrt{\tfrac{1}{\beta}\tfrac{C_\lambda \log T}{\sqrt{T}}(2\sqrt{2} +\rho\sqrt{2K} + \tfrac{2K\sigma_{\textsc{sgd}}}{b})}.
                \end{equation*}
                where the expectation is taken over the algorithmic randomness $(\mathcal{B},\mathbf{Z})$.
            \end{lemma}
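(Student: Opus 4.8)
The plan is to reduce the claimed bound on the expected gradient norm to a bound on the expected \emph{suboptimality} of the DP-SGD iterate, and then invoke the off-the-shelf utility guarantee for DP-SGD. The bridge between the two is smoothness: for a smooth convex objective the squared gradient norm at any point is controlled by the function-value gap to the minimum, so a guarantee on $\mathbb{E}[\hat{H}_\beta(\bar\lambda)]-\min \hat{H}_\beta$ translates into a guarantee on $\mathbb{E}\|\nabla\hat{H}_\beta(\bar\lambda)\|$.

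Concretely, first I would note that $\hat{H}_\beta$ is a convex combination of the per-sample losses $\hat{h}(\cdot;x,s)$, so by Lemma~\ref{app:lemma:sample_loss} it is convex and $\tfrac{8}{\beta}$-smooth with respect to $\|\cdot\|_\infty$, and by Lemma~\ref{app:lemma:minimum} it admits a minimizer $\lambda^\star$ in the feasible box. Applying the gradient--function gap inequality of Lemma~\ref{app:lemma:ineq-smooth} (whose dual norm is $\|\cdot\|_1$ since smoothness is measured in $\|\cdot\|_\infty$) and then passing to the smaller norm via $\|\cdot\|_\infty\le\|\cdot\|_1$ would give
\[
\bigl\|\nabla \hat{H}_\beta(\bar\lambda)\bigr\|_\infty^2 \le \bigl\|\nabla \hat{H}_\beta(\bar\lambda)\bigr\|_1^2 \le \tfrac{16}{\beta}\bigl(\hat{H}_\beta(\bar\lambda) - \hat{H}_\beta(\lambda^\star)\bigr).
\]
Taking expectations over the batch sequence $\mathcal{B}$ and the noise $\mathbf{Z}$ and applying Jensen's inequality (concavity of the square root) then reduces the whole problem to controlling the expected optimization error $\mathbb{E}[\hat{H}_\beta(\bar\lambda)] - \hat{H}_\beta(\lambda^\star)$.

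That error is exactly what the DP-SGD utility guarantee supplies. I would apply Theorem~\ref{app:lemma:utility_dpsgd} with the identifications $D = \sqrt{2K}\,C_\lambda$ (the $\ell_2$-diameter of $[0,C_\lambda]^{2K}$), $L = 2\sqrt{2} + \rho\sqrt{2K}$ (the Lipschitz constant from Lemma~\ref{app:lemma:sample_loss}), ambient dimension $p = 2K$, and noise scale $\sigma = \sigma_{\textsc{sgd}}$, which yields $\mathbb{E}[\hat{H}_\beta(\bar\lambda)] - \hat{H}_\beta(\lambda^\star) \lesssim \tfrac{\sqrt{2K}\,C_\lambda \log T}{\sqrt{T}}\bigl(2\sqrt{2} + \rho\sqrt{2K} + \tfrac{2K\,\sigma_{\textsc{sgd}}}{b}\bigr)$. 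Substituting this into the previous display and simplifying, absorbing the universal constant and the dimensional factors into the displayed constant, produces the stated bound.

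The main obstacle is the justification of the gradient--function gap step. Lemma~\ref{app:lemma:ineq-smooth} presupposes a global minimizer at which $\nabla \hat{H}_\beta$ vanishes, whereas the minimizer guaranteed by Lemma~\ref{app:lemma:minimum} lives on the non-negative orthant $\mathbb{R}^{2K}_+$ and may sit on its boundary, where only the KKT stationarity $\langle \nabla \hat{H}_\beta(\lambda^\star), \lambda - \lambda^\star\rangle \ge 0$ holds and the full gradient need not be zero. Making the chain rigorous therefore requires either arguing that the relevant reference point is interior, so that the unconstrained first-order condition applies, or replacing the naive gradient--function gap inequality by a projected/gradient-mapping analysis that accounts for the active constraints. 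A secondary bookkeeping point is reconciling the fixed step size assumed throughout DP2DP with the decaying schedule required by Theorem~\ref{app:lemma:utility_dpsgd}; the remaining constant matching (the $\sqrt{2K}$ factors and the universal constant from the utility theorem) is routine.
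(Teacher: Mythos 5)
Your proposal follows essentially the same route as the paper's proof: establish that $\hat{H}_\beta$ inherits convexity and $\tfrac{8}{\beta}$-smoothness from the per-sample losses, apply the gradient--function gap inequality (Lemma~\ref{app:lemma:ineq-smooth}), invoke the DP-SGD utility bound of Theorem~\ref{app:lemma:utility_dpsgd} with $D=\sqrt{2K}\,C_\lambda$ and $L=2\sqrt{2}+\rho\sqrt{2K}$, then finish with Jensen's inequality and $\|\cdot\|_\infty\le\|\cdot\|_1$. One remark worth making: the ``main obstacle'' you flag is genuine, but it is equally unaddressed in the paper's own proof, which applies Lemma~\ref{app:lemma:ineq-smooth} directly with the minimizer $(\hat\lambda^{(1)},\hat\lambda^{(2)})\in\arg\min_{\mathbb{R}^{2K}_+}\hat{H}_\beta$ supplied by Lemma~\ref{app:lemma:minimum}, even though that lemma only guarantees a minimizer over the non-negative orthant (possibly on its boundary, where $\nabla\hat{H}_\beta$ need not vanish), whereas Lemma~\ref{app:lemma:ineq-smooth} presupposes an unconstrained global minimizer; so your attempt is, if anything, more careful than the paper on this point, and the fix you sketch (a gradient-mapping or interiority argument) is what a fully rigorous version would require.
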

            \begin{proof}
                Let $(x, s) \in \mathcal{X} \times \mathcal{S}$. Lemma~\ref{app:lemma:sample_loss} show that the per-sample $\hat{h}(\cdot; x, s)$ loss is $\tfrac{8}{\beta}$-smooth with respect to the $\ell_\infty$-norm, thus $\hat{H}_\beta(\lambda^{(1)}, \lambda^{(2)}) := \tfrac{1}{\lvert \mathcal{S} \rvert}\sum_{s \in \mathcal{S}} \tfrac{1}{N_s}\sum_{x \in D_{N \vert \mathcal{X}, s}} \hat{h}(\lambda; x,s)$ is also $\tfrac{8}{\beta}$-smooth with respect to the $\ell_\infty$-norm by triangle inequality and homogeneity of the norm. 
                Applying Lemma~\ref{app:lemma:ineq-smooth}, for all $(\lambda^{(1)}, \lambda^{(2)}) \in \mathbb{R}^{2K}$ one has the inequality
                \begin{equation*}
                    \left\lVert \nabla \hat{H}_{\beta}(\lambda^{(1)}, \lambda^{(2)}) \right\rVert^2_1 \leq \tfrac{16}{\beta}\left(\hat{H}_{\beta}(\lambda^{(1)}, \lambda^{(2)}) - \hat{H}_{\beta}(\hat{\lambda}^{(1)}, \hat{\lambda}^{(2)}) \right),
                \end{equation*}
                where $(\hat{\lambda}^{(1)}, \hat{\lambda}^{(2)}) \in \underset{(\lambda^{(1)}, \lambda^{(2)}) \in \mathbb{R}_+^{2K}}{\arg\min} \hat{H}_\beta(\lambda^{(1)}, \lambda^{(2)})$.
                
                In particular, considering $(\bar{\lambda}^{(1)}, \bar{\lambda}^{(2)})$ as in Algorithm~1, we have
                \begin{equation*}
                    \left\lVert \nabla \hat{H}_{\beta}(\bar{\lambda}^{(1)}, \bar{\lambda}^{(2)}) \right\rVert^2_1 \leq \tfrac{16}{\beta}\left(\hat{H}_{\beta}(\bar{\lambda}^{(1)}, \bar{\lambda}^{(2)}) - \hat{H}_{\beta}(\hat{\lambda}^{(1)}, \hat{\lambda}^{(2)}) \right).
                \end{equation*}

                Using Lemma~\ref{app:lemma:sample_loss} we obtain the Lipschitz constant $L = 2\sqrt{2} + \rho \sqrt{2K}$, and Lemma~\ref{app:lemma:minimum} further implies that the diameter of the $\lambda$-space is $C_\lambda \sqrt{2K}$.
                Taking expectation with respect to the randomness of the minibatch and the Gaussian noise, and using Theorem~\ref{app:lemma:utility_dpsgd}, yields
                \begin{equation}\label{app:eq:bound-norm1}
                    \mathbb{E}_{\mathcal{B},\mathbf{Z}}\left[\left\lVert \nabla \hat{H}_{\beta}(\bar{\lambda}^{(1)}, \bar{\lambda}^{(2)}) \right\rVert^2_1\right] \leq \tfrac{16}{\beta} \tfrac{C_\lambda \log T}{\sqrt{T}}(2\sqrt{2} + \rho \sqrt{2K} + \tfrac{\sigma_{\textsc{sgd}}\sqrt{2K}}{b}).
                \end{equation}
                
                Applying Jensen’s inequality to \eqref{app:eq:bound-norm1} we obtain
                \begin{equation*}
                    \mathbb{E}_{\mathcal{B},\mathbf{Z}}\left[\left\lVert \nabla \hat{H}_{\beta}(\bar{\lambda}^{(1)}, \bar{\lambda}^{(2)}) \right\rVert_1\right] \leq \sqrt{\mathbb{E}_{\mathcal{B},\mathbf{Z}}\left[\left\lVert \nabla \hat{H}_{\beta}(\bar{\lambda}^{(1)}, \bar{\lambda}^{(2)}) \right\rVert^2_1\right]} \leq  4\sqrt{\tfrac{C_\lambda \log T}{\beta \sqrt{T}}(2\sqrt{2} + \rho \sqrt{2K} + \tfrac{\sigma_{\textsc{sgd}}\sqrt{2K}}{b})}.
                \end{equation*}
                
                Finally, since
                \begin{equation*}
                    \left\lVert \nabla \hat{H}_{\beta}(\bar{\lambda}^{(1)}, \bar{\lambda}^{(2)}) \right\rVert_\infty \leq \left\lVert \nabla \hat{H}_{\beta}(\bar{\lambda}^{(1)}, \bar{\lambda}^{(2)}) \right\rVert_1,
                \end{equation*}
                we get the expected result
                \begin{equation*}
                    \mathbb{E}_{\mathcal{B},\mathbf{Z}}\left[\left\lVert \nabla \hat{H}_{\beta}(\bar{\lambda}^{(1)}, \bar{\lambda}^{(2)}) \right\rVert_\infty\right] 
                    \leq 4\sqrt{\tfrac{C_\lambda \log T}{\beta \sqrt{T}}(2\sqrt{2} + \rho \sqrt{2K} + \tfrac{\sigma_{\textsc{sgd}}\sqrt{2K}}{b})}.
                \end{equation*}
            \end{proof}
            
    \subsection{Proof of Theorem~\ref{thm:fairness}}
        \begin{lemma}[Lemma C.1~\citep{Denis_Elie_Hebiri_Hu_2024}]\label{app:lemma:proba_multi_max}
            With respect to the data $D_N$, we have that, for all $\lambda \in \mathbb{R}^{2K}_+$, for each $s \in \mathcal{S}$ and $k \in [K]$,
            \begin{equation*}
                \hat{\mathbb{P}}_{X \mid S=s}[\exists j \neq k : \ell^s_k(X; \lambda) := \ell^s_j(X; \lambda)] := \frac{1}{N_s} \sum_{X \in D_{N \mid \mathcal{X}, s}} \mathds{1}(\exists j \neq k : \ell^s_k(X_i; \lambda) = \ell^s_j(X_i; \lambda)) \leq \cfrac{K-1}{N_{s}} \; \text{a.s.}
            \end{equation*}
        \end{lemma}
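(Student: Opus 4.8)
The plan is to reduce the event that class $k$ ties with \emph{some} competitor to a union of pairwise ties, and then to show that, for each fixed competing class $j \neq k$, at most one sample of group $s$ can realize that tie, \emph{uniformly over $\lambda$}. Concretely, I would first bound the indicator pointwise,
\[
\mathds{1}\!\left(\exists j \neq k : \ell^s_k(X_i;\lambda) = \ell^s_j(X_i;\lambda)\right) \leq \sum_{j \neq k} \mathds{1}\!\left(\ell^s_k(X_i;\lambda) = \ell^s_j(X_i;\lambda)\right),
\]
so that after averaging over $D_{N\mid\mathcal{X},s}$ it suffices to prove $\hat{\mathbb{P}}_{X\mid S=s}[\ell^s_k(X;\lambda)=\ell^s_j(X;\lambda)] \leq 1/N_s$ for each of the $K-1$ competitors $j$, since the union bound then sums to $(K-1)/N_s$.

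For a fixed pair $(k,j)$, I would rewrite the tie condition using $\ell^s_k(x;\lambda)=\bar\pi_s \bar p_k(x,s) - s(\lambda^{(1)}_k-\lambda^{(2)}_k)$: the equality $\ell^s_k(X_i;\lambda)=\ell^s_j(X_i;\lambda)$ is equivalent to
\[
\bar p_k(X_i,s)-\bar p_j(X_i,s) = \frac{s\big((\lambda^{(1)}_k-\lambda^{(2)}_k)-(\lambda^{(1)}_j-\lambda^{(2)}_j)\big)}{\bar\pi_s},
\]
whose right-hand side is a constant $c_{kj}(\lambda)$ that does not depend on the sample index $i$. Thus a tie for the pair $(k,j)$ occurs exactly when the scalar $w_i := \bar p_k(X_i,s)-\bar p_j(X_i,s)$ equals the single threshold $c_{kj}(\lambda)$. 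The key step is then to argue that, almost surely over the draw of $D_N$, the values $(w_i)_{X_i \in D_{N\mid\mathcal{X},s}}$ are pairwise distinct: invoking the continuity (non-atomicity) of the law of $\bar p_k(X,s)-\bar p_j(X,s)$ under $\mathbb{P}_{X\mid S=s}$, each coincidence event $\{w_i=w_{i'}\}$ with $i\neq i'$ has probability zero, and a union bound over the finitely many index pairs, competitors $j$, and groups $s$ preserves the almost-sure conclusion. On this full-probability event no value can be attained by two distinct samples, so for every $\lambda$ at most one index satisfies $w_i=c_{kj}(\lambda)$, which gives $\hat{\mathbb{P}}_{X\mid S=s}[\ell^s_k=\ell^s_j]\leq 1/N_s$.

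Summing this pairwise bound over the $K-1$ competitors $j\neq k$ yields the claim, and crucially it holds simultaneously for all $\lambda\in\mathbb{R}^{2K}_+$ because the distinctness event is a property of $D_N$ alone and is independent of $\lambda$. The \emph{main obstacle} is precisely this almost-sure distinctness step: the bound is genuinely false without a non-atomicity hypothesis, since if $\bar p_k\equiv\bar p_j$ on a set of positive mass then a suitable choice of $\lambda$ forces every sample to tie and drives the empirical probability to $1$. The argument must therefore rest on the paper's standing continuity assumption on the feature distribution, and I would make explicit that this is the only place where it is used.
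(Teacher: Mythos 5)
Your argument is mathematically correct, and it is the standard route for this kind of statement: bound the existential tie event by a union of pairwise ties, observe that for a fixed pair $(k,j)$ the tie $\ell^s_k(X_i;\lambda)=\ell^s_j(X_i;\lambda)$ pins the scalar $w_i=\bar p_k(X_i,s)-\bar p_j(X_i,s)$ to a single $i$-independent threshold, and conclude that on the almost-sure event where the $w_i$ are pairwise distinct, at most one sample per competing class can tie, uniformly over $\lambda$, giving $(K-1)/N_s$ after summation. One point of comparison is essential, however: this paper does not prove the lemma at all. It is imported verbatim, with citation, from Denis et al.\ (2024), so the only proof your attempt can be compared against is the one in that reference, which follows essentially your route.

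Where you go wrong is in your closing attribution: you write that the almost-sure distinctness step "rests on the paper's standing continuity assumption on the feature distribution." This paper has no such standing assumption. On the contrary, Theorem 5.1 is asserted "for any conditional probabilities $\bar p_k$ computed in Phase 1," and the discussion in Section 5 explicitly advertises that no assumption is demanded on $(\bar p_k)_{k\in[K]}$; the non-atomicity hypothesis you correctly identify as indispensable (that the law of $\bar p_k(X,s)-\bar p_j(X,s)$ under $\mathbb{P}_{X \mid S=s}$ has no atoms) lives in the cited source, not here. Your own counterexample ($\bar p_k \equiv \bar p_j$ with $\lambda$ chosen so the threshold vanishes, forcing empirical tie probability $1$) shows that the lemma as transcribed in this paper, with no hypotheses, is false. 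So rather than a gap in your proof, what you have found is a gap in the present paper's distribution-free claim: the lemma, and hence Theorem 5.1, implicitly inherits the continuity assumption of Denis et al.\ without saying so. A last cosmetic remark: dividing by $\bar\pi_s$ to form your constant $c_{kj}(\lambda)$ requires $\bar\pi_s\neq 0$, which holds almost surely since $\bar\pi_s$ carries Gaussian noise; alternatively, keep the tie in the undivided form $\bar\pi_s w_i = s\bigl((\lambda^{(1)}_k-\lambda^{(2)}_k)-(\lambda^{(1)}_j-\lambda^{(2)}_j)\bigr)$ and argue identically.
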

        We now combine the previous lemmas to establish the main fairness bound.
        
        \begin{theorem}\label{app:theo_fair}
            Consider \textsc{DP2DP} scheme, as defined in Algorithm~\ref{algo:dp-fair} with a step size $\eta_t$ define for all $t \in [T]$ in Lemma~\ref{app:lemma:utility_dpsgd}. Let us also denote by $\pi_{\min} = \min\{ \mathbb{P}\left[ S = s \right] \mid s \in \mathcal{S} \}$ the minimum group size within sensitive attributes w.r.t. $\mathbb{P}$. Then, there exist constants $C_1 > 0$ depending on $K$ and $\pi_{\min}$, $\gamma > 0$, and  $C_2 > 0$ that depends on $K$, and $\gamma$, such that for any conditional probabilities $\bar{p}_k$ computed in Phase 1, one has
            \begin{equation*}
                \mathbb{E}_{D_N, \mathcal{B},\mathbf{Z}}\!\left[\mathcal{U}\left(\hat{g}_\rho \right)\right] \leq \rho + \frac{C_1}{\sqrt{N}} + C_2 e^{-\tfrac{\gamma}{\beta}} + 4\sqrt{\tfrac{ \sqrt{2K} C_\lambda \log T}{\beta \sqrt{T}}\left(2\sqrt{2} + \rho \sqrt{2K} + \tfrac{\sigma_{\textsc{sgd}}\sqrt{2K}}{b}\right)}.
            \end{equation*}
            In the above, the expectation is taken over the sampling of $D_N$ and the randomness of the algorithm (mini-batch sampling and Gaussian noise).
        \end{theorem}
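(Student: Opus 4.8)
Since $\mathcal{S}=\{-1,1\}$, the signed combination telescopes: writing $\hat{g}_\rho(x,s)=\arg\max_{k}\ell^s_k(x;\bar\lambda)$, one has $\mathbb{P}_{1}[\hat g_\rho=k]-\mathbb{P}_{-1}[\hat g_\rho=k]=\sum_{s\in\mathcal{S}} s\,\mathbb{P}_{X\mid S=s}[\forall j\neq k:\ell^s_k(X;\bar\lambda)>\ell^s_j(X;\bar\lambda)]=:D_k$, so that $\mathcal{U}(\hat g_\rho)=\max_{k\in[K]}\lvert D_k\rvert$. The plan is to insert, between these population hard-argmax probabilities and the quantity the algorithm actually controls, two intermediate objects: the \emph{empirical} hard-argmax probabilities $\hat D_k$ (same expression with $\hat{\mathbb P}_{X\mid S=s}$) and the \emph{empirical softmax} quantities $\hat D_k^{\beta}(\lambda):=\sum_s s\,\hat{\mathbb E}_{X\mid S=s}[\mathrm{softmax}_\beta(\ell^s(X;\lambda))_k]$, writing $\hat D_k^\beta:=\hat D_k^\beta(\bar\lambda)$. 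Concretely I would split, for each $k$,
\begin{equation*}
 D_k = \underbrace{\big(D_k - \hat D_k\big)}_{\text{(I) statistical}} + \underbrace{\big(\hat D_k - \hat D_k^{\beta}\big)}_{\text{(II) smoothing}} + \underbrace{\hat D_k^{\beta}}_{\text{(III) optimization}},
\end{equation*}
then take $\max_k$, the full expectation, and bound the three pieces separately.

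\textbf{Term (III) via a gradient identity.} The key observation is that (III) is, up to $\rho$, a coordinate of $\nabla\hat H_\beta$. Differentiating \eqref{eq:hhat}--\eqref{eq:H-beta} with $\partial_{\lambda^{(1)}_k}\ell^s_j=-s\,\mathds{1}(j=k)$ and $\lvert\mathcal{S}\rvert=2$ gives $\partial_{\lambda^{(1)}_k}\hat H_\beta(\lambda)=\rho-\hat D_k^{\beta}(\lambda)$ and, symmetrically, $\partial_{\lambda^{(2)}_k}\hat H_\beta(\lambda)=\rho+\hat D_k^{\beta}(\lambda)$. Using both coordinates yields the two-sided bound $\lvert\hat D_k^{\beta}\rvert\le\rho+\lVert\nabla\hat H_\beta(\bar\lambda)\rVert_\infty$. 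Taking expectation over the mini-batches and Gaussian noise, I would invoke Lemma~\ref{app:lemma:bound_dp} to convert $\mathbb{E}\lVert\nabla\hat H_\beta(\bar\lambda)\rVert_\infty$ into exactly the final term $4\sqrt{\tfrac{\sqrt{2K}C_\lambda\log T}{\beta\sqrt T}(\cdots)}$. This single step produces both the target level $\rho$ and the DP-SGD error.

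\textbf{Terms (I) and (II).} For (I), I would recognise $\lvert D_k-\hat D_k\rvert$ as precisely the quantity $\hat A_k$ of Lemma~\ref{app:lemma:empirical_bound_fixed_pk}, whose argument collapses the multi-class argmax event onto a one-dimensional threshold event for $\bar p_k-\max_{j\ne k}\bar p_j$ and applies Dvoretzky–Kiefer–Wolfowitz, giving $\mathbb{E}[\hat A_k\mathds{1}(N_{\min}\ge1)\mid D_N]\le C/\sqrt{N_{\min}}$. Summing over $k$ (folding the factor into the $K$-dependence of $C_1$), integrating over the binomial group sizes via Jensen and Lemma~\ref{app:lemma:binom} to get $\mathbb{E}[\mathds{1}(N_s\ge1)/\sqrt{N_s}]=\mathcal{O}(1/\sqrt{N\pi_{\min}})$, and bounding the empty-group event $\{N_{\min}=0\}$ by its exponentially small probability times the trivial $\lvert D_k\rvert\le1$, produces the $C_1/\sqrt N$ term. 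For (II), I would apply Lemma~\ref{app:lemma:softmax} pointwise at every sampled $x$ with a unique maximiser: there $\lvert\mathcal{A}\rvert=1$, so the per-coordinate softmax/argmax gap is at most $(K-1)e^{-\gamma(x)/\beta}$, and replacing the sample margin $\gamma(x)$ by the problem-level margin $\gamma>0$ yields $C_2e^{-\gamma/\beta}$; the points with a non-unique argmax are handled by Lemma~\ref{app:lemma:proba_multi_max}, contributing only $\mathcal{O}(1/N_{\min})$, which is absorbed into (I).

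\textbf{Assembly and main obstacle.} Combining the three bounds by the triangle inequality, taking $\max_k$, and then the full expectation over $(D_N,\mathcal B,\mathbf Z)$ gives the stated inequality, with $C_1$ depending on $K,\pi_{\min}$ and $C_2$ on $K,\gamma$. I expect term (II) to be the main obstacle: Lemma~\ref{app:lemma:softmax} delivers a \emph{sample-dependent} exponent $e^{-\gamma(x)/\beta}$, and upgrading this to a single rate $e^{-\gamma/\beta}$ requires a genuine margin guarantee on the scores $\ell^s_j(\cdot;\bar\lambda)$ so that $\gamma:=\operatorname{ess\,inf}$ of the margin is strictly positive. Extra care is needed because the scores depend on the privatized $\bar\pi_s$ and on the random output $\bar\lambda$, so the margin and the tie-set must be controlled uniformly over this randomness. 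The secondary difficulty is purely bookkeeping: handling the random group sizes so that the conditional $1/\sqrt{N_{\min}}$ bounds integrate cleanly to the $1/\sqrt N$ rate.
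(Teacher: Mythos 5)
Your proposal is correct and follows essentially the same route as the paper's proof: the same three-way decomposition (statistical error controlled by $\hat A_k$ via DKW as in Lemma~\ref{app:lemma:empirical_bound_fixed_pk}, softmax--argmax smoothing error via Lemma~\ref{app:lemma:softmax} with ties absorbed through Lemma~\ref{app:lemma:proba_multi_max}, and optimization error via the gradient identity $\partial_{\lambda^{(l)}_k}\hat H_\beta=\rho+(2l-3)\hat D_k^\beta$ combined with Lemma~\ref{app:lemma:bound_dp}, then integrated over the binomial group sizes with Lemma~\ref{app:lemma:binom}), merely traversed from the unfairness towards the gradient rather than the other way round. Even the obstacle you flag -- upgrading the sample-dependent margin $\gamma(x)$ to a uniform $\gamma>0$ -- is shared by the paper's own proof, which likewise posits constants $\gamma^s>0$ from Lemma~\ref{app:lemma:softmax} without further justification.
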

        \begin{proof}
                We begin by recalling that the decision rule $\hat{g}_\rho$ is defined for all $(x, s) \in \mathcal{X} \times \mathcal{S}$ by
                \begin{equation*}
                    \hat{g}_{\rho}(x, s) = \underset{k \in [K]}{\arg\max} \left\{\bar{\pi}_s \bar{p}_k(x, s) - s(\bar{\lambda}^{(1)}_k - \bar{\lambda}^{(2)}_k) \right\},
                \end{equation*}
                where $(\bar{\lambda}^{(1)}, \bar{\lambda}^{(2)})$ are the parameters obtained by running DP-SGD on $\hat{H}_{\beta}$.
                
                The strategy of the proof is to show that the calibration of the Lagrange parameters leads directly to the announced upper bound on the unfairness.
                
                
                For each $k\in[K]$ and $l\in\{1,2\}$, set
                \begin{equation*}
                    \bar\eta^{(l)}_{k,\beta} :=\nabla_{\lambda^{(l)}_k}\hat{H}_\beta\left(\bar\lambda^{(1)}, \bar\lambda^{(2)}\right).
                \end{equation*}
                
                A direct differentiation gives
                \begin{align*}
                    \nabla_{\lambda^{(l)}_k}\hat{H}_\beta\left(\bar\lambda^{(1)},\bar\lambda^{(2)}\right)
                    &=\sum_{s\in\mathcal S}\hat{\mathbb{E}}_{X\mid S=s}\left[\nabla_{\lambda^{(l)}_k}\,\mathrm{LSE}_\beta\!\left(\ell^s(X;\bar\lambda)\right)\right]+\rho\\
                    &=(2l-3)\sum_{s\in\mathcal S} s \hat{\mathbb{E}}_{X\mid S=s}\!\left[\mathrm{softmax}_\beta \left(\ell^s(X;\bar\lambda)\right)_k\right]+\rho,
                \end{align*}
                where
                \begin{equation*}
                    \mathrm{softmax}_\beta \left(\ell^s(X;\lambda)\right)_k := \frac{\exp\{\ell^s_k(X;\lambda)/\beta\}}{\sum_{j=1}^K \exp\{\ell^s_j(X;\lambda)/\beta\}}.
                \end{equation*}
                
                Decomposing according to whether class $k$ is the unique maximizer or ties with others,
                \begin{multline*}
                    \bar\eta^{(l)}_{k,\beta}=(2l-3)\sum_{s\in\mathcal S}s\Biggl(\hat{\mathbb{E}}_{X\mid S=s}\left[\mathrm{softmax}_\beta\bigl(\ell^s(X;\bar\lambda)\bigr)_k \mathds 1\left(\forall j\neq k: \ell^s_k(X; \bar{\lambda})>\ell^s_j(X; \bar{\lambda})\right)\right] \\
                    + \hat{\mathbb{E}}_{X\mid S=s}\left[\mathrm{softmax}_\beta\bigl(\ell^s(X;\bar\lambda)\bigr)_k \mathds 1\left(\forall j \neq k: \ell^s_k(X; \bar{\lambda}) \geq \ell^s_j(X; \bar{\lambda}),\ \exists j\neq k: \ell^s_k(X; \bar{\lambda})=\ell^s_j(X; \bar{\lambda})\right)\right]\Biggr)+\rho .
                \end{multline*}
                
                By Lemma~\ref{app:lemma:softmax}, there exist constants $C_2^s>0$ (depending on $K$ and $s$), $\gamma^s>0$, and numbers $u_k^s\in[0,1]$ such that
                \begin{multline*}
                    \bar\eta^{(l)}_{k,\beta}\leq (2l-3)\sum_{s\in\mathcal S}s\Biggl[\hat{\mathbb{P}}_{X\mid S=s}\left(\forall j\neq k: \ell^s_k(X; \bar{\lambda})>\ell^s_j(X; \bar{\lambda})\right)\\
                    + u_k^s \hat{\mathbb{P}}_{X\mid S=s}\left(\forall j \neq k:\ \ell^s_k(X; \bar{\lambda}) \geq \ell^s_j(X; \bar{\lambda}), \exists j\neq k: \ell^s_k(X; \bar{\lambda}) = \ell^s_j(X; \bar{\lambda})\right) + C_2^s e^{-\gamma^s/\beta}\Biggr]+\rho .
                \end{multline*}

                Since the event $\{\forall j\neq k: \ell^s_k(X; \bar{\lambda}) \geq \ell^s_j(X; \bar{\lambda}), \exists j \neq k: \ell^s_k(X; \bar{\lambda}) = \ell^s_j(X; \bar{\lambda})\}$ is contained in $\{\exists j \neq k: \ell^s_k(X; \bar{\lambda}) = \ell^s_j(X; \bar{\lambda})\}$, we may apply
                Lemma~\ref{app:lemma:proba_multi_max}. It follows that the second probability on the
                right-hand side is bounded almost surely with respect to $D_N$ by
                $\tfrac{K-1}{N_s}$, and therefore by $\tfrac{K-1}{N_{\min}}$.
                Hence, for each $s\in\mathcal S$,
                \begin{equation*}
                    0 \leq \hat{\mathbb{P}}_{X\mid S=s}\left(\forall j \neq k: \ell^s_k(X; \bar{\lambda}) \geq \ell^s_j(X; \bar{\lambda}), \exists j \neq k: \ell^s_k(X; \bar{\lambda}) = \ell^s_j(X; \bar{\lambda})\right) \leq \frac{K-1}{N_{\min}}.
                \end{equation*}

                Consequently, for each $s\in\mathcal S$,
                \begin{equation*}
                    0 \leq u_k^s \hat{\mathbb{P}}_{X\mid S=s}\left(\forall j\neq k: \ell^s_k(X; \bar{\lambda}) \geq \ell^s_j(X; \bar{\lambda}), \exists j\neq k: \ell^s_k(X; \bar{\lambda})=\ell^s_j(X; \bar{\lambda})\right) \leq \frac{K-1}{N_{\min}}.
                \end{equation*}
                
                Taking the difference with the signs $s\in\{-1,1\}$ yields the symmetric bound
                \begin{equation*}
                    -\frac{K-1}{N_{\min}} \leq \sum_{s\in\mathcal S} s u_k^s \hat{\mathbb{P}}_{X\mid S=s}\left(\forall j\neq k: \ell^s_k(X; \bar{\lambda}) \geq \ell^s_j(X; \bar{\lambda}), \exists j \neq k: \ell^s_k(X; \bar{\lambda})=\ell^s_j(X; \bar{\lambda})\right) \leq \frac{K-1}{N_{\min}}.
                \end{equation*}

                Hence
                \begin{equation*}
                    \bar{\eta}^{(l)}_{k, \beta} \leq (2l-3) \sum_{s \in \mathcal{S}} s \hat{\mathbb{P}}_{X \mid S=s} \Bigl(\forall j \neq k : \ell^s_k(X; \bar{\lambda}) > \ell^s_j(X; \bar{\lambda}) \Bigr)
                    + \frac{K-1}{N_{\min}} + (2l-3) \sum_{s \in \mathcal{S}}s C_2^s e^{-\frac{\gamma^s}{\beta}} + \rho.
                \end{equation*}
                
                Specializing to $l=1$ and $l=2$, this yields the inequalities
                \begin{align*}
                    \bar\eta^{(1)}_{k,\beta} &\leq -\sum_{s \in \mathcal{S}} s \hat{\mathbb{P}}_{X \mid S=s}\Bigl(\forall j \neq k : \ell^s_k(X; \bar{\lambda}) > \ell^s_j(X; \bar{\lambda})\Bigr)  + \frac{K-1}{N_{\min}} - \sum_{s \in \mathcal{S}}s C_2^s e^{-\frac{\gamma^s}{\beta}} + \rho, \\
                    \bar\eta^{(2)}_{k,\beta} &\leq \sum_{s \in \mathcal{S}} s \hat{\mathbb{P}}_{X \mid S=s}\Bigl(\forall j \neq k : \ell^s_k(X; \bar{\lambda}) > \ell^s_j(X; \bar{\lambda})\Bigr) + \frac{K-1}{N_{\min}} + \sum_{s \in \mathcal{S}}s C_2^s e^{-\frac{\gamma^s}{\beta}} + \rho.
                \end{align*}
    
                Therefore
                \begin{align*}
                     \sum_{s \in \mathcal{S}} s \hat{\mathbb{P}}_{X \mid S=s}\Bigl(\forall j \neq k : \ell^s_k(X; \bar{\lambda}) > \ell^s_j(X; \bar{\lambda})\Bigr) &\leq -\bar\eta^{(1)}_{k,\beta}  + \frac{K-1}{N_{\min}} - \sum_{s \in \mathcal{S}}s C_2^s e^{-\frac{\gamma^s}{\beta}} + \rho, \\
                    \sum_{s \in \mathcal{S}} s \hat{\mathbb{P}}_{X \mid S=s}\Bigl(\forall j \neq k : \ell^s_k(X; \bar{\lambda}) > \ell^s_j(X; \bar{\lambda})\Bigr) &\geq \bar\eta^{(2)}_{k,\beta} - \frac{K-1}{N_{\min}} - \sum_{s \in \mathcal{S}}s C_2^s e^{-\frac{\gamma^s}{\beta}} - \rho.
                \end{align*}
                
                Combining both gives the interval bound
                \begin{equation}\label{app:eq:bound_empi_prob}
                    \Biggl\lvert \sum_{s \in \mathcal{S}} s \hat{\mathbb{P}}_{X \mid S=s} \Bigl(\forall j \neq k : \ell^s_k(X; \bar{\lambda}) > \ell^s_j(X; \bar{\lambda})\Bigr) \Biggr\rvert 
                    \leq \max\bigl\{|\bar\eta^{(1)}_{k,\beta}|, |\bar\eta^{(2)}_{k,\beta}|\bigr\}  + \frac{K-1}{N_{\min}} + 2C_2 e^{-\frac{\gamma}{\beta}} + \rho.
                \end{equation}
                where $C_2 = \max\{C_2^1, C_2^{-1}\}$ and $\gamma = \min\{\gamma^1, \gamma^{-1}\}$.            
                As in \cite{Denis_Elie_Hebiri_Hu_2024}, we turn to the unfairness of $\hat{g}_\rho$:
                \begin{equation*}
                    \left\lvert \sum_{s \in \mathcal{S}} s \mathbb{P}_{X \mid S=s}\bigl(\hat{g}_\rho(X,S) = k \bigr) \right\rvert 
                    = \left\lvert \sum_{s \in \mathcal{S}} s \mathbb{P}_{X \mid S=s}\Bigl(\forall j \neq k : \ell^s_k(X; \bar{\lambda}) > \ell^s_j(X; \bar{\lambda})\Bigr)\right\rvert \\
                \end{equation*}
                By using the triangle inequality, and the random variable $\hat{A}_k$ introduce in Lemma~\ref{app:lemma:empirical_bound_fixed_pk},
                \begin{equation*}
                    \left\lvert \sum_{s \in \mathcal{S}} s \mathbb{P}_{X \mid S=s}\bigl(\hat{g}_\rho(X,S) = k \bigr) \right\rvert \leq \hat{A}_k + \left\lvert \sum_{s \in \mathcal{S}} s \hat{\mathbb{P}}_{X \mid S=s}\Bigl(\forall j \neq k : \ell^s_k(X; \bar{\lambda}) > \ell^s_j(X; \bar{\lambda})\Bigr)\right\rvert
                \end{equation*}
                With \eqref{app:eq:bound_empi_prob}, we get
                \begin{equation*}
                    \left\lvert \sum_{s \in \mathcal{S}} s \mathbb{P}_{X \mid S=s}\bigl(\hat{g}_\rho(X,S) = k \bigr) \right\rvert \leq \hat{A}_k+ \rho + \max_{l \in \{1,2\}}|\hat\eta^{(l)}_{k,\beta}|  + \tfrac{K-1}{N_{\min}} + 2C_2 e^{-\frac{\gamma}{\beta}}.
                \end{equation*}
                Maximizing over $k \in [K]$ gives
                \begin{multline*}
                    \mathcal{U}(\hat{g}_\rho) := \max_{k \in [K]} \left\lvert \sum_{s \in \mathcal{S}} s \mathbb{P}_{X \mid S=s}\bigl(\hat{g}_\rho(X,S) = k \bigr) \right\rvert \leq \max_{k \in [K]} \hat{A}_k + \rho + \lVert \nabla \hat{H}_{\beta}(\bar{\lambda}^{(1)}, \bar{\lambda}^{(2)}) \rVert_\infty  + \tfrac{K-1}{N_{\min}} + 2C_2 e^{-\frac{\gamma}{\beta}}.
                \end{multline*}
    
                Applying Lemma~\ref{app:lemma:empirical_bound_fixed_pk} we deduce that conditional on $D_N$,
                \begin{align*}
                    \mathbb{E}\left[\mathcal{U}(\hat{g}_\rho) \; \middle\vert \; D_N \right] &= \mathbb{E}\left[\mathcal{U}(\hat{g}_\rho)\mathds{1}(N_{\min} \geq 1) \; \middle\vert \; D_N \right] + \mathbb{E}\left[\mathcal{U}(\hat{g}_\rho)\mathds{1}(N_{\min} = 0) \; \middle\vert \; D_N \right] \\
                    &\leq \rho + (\tfrac{C}{\sqrt{N_{\min}}} + \tfrac{K-1}{N_{\min}} )\mathds{1}(N_{\min} \geq 1) + 2 C_2 e^{-\frac{\gamma}{\beta}} \\ &+ \mathbb{E}\left[\lVert \nabla \hat{H}_{\beta}(\bar{\lambda}^{(1)}, \bar{\lambda}^{(2)}) \rVert_\infty \; \middle\vert \; D_N \right] + \mathbb{E}\left[\mathcal{U}(\hat{g}_\rho)\mathds{1}(N_{\min} = 0) \; \middle\vert \; D_N \right]
                \end{align*}
    
                On the event $\{N_{\min} \geq 1\}$, $\tfrac{1}{\sqrt{N_{\min}}} \geq \tfrac{1}{N_{\min}}$, thus there exists a non negative constant $c_1$ such that
                \begin{multline*}
                    \mathbb{E}\left[\mathcal{U}(\hat{g}_\rho) \; \middle\vert \; D_N \right] \leq \rho + \tfrac{c_1}{\sqrt{N_{\min}}}\mathds{1}(N_{\min} \geq 1) + 2 C_2 e^{-\frac{\gamma}{\beta}} \\+ \mathbb{E}\left[\lVert \nabla \hat{H}_{\beta}(\bar{\lambda}^{(1)}, \bar{\lambda}^{(2)}) \rVert_\infty \; \middle\vert \; D_N \right] + \mathbb{E}\left[\mathcal{U}(\hat{g}_\rho)\mathds{1}(N_{\min} = 0) \; \middle\vert \; D_N \right]
                \end{multline*}
                
                Now since all the terms of the r.h.s. unless $\lVert \nabla \hat{H}_{\beta}(\bar{\lambda}^{(1)}, \bar{\lambda}^{(2)}) \rVert_\infty$ and $\mathcal{U}(\hat{g}_\rho)$ are independent from the minibatch sampling and the Gaussian noise vectors of DP-SGD, Lemma~\ref{app:lemma:bound_dp} gives
                \begin{multline*}
                    \mathbb{E}_{\mathcal{B},\mathbf{Z}}\left[\mathbb{E}\left[\mathcal{U}(\hat{g}_\rho) \; \middle\vert \; D_N \right]\right] \leq \rho + \tfrac{c_1 \mathds{1}(N_{\min} \geq 1)}{\sqrt{N_{\min}}} +  \mathbb{E}_{\mathcal{B},\mathbf{Z}}\left[\mathbb{E}\left[\mathcal{U}(\hat{g}_\rho)\mathds{1}(N_{\min} = 0) \; \middle\vert \; D_N \right]\right] \\ + 2 C_2 e^{-\frac{\gamma}{\beta}} + 4\sqrt{\tfrac{C_\lambda \log T}{\beta \sqrt{T}}(2\sqrt{2} + \rho \sqrt{2K} + \tfrac{2K\sigma_{\textsc{sgd}}}{b})}.
                \end{multline*}

                Taking the expectation over $D_N$ and by definition of the conditional expectation there exists a non negative constant $C_K$ that depend on K such that
                \begin{equation*}
                    \mathbb{E}_{D_N, \mathcal{B},\mathbf{Z}}\left[\mathcal{U}(\hat{g}_\rho) \right] \leq \rho + \mathbb{E}\left[\tfrac{c_1 \mathds{1}(N_{\min} \geq 1)}{\sqrt{N_{\min}}}\right] + C_K \mathbb{P}(N_{\min} = 0) + 2 C_2 e^{-\frac{\gamma}{\beta}} + 4\sqrt{\tfrac{C_\lambda \log T}{\beta \sqrt{T}}(2\sqrt{2} + \rho \sqrt{2K} + \tfrac{2K\sigma_{\textsc{sgd}}}{b})}.
                \end{equation*}
                
                Finally, since $\mathbb{P}(N_{\min} = 0) \leq (1-\pi_{-1})^N + (1-\pi_{1})^N$, there exists a constant $C_1$ that depend on $\pi_{\min}$ such that $\mathbb{P}(N_{\min} = 0) \leq \tfrac{C_1}{\sqrt{N}}$, and by applying Lemma~\ref{app:lemma:binom} with Jensen inequality, we obtain
                \begin{equation*}
                    \mathbb{E}_{D_N,\mathcal{B},\mathbf{Z}}\bigl[\mathcal{U}(\hat{g}_\rho) \bigr] \leq \rho + \tfrac{C_1}{\sqrt{N}} + 2 C_2 e^{-\frac{\gamma}{\beta}} + 4\sqrt{\tfrac{C_\lambda \log T}{\beta \sqrt{T}}(2\sqrt{2} + \rho \sqrt{2K} + \tfrac{2K\sigma_{\textsc{sgd}}}{b})}.
                \end{equation*}
                This concludes the proof.
            \end{proof}     
        \subsection{Proof of Corollary 5.1}
            \begin{corollary}\label{app:cor:fairness}
                Let $\rho \ge 0$, and fix $T=N^2$ and $\beta = 2\gamma/\log N$. Then there exists a constant $C_*$, depending on $K$ ,$\pi_{\min}$, $C_\lambda$ ,$ b$ ,$\rho$, and $\sigma_{\textsc{sgd}}$ such that
                \begin{equation*}
                    \mathbb{E}\!\left[\mathcal{U}(\hat{g}_\rho)\right] \leq \rho + C_* \frac{\log N}{\sqrt{N}}.
                \end{equation*}
                As before, the expectation is taken jointly over the sampling of $D_N$ and the randomness of the algorithm, including mini-batch selection and Gaussian noise.
            \end{corollary}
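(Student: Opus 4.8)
The plan is to derive the corollary by substituting the prescribed parameter choices $T = N^2$ and $\beta = 2\gamma/\log N$ into the bound of Theorem~\ref{thm:fairness} and verifying that each of the three vanishing terms collapses to order $\log(N)/\sqrt{N}$. Throughout I assume $N$ large enough that $\log N \ge 1$, so that $1/\sqrt{N} \le \log(N)/\sqrt{N}$; this immediately absorbs the purely statistical term $C_1/\sqrt{N}$ into the target rate.

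The first genuine computation concerns the smoothing term $C_2 e^{-\gamma/\beta}$. The choice $\beta = 2\gamma/\log N$ is calibrated precisely so that $\gamma/\beta = \tfrac{1}{2}\log N$, whence $C_2 e^{-\gamma/\beta} = C_2 e^{-\tfrac12 \log N} = C_2/\sqrt{N} \le C_2\,\log(N)/\sqrt{N}$. This is the one delicate point of the proof: $\beta$ must shrink fast enough that the exponential smoothing error reaches the $1/\sqrt{N}$ statistical floor, yet slowly enough (only logarithmically in $N$) that the factor $1/\beta$ entering the optimization term costs at most a single logarithmic factor. A $\beta$ decaying polynomially in $N$ would either leave a dominant optimization error or fail to drive the smoothing term down to the statistical rate.

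Next I would treat the optimization/privacy term. With $T = N^2$ one has $\log T = 2\log N$ and $\sqrt{T} = N$, and using $1/\beta = \log(N)/(2\gamma)$ the argument of the square root becomes
\[
\frac{\sqrt{2K}\,C_\lambda\,\log T}{\beta\,\sqrt{T}}\Bigl(2\sqrt{2} + \rho\sqrt{2K} + \tfrac{\sigma_{\rm{sgd}}\sqrt{2K}}{b}\Bigr)
= \frac{\sqrt{2K}\,C_\lambda\,(\log N)^2}{\gamma\,N}\Bigl(2\sqrt{2} + \rho\sqrt{2K} + \tfrac{\sigma_{\rm{sgd}}\sqrt{2K}}{b}\Bigr).
\]
Pulling $(\log N)^2$ and $N$ out of the root converts this whole term into $C_3\,\log(N)/\sqrt{N}$, where $C_3 = 4\sqrt{\tfrac{\sqrt{2K}\,C_\lambda}{\gamma}\bigl(2\sqrt{2} + \rho\sqrt{2K} + \sigma_{\rm{sgd}}\sqrt{2K}/b\bigr)}$ gathers the constants depending on $K$, $\pi_{\min}$ (through $\gamma$), $C_\lambda$, $b$, $\rho$, and $\sigma_{\rm{sgd}}$. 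In particular $C_3 \in \mathcal{O}(\sqrt{\sigma_{\rm{sgd}}})$, which is the dependence highlighted in the discussion following the corollary.

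Finally I would collect the three contributions. Each is bounded by a constant multiple of $\log(N)/\sqrt{N}$, so setting $C_* = C_1 + C_2 + C_3$ yields $\mathbb{E}[\mathcal{U}(\hat{g}_\rho)] \le \rho + C_*\,\log(N)/\sqrt{N}$, which is exactly the claim. I do not expect any real obstacle beyond this bookkeeping: the entire substance of the result is carried by Theorem~\ref{thm:fairness}, and the corollary merely exhibits the parameter regime $T=N^2$, $\beta=2\gamma/\log N$ in which the statistical, smoothing, and private-optimization errors are simultaneously balanced at the advertised rate.
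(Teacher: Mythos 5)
Your proposal is correct and follows exactly the same route as the paper's own proof, which simply substitutes $T=N^2$ and $\beta = 2\gamma/\log N$ into Theorem~\ref{thm:fairness}, notes that the exponential term becomes negligible for $N \ge e$, and aggregates all constants into $C_*$. Your write-up merely makes explicit the computations ($\gamma/\beta = \tfrac12\log N$, $\log T = 2\log N$, $\sqrt{T}=N$) that the paper leaves implicit, so there is nothing to add.
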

            \begin{proof}
                This result follows immediately from Theorem~\ref{app:theo_fair} by substituting the specific choices $T = N^2$ and $\beta = 2\gamma / \log N$. Under these parameters, the dominant terms in Theorem~\ref{app:theo_fair} scale as $\mathcal{O}(\tfrac{\log N}{\sqrt{N}})$, while the exponential term $e^{-\gamma / \beta}$ becomes negligible when $N \geq e$. All constants are aggregated into $C_*$, which depends on $K$, $\pi_{\min}$, $C_\lambda$, $b$, $\rho$, and $\sigma_{\textsc{sgd}}$.
            \end{proof}
\newpage
\section{ADDITIONAL EXPERIMENTAL DETAILS AND RESULTS}\label{app:exp}
    \subsection{Synthetic Data Generation}
        We describe here the procedure used to generate the synthetic dataset employed in Section~\ref{sec:expe}.  
        
        We define the synthetic data as $(X, S, Y)$. For all $k \in [K]$, we assume a uniform class distribution: $P(Y = k) = 1/K$.  
        Given $Y = k$, the features $X \in \mathbb{R}^d$ follow a Gaussian mixture model with $m$ components:
        
        \begin{equation*}
            (X \mid Y = k) \sim \frac{1}{m} \sum_{i=1}^m \mathcal{N}_d(c^k + \mu^k_i, I_d)
        \end{equation*}
        
        where $c^k \sim \mathcal{U}_d(-1,1)$ and $\mu^k_1, \ldots, \mu^k_m \sim \mathcal{N}_d(0, I_d)$.  
        The sensitive attribute $S \in \{-1, +1\}$ is generated via a Bernoulli contamination process that depends on the class label $k$:
        
        \begin{equation*}
            (S \mid Y = k) \sim 
            \begin{cases}
                2\mathcal{B}(p) - 1 & \text{if } k \leq \lfloor K/2 \rfloor \\
                2\mathcal{B}(1 - p) - 1 & \text{if } k > \lfloor K/2 \rfloor
            \end{cases}
        \end{equation*}
    \subsection{Experimental Setup}
        This section details the experimental configuration used to produce the results in Section~\ref{sec:expe} and Appendix~\ref{app:exp}. We describe privacy parameters, and implementation details, and we summarize the key hyperparameters per dataset in Table~\ref{tab:exp-setup}.

        \paragraph{Privacy Parameters} 
        The standard deviations $\sigma_{\mathrm{\textsc{SGD}}}$ used to achieve a target $(\varepsilon,\delta)$-DP guarantee vary across datasets, as they depend on both the sample size $N$ and the number of optimization iterations $T$, whereas $\sigma_\pi$ is fixed. For each experiment, we calibrate these noise levels individually to satisfy $(\varepsilon, \delta = 10^{-5})$ using the Rényi DP accountant from the \texttt{dp-accounting} library \citep{google_dp_accounting}. The resulting $\varepsilon$ values reported in the figures correspond to the exact privacy guarantees derived from this accounting procedure.

        \paragraph{Implementation and Hardware}
        All experiments are implemented in \texttt{Python} using \texttt{scikit-learn}, \texttt{PyTorch}, and \texttt{joblib} for parallelization. Experiments were run on a \textbf{MacBook Pro 13-inch (M1, 2020), 8\,GB RAM} without GPU acceleration.  
        Each configuration is repeated $15$ times with different random seeds, and results are reported as mean $\pm$ standard deviation. For all real datasets, baseline results are retrieved directly from the public repository of \citet{lowy2023stochastic} to ensure faithful comparison.

        Table~\ref{tab:exp-setup} summary of the experiments for each dataset.
        \begin{table}[ht]
            \centering
            \renewcommand{\arraystretch}{1.2}
            \footnotesize
            \begin{tabular}{lcccc}
                \toprule
                \textbf{Parameter} & \textbf{Synthetic} & \textbf{Adult} & \textbf{Credit Card} & \textbf{Parkinson's} \\
                \midrule
                Model type & Logistic Regression & Logistic Regression & Logistic Regression & Logistic Regression \\
                Model regularization & $\ell_2$, $C=1.0$ & $\ell_2$, $C=1.0$ & $\ell_2$, $C=1.0$ & $\ell_2$, $C=1.0$ \\
                DP2DP iter & $100$ & $100$ & $1000$ & $1000$ \\
                Batch size & $128$ & $128$ & $128$ & $128$ \\
                Parameter $\beta$ & $10^{-5}$ & $10^{-5}$ & $10^{-5}$ & $10^{-5}$ \\
                Fairness $\rho_{\mathrm{fair}}$ & $[0.0, 0.2]$ & $[0.0, 0.2]$ & $[0.0, 0.025]$ & $[0.0, 0.08]$ \\
                Train/Pool/Test split & 60/20/20 & 55/20/25 & 55/20/25 & 55/20/25 \\ 
                \bottomrule
            \end{tabular}
            \caption{Hyperparameter configuration per dataset.}
            \label{tab:exp-setup}
        \end{table}
        
    \subsection{Additional Experimental Results}
        In this section, we present complementary empirical results to those reported in Section~\ref{sec:expe}, including extended comparisons on real-world benchmarks. 
        
        \subsubsection{Results on Default-CCC dataset}
            Figure~\ref{fig:diagram_default} reports the trade-off between demographic parity violation and misclassification error achieved by our method compared to existing approaches, under $(\varepsilon,\delta)$-differential privacy with $\delta = 10^{-5}$. We observe that \textsc{DP2DP} consistently achieves improved fairness-accuracy trade-offs, especially in the low-privacy regime ($\varepsilon = 0.5$), while remaining competitive in terms of classification performance.

            \begin{figure}[ht]
                \begin{subfigure}[b]{0.49\columnwidth}
                    \centering
                    \includegraphics[width=1\textwidth]{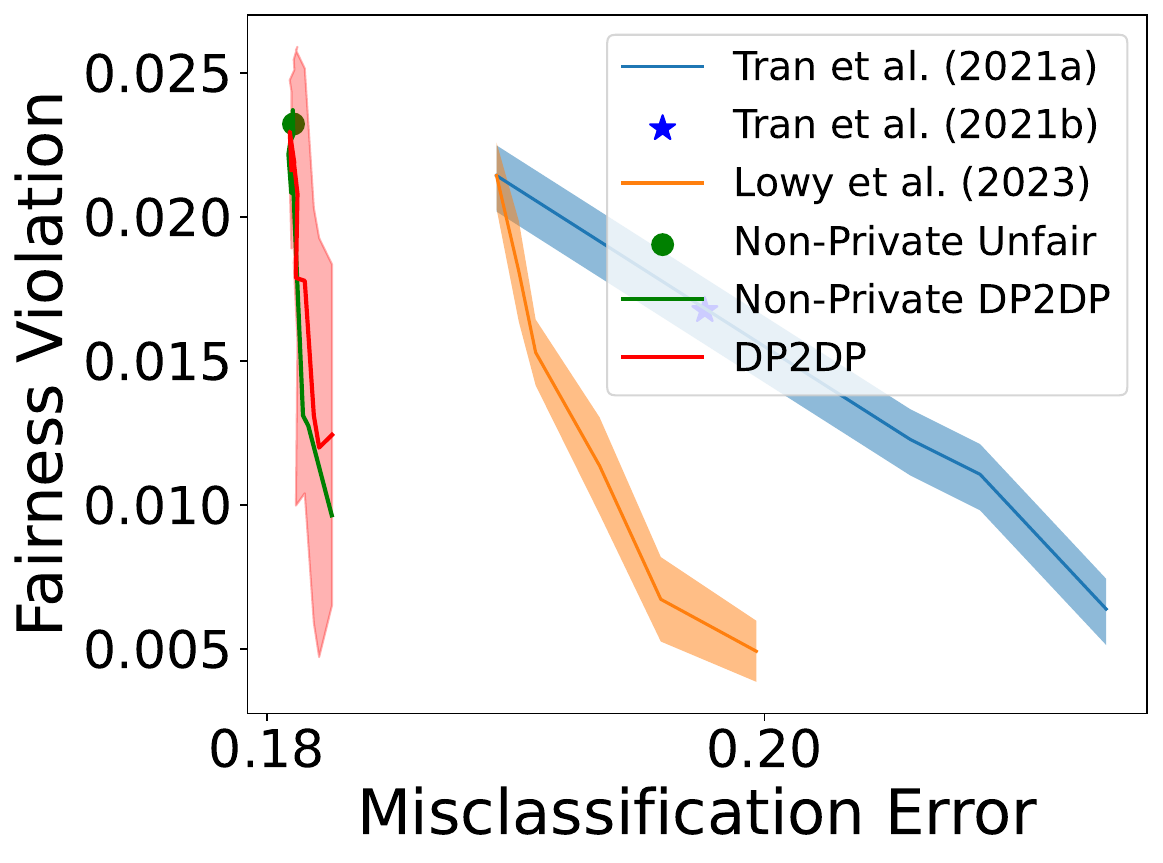}
                    \label{fig:diag_credit_eps05}
                \end{subfigure}
                \begin{subfigure}[b]{0.49\columnwidth}
                    \centering
                    \includegraphics[width=1\textwidth]{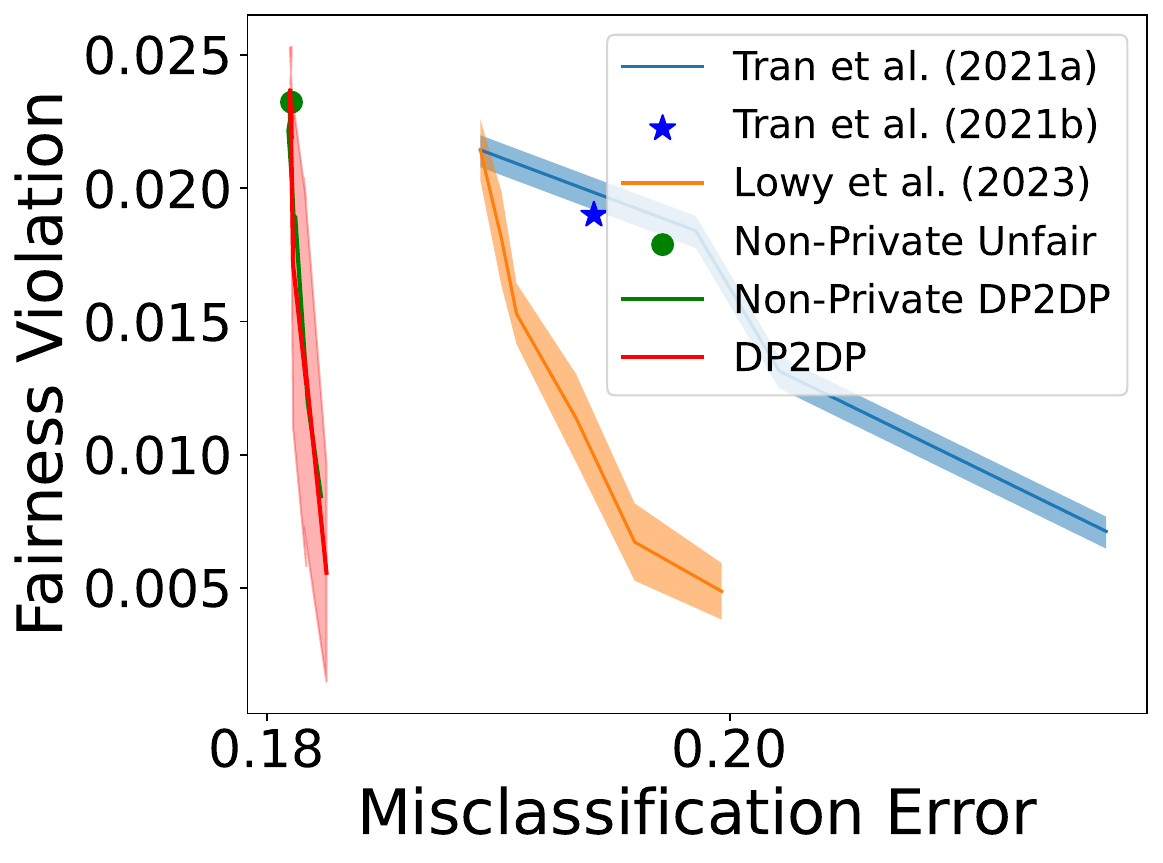}
                    \label{fig:diag_credit_eps1}
                \end{subfigure} 
                 \vspace{-15pt} 
                \caption{Comparison of our method (in terms of fairness/misclassification) with previous work on the Default-CCC dataset under $(\varepsilon, \delta)$-differential privacy with $\delta = 10^{-5}$. The left panel shows  $\varepsilon = 0.5$, and the right panel for $\varepsilon = 1.0$.}
                \label{fig:diagram_default}
            \end{figure}
        \subsubsection{Results on Parkinson's dataset}
            We report in Figure~\ref{fig:diagram_parkinson} additional results on the \textsc{Parkinson} dataset under the same privacy settings. Similar trends are observed: our method outperforms prior baselines in terms of fairness while maintaining accuracy, particularly in the low-privacy regime. These results further confirm the robustness of our approach across heterogeneous datasets with different statistical structures.

            \begin{figure}[ht]
                \centering
                \includegraphics[width=0.5\textwidth]{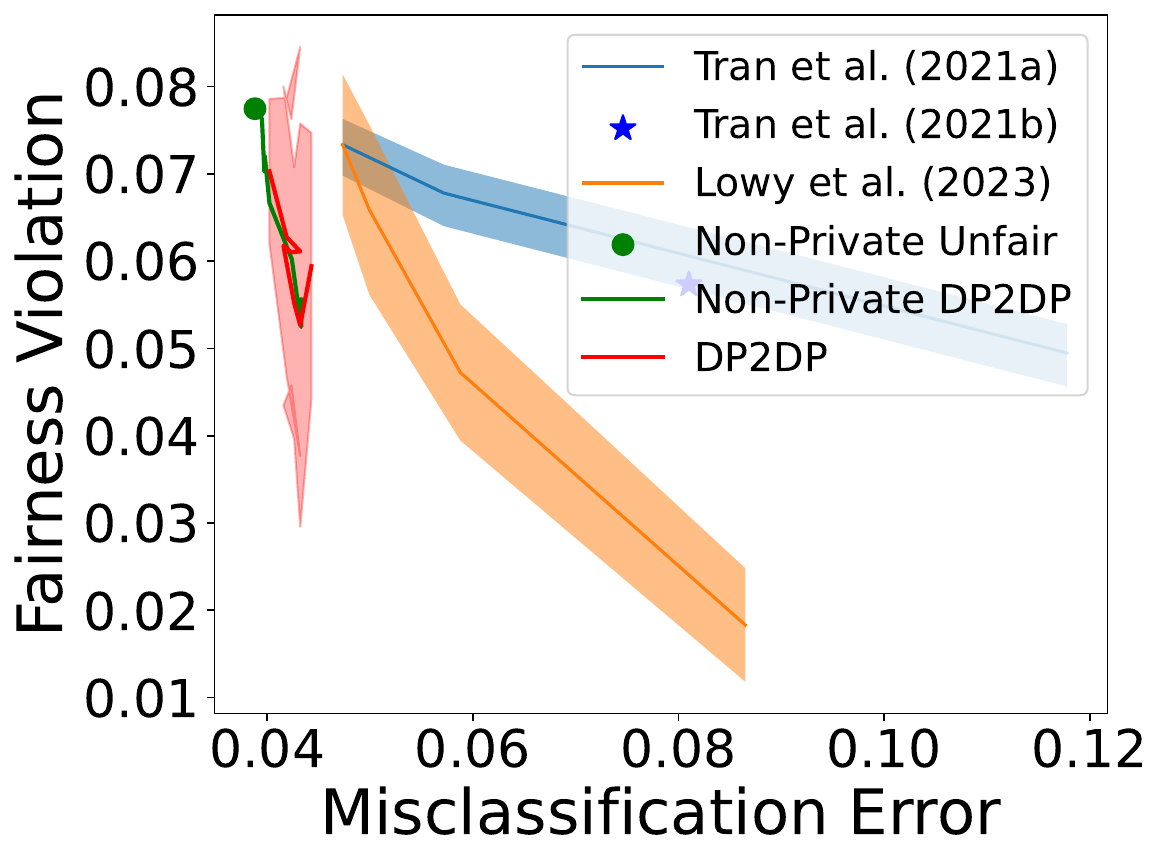}
                \caption{Comparison of our method (in terms of fairness/misclassification) with previous work on the Parkinson dataset under $(\varepsilon, \delta)$-differential privacy with $\delta = 10^{-5}$.} 
                \label{fig:diagram_parkinson}
            \end{figure}

\end{document}